\def\cref@section@alias{appendix}
\def\cref@subsection@alias{appendix}
\def\cref@subsubsection@alias{appendix}
\setlist[enumerate]{leftmargin=.2in}
\setlist[itemize]{leftmargin=.2in}
\def\E{{\mathbb E}}
\def\P{{\mathbb P}}
\def\R{{\mathbb R}}
\def\N{{\mathbb N}}
\newcommand{\mfe}{\mathfrak{e}}
\newcommand{\mfs}{\mathfrak{s}}
\newcommand{\mfF}{\mathfrak{F}}
\DeclareMathOperator*{\argmin}{arg\,min}
\DeclareMathOperator{\err}{err}
\DeclareMathOperator{\poly}{poly}
\DeclareMathOperator{\unif}{{unif}}
\DeclareMathOperator{\hard}{{hard}}
\DeclareMathOperator{\easy}{{easy}}
\DeclareMathOperator{\flex}{{flexible}}
\DeclareMathOperator{\stan}{{standard}}
\DeclareMathOperator{\chow}{{Chow}}
\DeclareMathOperator{\1}{{\mathds{1}}}
\DeclareMathOperator{\val}{{val}}
\newcommand{\trn}{\top}
\newcommand{\polylog}{\mathrm{polylog}}
\renewcommand{\epsilon}{\varepsilon}
\newtheorem{theorem}{Theorem}
\newtheorem{lemma}{Lemma}
\newtheorem{definition}{Definition}
\newtheorem{assumption}{Assumption}
\newtheorem{proposition}{Proposition}
\newtheorem{remark}{Remark}
\newtheorem{example}{Example}
\newcommand{\algcommentlight}[1]{\textcolor{blue!70!black}{\transparent{0.5}\small{\texttt{\textbf{//\hspace{2pt}#1}}}}}
\DeclarePairedDelimiter{\abs}{\lvert}{\rvert} \DeclarePairedDelimiter{\brk}{[}{]}
\DeclarePairedDelimiter{\crl}{\{}{\}}
\DeclarePairedDelimiter{\prn}{(}{)}
\DeclarePairedDelimiter{\nrm}{\|}{\|}
\DeclarePairedDelimiter{\ang}{\langle}{\rangle}
\DeclarePairedDelimiter{\ceil}{\lceil}{\rceil}
\DeclarePairedDelimiter{\floor}{\lfloor}{\rfloor}
\DeclarePairedDelimiterX{\infdiv}[2]{(}{)}{#1\;\delimsize\|\;#2}
\newcommand{\wt}[1]{\widetilde{#1}}
\newcommand{\wh}[1]{\widehat{#1}}
\newcommand{\wb}[1]{\widebar{#1}}
\def\ddefloop#1{\ifx\ddefloop#1\else\ddef{#1}\expandafter\ddefloop\fi}
\def\ddef#1{\expandafter\def\csname bb#1\endcsname{\ensuremath{\mathbb{#1}}}}
\def\ddefloop#1{\ifx\ddefloop#1\else\ddef{#1}\expandafter\ddefloop\fi}
\def\ddef#1{\expandafter\def\csname b#1\endcsname{\ensuremath{\mathbf{#1}}}}
\def\ddef#1{\expandafter\def\csname sf#1\endcsname{\ensuremath{\mathsf{#1}}}}
\def\ddef#1{\expandafter\def\csname c#1\endcsname{\ensuremath{\mathcal{#1}}}}
\def\ddef#1{\expandafter\def\csname h#1\endcsname{\ensuremath{\widehat{#1}}}}
\def\ddef#1{\expandafter\def\csname hc#1\endcsname{\ensuremath{\widehat{\mathcal{#1}}}}}
\def\ddef#1{\expandafter\def\csname t#1\endcsname{\ensuremath{\widetilde{#1}}}}
\def\ddef#1{\expandafter\def\csname tc#1\endcsname{\ensuremath{\widetilde{\mathcal{#1}}}}}
\def\ddefloop#1{\ifx\ddefloop#1\else\ddef{#1}\expandafter\ddefloop\fi}
\def\ddef#1{\expandafter\def\csname scr#1\endcsname{\ensuremath{\mathscr{#1}}}}
\let\oldparagraph\paragraph
\renewcommand{\paragraph}[1]{\oldparagraph{#1.}}
\renewcommand{\epsilon}{\varepsilon}
\newcommand{\ind}{\mathbbm{1}}    
\newcommand{\eps}{\epsilon}
\newcommand{\ldef}{\vcentcolon=}
\newcommand{\rdef}{=\vcentcolon}
\renewcommand{\bigm}[1]{\ifcsname fenced@\string#1\endcsname
    \expandafter\@firstoftwo
  \else
    \expandafter\@secondoftwo
  \fi
  {\expandafter\amsmath@bigm\csname fenced@\string#1\endcsname}{\amsmath@bigm#1}}
\newcommand{\DeclareFence}[2]{\@namedef{fenced@\string#1}{#2}}
\let\save@mathaccent\mathaccent
\newcommand*\if@single[3]{\setbox0\hbox{${\mathaccent"0362{#1}}^H$}\setbox2\hbox{${\mathaccent"0362{\kern0pt#1}}^H$}\ifdim\ht0=\ht2 #3\else #2\fi
  }
\newcommand*\rel@kern[1]{\kern#1\dimexpr\macc@kerna}
\newcommand*\widebar[1]{\@ifnextchar^{{\wide@bar{#1}{0}}}{\wide@bar{#1}{1}}}
\newcommand*\wide@bar[2]{\if@single{#1}{\wide@bar@{#1}{#2}{1}}{\wide@bar@{#1}{#2}{2}}}
\newcommand*\wide@bar@[3]{\begingroup
  \def\mathaccent##1##2{\let\mathaccent\save@mathaccent
\if#32 \let\macc@nucleus\first@char \fi
\setbox\z@\hbox{$\macc@style{\macc@nucleus}_{}$}\setbox\tw@\hbox{$\macc@style{\macc@nucleus}{}_{}$}\dimen@\wd\tw@
    \advance\dimen@-\wd\z@
\divide\dimen@ 3
    \@tempdima\wd\tw@
    \advance\@tempdima-\scriptspace
\divide\@tempdima 10
    \advance\dimen@-\@tempdima
\ifdim\dimen@>\z@ \dimen@0pt\fi
\rel@kern{0.6}\kern-\dimen@
    \if#31
      \overline{\rel@kern{-0.6}\kern\dimen@\macc@nucleus\rel@kern{0.4}\kern\dimen@}\advance\dimen@0.4\dimexpr\macc@kerna
\let\final@kern#2\ifdim\dimen@<\z@ \let\final@kern1\fi
      \if\final@kern1 \kern-\dimen@\fi
    \else
      \overline{\rel@kern{-0.6}\kern\dimen@#1}\fi
  }\macc@depth\@ne
  \let\math@bgroup\@empty \let\math@egroup\macc@set@skewchar
  \mathsurround\z@ \frozen@everymath{\mathgroup\macc@group\relax}\macc@set@skewchar\relax
  \let\mathaccentV\macc@nested@a
\if#31
    \macc@nested@a\relax111{#1}\else
\def\gobble@till@marker##1\endmarker{}\futurelet\first@char\gobble@till@marker#1\endmarker
    \ifcat\noexpand\first@char A\else
      \def\first@char{}\fi
    \macc@nested@a\relax111{\first@char}\fi
  \endgroup
}
\newcommand{\Reg}{\mathrm{\mathbf{Regret}}}
\newcommand{\AlgLcb}{\mathrm{\mathbf{Alg}}_{\mathsf{lcb}}}
\newcommand{\AlgUcb}{\mathrm{\mathbf{Alg}}_{\mathsf{ucb}}}
\newcommand{\ucb}{\mathsf{ucb}}
\newcommand{\lcb}{\mathsf{lcb}}
\newcommand{\pseud}{\mathrm{Pdim}}
\newcommand{\reg}{\Reg}
\newcommand{\exc}{\mathsf{excess}}
\newcommand{\curly}{\crl}
\newcommand{\paren}{\prn}
\newcommand{\sq}{\brk}
\newcommand{\pdim}{\pseud}
\title{Efficient Active Learning with Abstention}
\date{}
\author{
Yinglun Zhu\\
{\normalsize University of Wisconsin--Madison}\\
{\normalsize\texttt{yinglun@cs.wisc.edu}}
\and
\and
Robert Nowak\\
{\normalsize University of Wisconsin--Madison}\\
{\normalsize\texttt{rdnowak@wisc.edu}}
}
\begin{document}

\maketitle

\begin{abstract}
The goal of active learning is to achieve the same accuracy achievable by passive learning, while using much fewer labels. Exponential savings in terms of label complexity have been proved in very special cases, but fundamental lower bounds show that such improvements are impossible in general. This suggests a need to explore alternative goals for active learning. Learning with abstention is one such alternative.  In this setting, the active learning algorithm may abstain from prediction and incur an error that is marginally smaller than random guessing. We develop the first computationally efficient active learning algorithm with abstention. Our algorithm provably achieves $\mathsf{polylog}(\frac{1}{\varepsilon})$ label complexity, without any low noise conditions. Such performance guarantee reduces the label complexity by an exponential factor, relative to passive learning and active learning that is not allowed to abstain. Furthermore, our algorithm is guaranteed to only abstain on hard examples (where the true label distribution is close to a fair coin), a novel property we term \emph{proper abstention} that also leads to a host of other desirable characteristics (e.g., recovering minimax guarantees in the standard setting, and avoiding the undesirable ``noise-seeking'' behavior often seen in active learning). We also provide novel extensions of our algorithm that achieve \emph{constant} label complexity and deal with model misspecification.
\end{abstract}

\section{Introduction}

Active learning aims at learning an accurate classifier with a small number of labeled data points \citep{settles2009active, hanneke2014theory}. Active learning has become increasingly important in modern application of machine learning, where unlabeled data points are abundant yet the labeling process requires expensive time and effort. Empirical successes of active learning have been observed in many areas \citep{tong2001support, gal2017deep, sener2018active}.
In noise-free or certain low-noise cases (i.e., under Massart noise \citep{massart2006risk}), active learning algorithms with \emph{provable} exponential savings over the passive counterpart have been developed  \citep{balcan2007margin, hanneke2007bound,  dasgupta2009analysis, hsu2010algorithms, dekel2012selective, hanneke2014theory, zhang2014beyond, krishnamurthy2019active, katz2021improved}.
On the other hand, however, not much can be said in the general case. 
In fact, \citet{kaariainen2006active} provides a $\Omega(\frac{1}{\epsilon^2})$ lower bound by reducing active learning to a simple mean estimation problem: It takes $\Omega(\frac{1}{\epsilon^2})$ samples to distinguish $\eta(x) = \frac{1}{2} + \epsilon$ and $\eta(x) = \frac{1}{2} - \epsilon$. Even with the relatively benign Tsybakov noise \citep{tsybakov2004optimal}, \citet{castro2006upper, castro2008minimax} derive a $\Omega( \poly(\frac{1}{\epsilon}))$ lower bound, again, indicating that exponential speedup over passive learning is not possible in general. These fundamental lower bounds lay out statistical barriers to active learning, and suggests considering a refinement of the label complexity goals in active learning \citep{kaariainen2006active}.

Inspecting  these lower bounds, one can see that active learning suffers from classifying hard examples that are close to the decision boundary. However, \emph{do we really require a trained classifier to do well on those hard examples?} In high-risk domains such as medical imaging, it makes more sense for the classifier to abstain from making the decision and leave the problem to a human expert. Such idea is formalized under Chow's error \citep{chow1970optimum}: Whenever the classifier chooses to abstain, a loss that is barely smaller than random guessing, i.e., $\frac{1}{2} - \gamma$, is incurred. 
The parameter $\gamma$ should be thought as a small positive quantity, e.g., $\gamma = 0.01$.
The inclusion of abstention is not only practically interesting, but also provides a statistical refinement of the label complexity goal of active learning: Achieving exponential improvement under Chow's excess error.
When abstention is allowed as an action, \citet{puchkin2021exponential} shows, for the first time, that exponential improvement in label complexity can be achieved by active learning in the general setting.  
However, the approach provided in \citet{puchkin2021exponential} can not be efficiently implemented. 
Their algorithm follows the disagreement-based approach and requires maintaining a version space and checking whether or not an example lies in the region of disagreement.
It is not clear how to generally implement these operations besides enumeration \citep{beygelzimer2010agnostic}.
Moreover, their algorithm relies on an Empirical Risk Minimization (ERM) oracle, which is known to be NP-Hard even for a simple linear hypothesis class \citep{guruswami2009hardness}.

In this paper, we break the computational barrier and design an efficient active learning algorithm with exponential improvement in label complexity relative to conventional passive learning.
The algorithm relies on weighted square loss regression oracle, which can be efficiently implemented in many cases \citep{krishnamurthy2017active, krishnamurthy2019active, foster2018practical, foster2020instance}. The  algorithm  also abstains properly, i.e., abstain only when it is the optimal choice, which allows us to easily translate the guarantees to the \emph{standard} excess error.
Along the way, we propose new noise-seeking noise conditions and show that: ``uncertainty-based'' active learners can be easily trapped, yet our algorithm provably overcome these noise-seeking conditions.
As an extension, we also provide the first algorithm that enjoys \emph{constant} label complexity for a \emph{general} set of regression functions.

\subsection{Problem setting}
\label{sec:setting}

Let $\cX$ denote the input space and $\cY$ denote the label space. We focus on the binary classification problem where $\cY = \curly*{0, 1}$. The joint distribution over $\cX \times \cY$ is denoted as $\cD_{\cX \cY}$. We use $\cD_{\cX}$ to denote the marginal distribution over the input space $\cX$, and use $\cD_{\cY \vert x}$ to denote the conditional distribution of $\cY$ with respect to any $x \in \cX$.
We define $\eta(x) \ldef \P_{y \sim \cD_{\cY \vert x}} (y = 1)$ as the conditional probability of taking the label of $1$.
We consider the standard active learning setup where $(x,y) \sim \cD_{\cX \cY}$ but $y$ is observed only after a label querying.
We consider hypothesis class $\cH: \cX \rightarrow \cY$. For any classifier $h \in \cH$, its (standard) error is defined as $\err(h) \ldef \P_{(x,y) \sim \cD_{\cX \cY}} (h(x) \neq y)$.

\paragraph{Function approximation}
We focus on the case where the hypothesis class $\cH$ is induced from a set of regression functions $\cF: \cX \rightarrow [0,1]$ that predicts the conditional probability $\eta(x)$. We write $\cH = \cH_{\cF} \coloneqq \curly*{h_f : f \in \cF}$ where $h_f(x) \ldef \ind(f(x) \geq 1/2)$. 
The complexity of $\cF$ is measured by the well-known complexity measure: the \emph{Pseudo dimension} $\pseud(\cF)$ \citep{pollard1984convergence, haussler1989decision,haussler1995sphere}; 
we assume $\pseud(\cF) < \infty$ throughout the paper.\footnote{See \cref{app:concentration} for formal definition of the Pseudo dimension. Many function classes of practical interests have finite Pseudo dimension: (1) when $\cF$ is finite, we have $\pseud(\cF) = O(\log \abs{\cF})$; (2) when $\cF$ is a set of linear functions/generalized linear function with non-decreasing link function, we have  $\cF = O(d)$; (3) when $\cF$ is a set of degree-$r$ polynomial in $\R^{d}$, we have $\pseud(\cF) = O({d+r \choose r})$.
}
Following existing works in active learning \citep{dekel2012selective,krishnamurthy2017active, krishnamurthy2019active} and contextual bandits \citep{agarwal2012contextual, foster2018practical, foster2020beyond, simchi2020bypassing}, we make the following \emph{realizability} assumption.

\begin{assumption}[Realizability]
\label{asmp:predictable}
The learner is given a set of regressors $\cF: \cX \to [0, 1]$ such that there exists a $f^\star \in \cF$ characterize the true conditional probability, i.e., $f^\star = \eta$.
\end{assumption}

The realizability assumption allows \emph{rich function approximation}, which strictly generalizes the setting with linear function approximation studied in active learning (e.g., in \citep{dekel2012selective}).
We relax \cref{asmp:predictable} in \cref{sec:misspecified} to deal with model misspecification.

\paragraph{Regression oracle}
We consider a regression oracle over $\cF$, which is extensively studied in the literature in active learning and contextual bandits \citep{krishnamurthy2017active, krishnamurthy2019active, foster2018practical, foster2020instance}.
Given any set $\cS$ of weighted examples $(w, x, y) \in \R_+ \times \cX \times \cY$ as input, the regression oracle outputs 
\begin{align}
\label{eq:regression_oracle}	
    \widehat f = \argmin_{f \in \cF} \sum_{(w, x, y) \in \cS} w \paren*{f(x) - y}^2.
\end{align}
The regression oracle solves a convex optimization problem with respect to the regression function, and admits closed-form solutions in many cases, e.g., it is reduced to least squares when $f$ is linear.
We view the implementation of the regression oracle as an efficient operation and quantify the computational complexity in terms of the number of calls to the regression oracle.

\paragraph{Chow's excess error \citep{chow1970optimum}} 
Let $h^\star \ldef h_{f^{\star}} \in \cH$ denote the Bayes classifier.
The \emph{standard excess error} of classifier $h \in \cH$ is defined as $\err(h) - \err(h^\star)$. 
Since achieving exponential improvement (of active over passive learning) with respect to the standard excess error is impossible in general \citep{kaariainen2006active}, we introduce Chow's excess error next.
We consider classifier of the form $\widehat h: \cX \rightarrow \cY \cup \curly*{\bot}$ where $\bot$ denotes the action of abstention. For any fixed $0 < \gamma < \frac{1}{2}$, the Chow's error is defined as 
\begin{align}
    \err_{\gamma}(\widehat h)  \ldef \P_{(x,y) \sim \cD_{\cX \cY}} \prn{\widehat h (x) \neq y,  \widehat h(x) \neq \bot} + \prn*{{1}/{2} - \gamma} \cdot \P_{(x,y) \sim \cD_{\cX \cY}} \prn{\widehat h(x) = \bot}.
    \label{eq:chow_error}
\end{align}

The parameter $\gamma$ can be chosen as a small constant, e.g., $\gamma = 0.01$, to avoid excessive abstention: The price of abstention is only marginally smaller than random guess.
The \emph{Chow's excess error} is then defined as $\err_{\gamma}(\wh h) - \err(h^\star)$ \citep{puchkin2021exponential}.
For any fixed accuracy level $\epsilon >0$, we aim at constructing a classifier $\widehat h: \cX \rightarrow \cY \cup \curly*{\bot}$ with $\eps$ Chow's excess error and $\polylog(\frac{1}{\eps})$ label complexity.   
We also relate Chow's excess error to standard excess error in \cref{sec:standard_excess_error}.

\subsection{Why Chow's excess error helps learning?}
\label{sec:chow}

We study the simple case where $\cX = \curly*{x}$ to illustrate the benefits of learning under Chow's excess error.
In this setting, the active learning problem reduces to mean estimation of the conditional probability $\eta(x) \in [0,1]$.
In the following, we compare learning behavior under standard excess error, Chow's excess error, and Chow's excess error relative to the optimal abstaining classifier.

\begin{figure}[t]
    \centering
    \includegraphics[width=\linewidth]{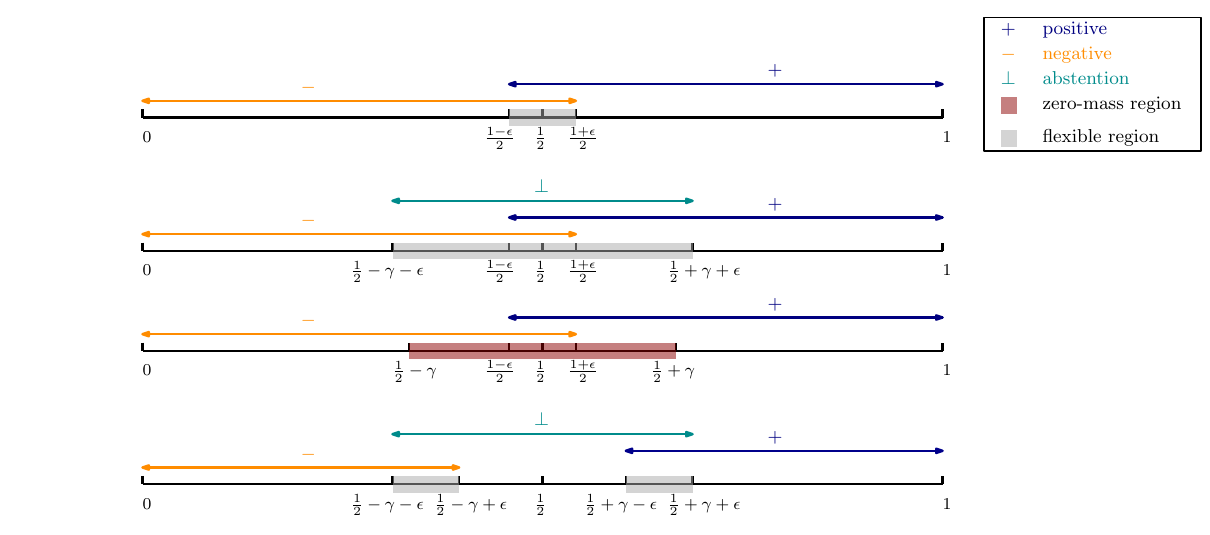}
    \caption{
    Illustration of decision regions under different error criteria.
    \emph{Top:} standard excess error $\err(\widehat h) - \err(h^\star)$. 
    \emph{Second:} Chow's excess error $\err_{\gamma}(\widehat h) - \err(h^\star)$. 
    \emph{Third:} standard excess error $\err(\widehat h) - \err(h^\star)$ under Massart noise condition with parameter $\gamma$. 
    \emph{Bottom:} Chow's excess error relative to the optimal abstaining classifier, i.e., $\err_{\gamma}(\wh h) -\inf_{h: \cX \rightarrow \crl{0, 1,\bot}} \err_{\gamma}(h)$. 
    In this figure, positive corresponds to predicting label 1 and negative to predicting label 0.
    }
    \label{fig:flexible_region}
\end{figure}

\paragraph{Learning under standard excess error}
Fix any $\epsilon > 0$. With respect to the conditional probability $ \eta(x)$, we define the \emph{positive region} $\cS_{+,\epsilon} \coloneqq \sq{\frac{1-\epsilon}{2}, 1}$ and the \emph{negative region} $\cS_{-,\epsilon} \coloneqq [0, \frac{1+\epsilon}{2}]$;
here, positive (resp. negative) refers to predicting label 1 (resp. 0).
These regions have the following interpretation:
if $\eta(x) \in \cS_{+,\epsilon}$ (resp. $\eta(x) \in \cS_{-,\eps}$), then labeling $x$ as 1 (resp. 0) incurs no more than $\epsilon$ standard excess error.
Under standard excess error, 
we define the \emph{flexible region} as $\cS_{\flex, \epsilon}^{\stan} \coloneqq \cS_{+, \epsilon} \cap \cS_{-, \epsilon} = [\frac{1-\epsilon}{2}, \frac{1+\epsilon}{2}]$, corresponding to the overlap of $\cS_{+, \eps}$ and $\cS_{-, \eps}$ (highlighted as the grey region in the top plot in \cref{fig:flexible_region}).
We have two key observations: (1) if $\eta(x) \in \cS_{\flex, \epsilon}^{\stan}$, then labeling $x$ as either 0 or 1 guarantees excess error at most $\epsilon$; and (2) if $\eta(x) \notin \cS_{\flex, \epsilon}^{\stan}$, achieving excess error at most $\eps$ requires correctly labeling $x$ as 0 or 1.
Since the flexible region has length $\eps$, it is possible to construct two learning scenarios where their $\eta(x)$ values differ by $O(\eps)$ yet require different labels. For instance, distinguishing between $\eta(x) = \frac{1}{2} - \eps$ and $\eta(x) = \frac{1}{2} + \eps$ yields a label complexity lower bound of $\Omega({1}/{\eps^2})$.

\paragraph{Learning under Chow's excess error}
We now consider learning under Chow's excess error.
As before, we define the positive and negative regions $\cS_{+,\epsilon} \coloneqq \sq{\frac{1-\epsilon}{2}, 1}$ and $\cS_{-,\epsilon} \coloneqq \sq{0, \frac{1+\epsilon}{2}}$.
Additionally, we introduce the \emph{abstention region}:
$\cS_{\bot, \epsilon} \ldef [\frac{1}{2}-\gamma-\epsilon, \frac{1}{2}+\gamma+\epsilon]$,
where abstaining on $x$ when $\eta(x) \in \cS_{\bot,\epsilon}$ incurs at most $\epsilon$ Chow's excess error.
Under Chow's excess error, the flexible region is enlarged thanks to the added abstention choice. We now have \emph{positive flexible region} $\cS_{\flex,+,\epsilon}^{\chow} \coloneqq \cS_{+,\epsilon} \cap \cS_{\bot,\epsilon} = \sq{\frac{1-\epsilon}{2}, \frac{1}{2} + \gamma + \epsilon}$, and \emph{negative flexible region} $\cS_{\flex,-,\epsilon}^{\chow} \coloneqq \cS_{-,\epsilon} \cap \cS_{\bot,\epsilon} = \sq{\frac{1}{2} - \gamma - \epsilon, \frac{1+\epsilon}{2}}$, both have length $\gamma + \frac{3 \eps}{2}$ (see the second plot in \cref{fig:flexible_region}).
These enlarged flexible regions imply that Chow's excess error can be controlled with fewer samples.
Specifically, it suffices to identify whether $\eta(x)$ lies within $\cS_{\bot,\epsilon}$ or confidently predicts $0/1$.
Constructing a confidence interval of length at most $\gamma/2$ requires $\wt O(1/\gamma^2)$ samples.
If $\eta(x) \in \sq{\frac{1 - \gamma}{2}, \frac{1 + \gamma}{2}}$, the confidence interval lies entirely within $\cS_{\bot,\epsilon}$, certifying the abstention acheives at most $\eps$ Chow's excess error.
If $\eta(x) < \frac{1 - \gamma}{2}$, the upper bound of the interval satisfies $\ucb(x) \le \frac{1}{2}$, certifying that labeling $x$ as $0$ achieves at most $\epsilon$ excess error.
Similarly, if $\eta(x) > \frac{1 + \gamma}{2}$, labeling $x$ as $1$ achieves at most $\epsilon$ excess error.
In summary, learning under Chow's excess error behaves similarly to learning under Massart noise (see the third plot in \cref{fig:flexible_region}).
Examples near the decision boundary are effectively filtered out by abstention, and reliable learning is achievable with $\wt O(1/\gamma^2)$ samples.

\oldparagraph{Why not compete against the optimal abstaining classifier?} 
We use $\err_{\gamma}(\wh{h}) - \inf_{h : \cX \to \{0, 1, \bot\}} \err_{\gamma}(h)$ 
to denote the excess error relative to the optimal classifier that is allowed to abstain.
As shown in the bottom plot of \cref{fig:flexible_region}, when competing against the optimal abstaining classifier, the flexible regions shrink back to length $O(\epsilon)$.
This occurs because abstention is the only action that guarantees at most $\epsilon$ excess error over the region $\prn{\frac{1}{2} - \gamma + \epsilon, \frac{1}{2} + \gamma - \epsilon}$.
Consequently, the learner must distinguish between cases like $\eta(x) = \frac{1}{2} + \gamma - 2 \epsilon$ and $\eta(x) = \frac{1}{2} + \gamma + 2 \epsilon$, which requires $\Omega(1 / \epsilon^2)$ samples.
Competing against the optimal abstaining classifier is also unreasonable.
For example, when $\eta(x) = \frac{1}{2} + \gamma - 2 \epsilon$, deciding whether to label $x$ as $1$ or abstain demands $\Omega(1 / \epsilon^2)$ samples.
Yet with only $\widetilde O(1 / \gamma^2)$ samples, the learner can already confidently determine that $\eta(x) > \frac{1}{2}$ and safely predict label $1$.

\subsection{Contributions and paper organization}

We provide informal statements of our main results in this section.
Our results depend on complexity measures such as \emph{value function} disagreement coefficient $\theta$ and eluder dimension $\mfe$ (formally defined in \cref{sec:epoch} and \cref{app:star_eluder}). 
These complexity measures are previously analyzed in contextual bandits \citep{russo2013eluder, foster2020instance} and we import them to the active learning setup.
These complexity measures are well-bounded for many function classes of practical interests, e.g., 
we have $\theta, \mfe = \wt O(d)$ for linear and generalized linear functions in $\R^d$.

Our first main contribution is that we design the first \emph{computationally efficient} active learning algorithm (\cref{alg:epoch}) that achieves exponential labeling savings, \emph{without any low noise assumptions}.

\begin{theorem}[Informal]
\label{thm:epoch_informal}
    There exists an algorithm that constructs a classifier $\wh h: \cX \rightarrow \crl*{0, 1, \bot}$ with Chow's excess error at most $\eps$ and label complexity $\wt O \prn{\frac{\theta \, \pseud(\cF)}{\gamma^2} \cdot \polylog(\frac{1}{\eps})}$, without any low noise assumptions. The algorithm can be efficiently implemented via a regression oracle: It takes $\wt O \prn{\frac{\theta \, \pseud(\cF)}{\eps \, \gamma^3}}$ oracle calls for general $\cF$, and $\wt O \prn{ \frac{\theta \, \pseud(\cF)}{\eps \, \gamma} }$ oracle calls for convex $\cF$.
\end{theorem}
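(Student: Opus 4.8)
The plan is to analyze an epoch-based algorithm that maintains a shrinking confidence set of regressors, queries labels only inside a ``disagreement region,'' and at the end predicts a confident sign wherever possible and abstains otherwise. First I would run $M = O(\log(1/\eps))$ epochs. In epoch $m$, the regression oracle \eqref{eq:regression_oracle} run on the labels queried so far produces $\wh f_m$, and I form the confidence set $\cF_m = \crl{f \in \cF : \wh L_m(f) - \wh L_m(\wh f_m) \le \beta_m}$ with $\beta_m = \wt O(\pseud(\cF))$. A standard squared-loss concentration argument (using $\pseud(\cF) < \infty$ and realizability $f^\star=\eta$) shows that on a high-probability ``good event'' one has $f^\star \in \cF_m$ for every $m$, and every $f \in \cF_m$ is close to $f^\star$ on the query distribution. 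For each $x$ I track $\ucb_m(x) = \max_{f\in\cF_m} f(x)$ and $\lcb_m(x) = \min_{f\in\cF_m} f(x)$; since $f^\star\in\cF_m$, the interval $[\lcb_m(x),\ucb_m(x)]$ always contains $\eta(x)$.

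The final classifier predicts $+1$ when $\lcb_M(x) \ge 1/2$, predicts $-1$ when $\ucb_M(x) \le 1/2$, and outputs $\bot$ otherwise. Because $\eta(x)$ lies in the confidence interval, this rule never predicts the wrong sign, so its only contribution to Chow's excess error comes from abstention. Decomposing \eqref{eq:chow_error} pointwise, abstaining costs $(1/2-\gamma)-(1/2-\abs{\eta(x)-1/2}) = \abs{\eta(x)-1/2}-\gamma$ relative to the Bayes classifier, which is non-positive on every hard point ($\abs{\eta(x)-1/2}\le\gamma$). Hence abstention only hurts on easy points, and I would observe that abstaining on an easy point forces the interval to straddle $1/2$ while reaching past $1/2\pm\gamma$, i.e.\ $\ucb_M(x)-\lcb_M(x) > \gamma$. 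Thus the excess error is at most $\P_x([\lcb_M(x),\ucb_M(x)] \text{ straddles } 1/2 \text{ and has width} > \gamma)$, and it suffices to drive this ``unresolved-easy'' probability below $\eps$.

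To control both querying and this residual probability, I query in epoch $m$ exactly the points still unresolved at scale $\gamma$ --- sign-ambiguous and not yet certified hard --- and never query a point whose interval already lies inside $[1/2-\gamma,1/2+\gamma]$. This is the crucial use of Chow's margin: it lets the algorithm permanently retire the hard region (where abstention is free) and behave, on the region it keeps querying, as if under a Massart margin of $\gamma$. The value-function disagreement coefficient $\theta$ then bounds the measure of the query region by $\theta\cdot(r_m/\gamma)^2$, where $r_m$ is the current confidence radius; combined with the squared-loss concentration this yields a geometric decay of both $r_m$ and the unresolved-easy measure, at a cost of only $\wt O(\theta\,\pseud(\cF)/\gamma^2)$ fresh labels per epoch. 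Summing over the $O(\log(1/\eps))$ epochs gives the claimed $\wt O(\theta\,\pseud(\cF)/\gamma^2 \cdot \polylog(1/\eps))$ label complexity, with no low-noise assumption needed.

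Finally, I account for the oracle complexity per unlabeled point: deciding membership in the query region reduces to computing $\ucb_m(x)$ and $\lcb_m(x)$, each the maximization of the linear functional $f\mapsto f(x)$ over the sublevel set $\crl{f : \wh L_m(f) \le \text{threshold}}$. For convex $\cF$ this is a single convex program solvable with $O(1)$ calls (via a Lagrangian reweighting handed to \eqref{eq:regression_oracle}), whereas for general $\cF$ a bisection over the Lagrange multiplier costs an extra $\wt O(1/\gamma^2)$ factor; multiplying by the $\wt O(\theta\,\pseud(\cF)/(\eps\gamma))$ points processed over all epochs gives the two stated bounds. I expect the main obstacle to be the label-complexity step: making the concentration rigorous for the \emph{adaptively, selectively} sampled queries (which are neither i.i.d.\ nor drawn from $\cD_\cX$), and pushing the change of measure through the disagreement coefficient so that the constant resolution $\gamma$ and the target measure $\eps$ decouple cleanly --- i.e.\ so that retiring the hard region removes the otherwise fatal $\gamma^2$ ``floor'' on the confidence radius and restores geometric progress all the way down to measure $\eps$.
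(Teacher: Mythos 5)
Your plan follows the paper's route almost exactly: the same geometrically growing epochs, the same square-loss confidence sets $\cF_m$ with $f^\star\in\cF_m$ on a high-probability event, the same query rule (interval straddles $\frac{1}{2}$ and is not certified hard, hence has width $>\gamma$), the same change of measure through the value-function disagreement coefficient to control the nested query regions, and the same $\lcb/\ucb$ subroutines for the oracle counts. Two deviations are worth flagging. First, your final classifier abstains on \emph{every} sign-ambiguous point, whereas the paper's $\wh h_M$ predicts $\sgn(2\wh f_M(x)-1)$ on such points and abstains only when $[\lcb(x;\cF_M),\ucb(x;\cF_M)]\subseteq[\frac{1}{2}-\gamma,\frac{1}{2}+\gamma]$; your rule is perfectly adequate for this theorem (abstention costs $\abs{\eta(x)-\frac{1}{2}}-\gamma$, nonpositive on hard points), but it forfeits the proper-abstention property that the paper's later results depend on, while the paper's rule pays at most $2\abs{\eta(x)-\frac{1}{2}}\le 2\,w(x;\cF_M)$ for a possibly wrong sign and so lands on the same bound. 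Second, and more substantively, you bound the excess error by the \emph{probability} of the unresolved-easy region rather than by the width-weighted quantity $\E_{x}\brk{\ind(g_M(x)=1)\cdot w(x;\cF_M)}$. The former scales as $\theta\rho_M/(\tau_{M-1}\gamma^2)$ and the latter as $\theta\rho_M/(\tau_{M-1}\gamma)$, so driving your bound below $\eps$ forces $T=\wt O(\theta\,\pseud(\cF)/(\eps\gamma^2))$ instead of $\wt O(\theta\,\pseud(\cF)/(\eps\gamma))$. The label complexity survives (it depends on $T$ only through $\log T$), but the oracle-call counts are $T$ times the per-example cost, so as written your argument yields $\wt O(\theta\,\pseud(\cF)/(\eps\gamma^4))$ for general $\cF$ rather than the claimed $\wt O(\theta\,\pseud(\cF)/(\eps\gamma^3))$. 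The fix is one line: on an unresolved easy point the abstention cost $\abs{\eta(x)-\frac{1}{2}}-\gamma$ is itself at most the interval width, so integrate the width over the query region (the paper's layer-cake bound $\int_{\gamma/2}^{1}\omega^{-2}\,d\omega$ against the disagreement coefficient) instead of the indicator.
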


The formal statements are provided in \cref{sec:epoch}.
The \emph{statistical} guarantees (i.e., label complexity) in \cref{thm:epoch_informal} is similar to the one achieved in \citet{puchkin2021exponential}, with one critical difference: The label complexity provided in \citet{puchkin2021exponential} is in terms of the \emph{classifier-based} disagreement coefficient $\check \theta$ \citep{hanneke2014theory}. 
Even for a set of linear classifier, $\check \theta$ is only known to be bounded in special cases, e.g., when $\cD_\cX$ is uniform over the unit sphere \citep{hanneke2007bound}.
On the other hand, we have  $\theta \leq d$ for any  $\cD_\cX$ \citep{foster2020instance}.

We say that a classifier $\wh h : \cX \rightarrow \crl{0, 1,\bot}$ enjoys proper abstention if it abstains only if abstention is indeed the optimal choice (based on \cref{eq:chow_error}).
For any classifier that enjoys proper abstention, one can easily relate its \emph{standard} excess error to the Chow's excess error, under commonly studied Massart/Tsybakov noises \citep{massart2006risk, tsybakov2004optimal}.
The classifier obtained in \cref{thm:epoch_informal} enjoys proper abstention, and achieves the following guarantees (formally stated in \cref{sec:minimax}).

\begin{theorem}[Informal]
\label{thm:proper_abs_informal}
Under Massart/Tsybakov noise, with appropriate adjustments,
the classifier learned in \cref{thm:epoch_informal} achieves the minimax optimal label complexity under standard excess error.
\end{theorem}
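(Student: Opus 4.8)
The plan is to relate the standard excess error of the classifier from \cref{thm:epoch_informal} to its Chow's excess error by exploiting proper abstention, and then to retune the abstention margin $\gamma$ as a function of the target accuracy so that the resulting label complexity matches the known minimax lower bounds. Let $\wh h$ be the abstaining classifier returned by the algorithm, built from a learned regressor $\wh f$, and let $h_{\wh f}(x) = \sgn(2\wh f(x)-1)$ be the induced \emph{standard} classifier that predicts everywhere. First I would record the two pointwise identities that drive everything: the standard excess error admits the exact expression $\err(h_{\wh f}) - \err(h^\star) = \E[\abs{2\eta(X)-1}\ind(h_{\wh f}(X)\neq h^\star(X))]$, while on the abstention region $A \ldef \{x : \wh h(x) = \bot\}$ the Chow's excess error contributes exactly $\E[(\abs{\eta(X)-1/2} - \gamma)\ind_A(X)]$.

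Next I would split the standard excess error over $A$ and $A^c$. On $A^c$ the two classifiers agree, so the standard contribution there equals the Chow's excess contribution on $A^c$. Since proper abstention guarantees $A \subseteq \{x : \abs{\eta(x)-1/2}\leq \gamma\}$, the Chow's contribution on $A$ is nonpositive, so the $A^c$-contribution is at most the total Chow's excess error plus $\gamma\,\P_{\cD_\cX}(A)$. On $A$ itself any disagreement costs at most $\abs{2\eta-1}=2\abs{\eta-1/2}\leq 2\gamma$, contributing at most $2\gamma\,\P_{\cD_\cX}(A)$. Combining, the standard excess error is at most $\eps + O(\gamma)\,\P_{\cD_\cX}(\abs{\eta(X)-1/2}\leq\gamma)$, where $\eps$ is the internal Chow's accuracy.

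The noise conditions then control the last factor. Under Massart noise with margin $\tau$, choosing any fixed $\gamma < \tau$ makes $\{\abs{\eta-1/2}\leq\gamma\}$ a null set, so the abstention region is empty on the support and the standard excess error is simply $\leq \eps$; the label complexity of \cref{thm:epoch_informal} reduces to $\wt O(\theta\,\pseud(\cF)\,\tau^{-2}\polylog(1/\eps))$, the polylogarithmic rate that is minimax optimal under Massart noise. Under Tsybakov noise I would instead use the margin bound $\P_{\cD_\cX}(\abs{\eta(X)-1/2}\leq\gamma)\lesssim \gamma^{\alpha}$, giving standard excess error $\lesssim \eps + \gamma^{1+\alpha}$; setting $\gamma \asymp \eps^{1/(1+\alpha)}$ balances the two terms and, after substitution into the $1/\gamma^2$ label complexity, yields $\wt O(\theta\,\pseud(\cF)\,\eps^{-2/(1+\alpha)}\polylog(1/\eps))$, which matches the minimax active-learning rate under Tsybakov noise after the reparametrization $\kappa=(1+\alpha)/\alpha$. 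I would close the optimality claim by invoking the existing minimax lower bounds, so the upper bound is tight up to the factor $\theta\,\pseud(\cF)$ and logarithmic terms.

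I expect the main obstacle to be twofold. First, making the ``appropriate adjustments'' precise: the margin $\gamma$ must be tuned to the target accuracy and, under Tsybakov noise, to the exponent $\alpha$, which raises the issue of adaptivity when $\alpha$ is unknown and requires checking that retuning $\gamma$ does not break the high-probability proper-abstention guarantee on which the whole reduction rests. Second, the abstention-region bound needs the algorithm's empirical abstention set to genuinely lie inside $\{\abs{\eta-1/2}\leq\gamma\}$ with high probability rather than only approximately; tracking this containment uniformly across epochs, together with the estimation error of $\wh f$ near the boundary, is where the technical bookkeeping concentrates.
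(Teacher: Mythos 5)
Your proposal is correct and follows essentially the same route as the paper: proper abstention confines the abstention set to $\crl{x : \abs{\eta(x)-1/2}\leq\gamma}$, the standard excess error is bounded by the Chow's excess error plus an $O(\gamma)\cdot\P(\abs{\eta(X)-1/2}\leq\gamma)$ term, and $\gamma$ is then tuned to $\tau_0$ (Massart) or $\asymp\eps^{1/(1+\beta)}$ (Tsybakov) to recover the $\wt\Theta(\tau_0^{-2})$ and $\wt\Theta(\eps^{-2/(1+\beta)})$ rates. The only cosmetic deviation is that you de-abstain deterministically via $\sgn(2\wh f(x)-1)$ (costing at most $2\gamma$ per unit abstention mass) whereas the paper randomizes uniformly over $\pm1$ (costing exactly $\gamma$), and your two stated obstacles are handled in the paper by assuming knowledge of the noise parameters and by the high-probability event $f^\star\in\cF_m$ for all $m$, respectively.
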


We also propose new noise conditions that \emph{strictly} generalize the usual Massart/Tsybakov noises, which we call noise-seeking conditions.
At a high-level, the noise-seeking conditions allow abundant data points with $\eta(x)$ equal/close to $\frac{1}{2}$. These points are somewhat ``harmless'' since it hardly matters what label is predicted at that point (in terms of excess error).
These seemingly ``harmless'' data points can, however, cause troubles for any active learning algorithm that requests the label for any point that is uncertain, i.e., the algorithm cannot decide if $\abs{\eta(x)-\frac{1}{2}}$ is strictly greater than $0$. We call such algorithms ``uncertainty-based'' active learners.
These algorithms could wastefully sample in these ``harmless'' regions, ignoring other regions where erring could be much more harmful.
We derive the following 
proposition (formally stated in \cref{sec:noise_seeking}) under these noise-seeking conditions.

\begin{proposition}[Informal]
\label{prop:budget_informal}
For any labeling budget $B \gtrsim \frac{1}{\gamma^2} \cdot \polylog(\frac{1}{\eps}) $, there exists a learning problem such that (1) any uncertainty-based active learner suffers standard excess error $\Omega(B^{-1})$; yet (2) the classifier $\wh h$ learned in \cref{thm:epoch_informal} achieves standard excess error at most $\eps$.
\end{proposition}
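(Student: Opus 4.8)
This is a separation statement, so the plan is to exhibit, for each budget $B \gtrsim \gamma^{-2}\polylog(\frac{1}{\eps})$, a \emph{single} realizable problem $(\cD_\cX,\cF)$ — tailored to $B$ — on which every uncertainty-based learner is forced to squander its labels while the algorithm of \cref{thm:epoch_informal} is not. I would split $\cX$ into a \emph{harmless} region $\cX_0$ carrying almost all the mass, on which the truth is $\eta \equiv \frac12$, and a \emph{harmful} region $\cX_1$ of small mass $p \asymp 1/B$ on which $\abs{\eta - \frac12}$ is a fixed constant \emph{strictly larger than} $\gamma$. The essential design choice is that $\cF$ is rich enough to make each point of $\cX_0$ \emph{genuinely undecidable}: although the true regressor equals $\frac12$ there, $\cF$ contains competitors arbitrarily close to (but not equal to) $\frac12$ on $\cX_0$, so no finite sample certifies $\abs{\eta(x)-\frac12}>0$. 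On $\cX_1$, in contrast, I let $\cF$ contain two functions $f_+,f_-$ of opposite sign (both with margin $>\gamma$) that agree with $\frac12$ on $\cX_0$; the realizable instance is fixed by choosing $f^\star \in \crl{f_+,f_-}$ on $\cX_1$, so that $\cX_0$ conveys no information about the correct label on $\cX_1$.

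\textbf{Part (1): lower bound for uncertainty-based learners.} By definition such a learner must query \emph{every} point it cannot certify to satisfy $\abs{\eta(x)-\frac12}>0$; by the undecidability built into $\cX_0$ this includes every presented point of $\cX_0$, and since $\cX_0$ carries mass $1-p\approx 1$, essentially every round costs a label. Hence its effective horizon is $O(B)$ rounds, during which $\cX_1$ is visited only $\mathrm{Binomial}(O(B),p)=O(1)$ times in expectation, so with constant probability it never queries inside $\cX_1$. I would then run a two-point (Le Cam) argument in the spirit of \citet{kaariainen2006active}: the observable data under $f_+$ and $f_-$ are identically distributed until the learner queries in $\cX_1$, so on the ``never query $\cX_1$'' event its action on $\cX_1$ is fixed and must either misclassify or abstain for at least one of the two instances, each of which costs $\Omega(\Delta\,p)$ in standard excess error. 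Weighting by the constant probability of that event gives worst-case standard excess error $\gtrsim \Delta\,p \asymp 1/B$, uniformly over all uncertainty-based learners.

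\textbf{Part (2): upper bound for the algorithm of \cref{thm:epoch_informal}.} The point here is that this algorithm never falls into the $\cX_0$ trap: its regression confidence bounds shrink to width $\gamma$ after only $O(\gamma^{-2})$ queries per undecidable point, at which stage proper abstention takes over and it stops querying $\cX_0$. Because unlabeled data is free, it may then process arbitrarily many rounds, encounter $\cX_1$ as often as needed, and — since the margin there exceeds $\gamma$ — \emph{predict} (not abstain) the correct label using only $\polylog(\frac{1}{\eps})$ further queries. Invoking \cref{thm:epoch_informal} (with $\theta,\pseud(\cF)=O(1)$ for this problem) certifies Chow's excess error $\le\eps$ within budget $B$; since $\cX_1$ has mass $p\asymp 1/B \gg \eps$ with margin $>\gamma$, this forces correct prediction there under proper abstention, for either realizable instance. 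The proper-abstention translation to \emph{standard} excess error is then lossless on this problem: abstaining on $\cX_0$ (where $\eta=\frac12$) costs nothing, and $\cX_1$ is classified as $h^\star$, so the standard excess error is $\le\eps$.

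\textbf{Main obstacle.} The crux is Part (1): making the waste unavoidable for \emph{every} uncertainty-based learner rather than for a naive one. This hinges entirely on the undecidability of $\cX_0$ forcing a query every round (capping the horizon at $O(B)$) while simultaneously leaking no information about $\cX_1$ — two requirements that pull against each other in the design of $\cF$. Tuning the mass $p\asymp 1/B$ and the constant margin $\Delta>\gamma$ so that the forced error is exactly $\Theta(1/B)$, while remaining in the regime $B\gtrsim\gamma^{-2}\polylog(\frac{1}{\eps})$ in which \cref{thm:epoch_informal} still drives the good learner below $\eps$, is the delicate part of the argument.
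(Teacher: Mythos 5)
Your construction and argument match the paper's almost exactly: the paper instantiates your $\cX_0/\cX_1$ split as \cref{ex:linear} (a two-dimensional linear class with $\eta\equiv\frac12$ on a mass-$(1-p)$ hard region and margin $\frac12$ on a mass-$p\asymp 1/B$ easy region whose sign is drawn uniformly at random), forces every uncertainty-based learner to query each hard point simply because any valid open confidence interval around $\eta(x)=\frac12$ must contain $\frac12$, and runs the same Markov/two-point argument to get $\Omega(B^{-1})$ while \cref{alg:epoch} with proper abstention achieves $\eps$. One small imprecision: $p\asymp 1/B$ need not exceed $\eps$ (there is no upper bound on $B$), so rather than arguing that the aggregate Chow bound ``forces correct prediction'' on $\cX_1$---which is also delicate because abstention on $\cX_0$ contributes a negative $-\gamma\,\P(\wh h(x)=\bot)$ term to Chow's excess error---one should either split on whether $p\lesssim\eps$ or, as the paper implicitly does, use the algorithm's pointwise guarantee that $\exc_\gamma(\wh h;x)\le 0$ off the query region.
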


The above result demonstrates the superiority of our algorithm over any ``uncertainty-based'' active learner.
Moreover, we show that, under these strictly harder noise-seeking conditions, our algorithm still achieve guarantees similar to the ones stated in \cref{thm:proper_abs_informal}.

Before presenting our next main result, we first consider a simple active learning problem with $\cX=\crl{x}$.
Under Massart noise, we have $\abs{\eta(x) - \frac{1}{2}} \geq \tau_0$ for some constant $\tau_0 >0$. Thus, it takes no more than $O(\tau_0^{-2} \log\frac{1}{\delta})$ labels to achieve $\eps$ standard excess error, no matter how small $\eps$ is. 
This example shows that, at least in simple cases, we can expect to achieve a \emph{constant} label complexity for active learning, with no dependence on $\frac{1}{\eps}$ at all.
To the best of our knowledge,
our next result provides the first generalization of such phenomenon to a \emph{general} set of (finite) regression functions, as long as its eluder dimension $\mfe$ is bounded.
\begin{theorem}[Informal]
\label{thm:eluder_informal}
    Under Massart noise with parameter $\tau_0$ and a general (finite) set of regression function $\cF$. There exists an algorithm that returns a classifier with standard excess error at most $\eps$ and label complexity $O\paren{ \frac{\mfe \cdot \log ({\abs*{\cF}}/{\delta})}{\tau_0^2}}$, which is independent of $\frac{1}{\eps}$.
\end{theorem}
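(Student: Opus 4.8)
The plan is to give an elimination-style algorithm whose label requests are driven by a stream of \emph{free} unlabeled examples, and to argue that the number of rounds in which it actually queries labels is controlled by the eluder dimension $\mfe$, with no dependence on $\eps$. The starting observation is that under Massart noise we have $\abs{\eta(x)-\tfrac12}\ge\tau_0$ for (almost) every $x$, together with the standard identity $\err(h_f)-\err(h^\star)=\E_{x\sim\cD_\cX}\brk{\abs{2\eta(x)-1}\,\ind\crl{h_f(x)\ne h^\star(x)}}$. A sign disagreement $h_f(x)\ne h^\star(x)$ forces $f(x)$ to lie on the opposite side of $\tfrac12$ from $\eta(x)=f^\star(x)$, so $\abs{f(x)-f^\star(x)}\ge\abs{\eta(x)-\tfrac12}\ge\tau_0$. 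Writing $w_t(x)\ldef\max_{f,g\in\cF_t}\abs{f(x)-g(x)}$ for the width of a surviving set $\cF_t\ni f^\star$, this shows the excess error of any $h_f$ with $f\in\cF_t$ is at most $\P_{x\sim\cD_\cX}\prn{w_t(x)\ge\tau_0}$. This is the crucial reduction: all error is confined to the large-width region, and since identifying/certifying this region requires only unlabeled data, the $1/\eps$ dependence can be offloaded entirely onto unlabeled sample size.

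Algorithmically, I would maintain a confidence set $\cF_t\subseteq\cF$ that contains $f^\star$ with high probability, initialized to $\cF_0=\cF$ (using \cref{asmp:predictable}). The learner draws unlabeled $x\sim\cD_\cX$; whenever it meets a point with $w_t(x)\ge\tau_0$ (equivalently, where surviving classifiers disagree), it requests $m=O\prn{\tau_0^{-2}\log(\abs{\cF}/\delta)}$ labels at $x$. By Massart noise and a Hoeffding bound, the empirical label mean recovers $h^\star(x)=\sgn(2\eta(x)-1)$ correctly with probability $1-\delta/\abs{\cF}$, so I eliminate every $f\in\cF_t$ with $h_f(x)\ne h^\star(x)$; a union bound ensures $f^\star$ is never discarded. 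If a fresh batch of unlabeled points exhibits no large-width point, the learner halts and outputs $h_{\wh f}$ for an arbitrary $\wh f\in\cF_t$.

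The heart of the analysis is bounding the number of \emph{label-requesting rounds} by $\mfe$. Each queried point $x_k$ carries two surviving functions differing by at least $\tau_0$ there, while all surviving functions are consistent (to the resolution of the estimates) with the resolved values at the previously queried $x_1,\dots,x_{k-1}$; this is exactly the condition witnessing that $x_k$ is $\tau_0$-independent of its predecessors. The standard width-potential argument for the eluder dimension then caps the number of such rounds at $O(\mfe)$. Each round spends $m=O\prn{\tau_0^{-2}\log(\abs{\cF}/\delta)}$ labels, so the total label complexity is $O\prn{\mfe\,\tau_0^{-2}\log(\abs{\cF}/\delta)}$, independent of $\eps$. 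For correctness, upon halting I certify via the free unlabeled batch that $\P_{x\sim\cD_\cX}\prn{w_t(x)\ge\tau_0}\le\eps$: a batch of $\wt{O}\prn{\eps^{-1}\log(1/\delta)}$ unlabeled points suffices by a uniform-convergence argument over the finitely many possible surviving sets. By the reduction above, the output then has standard excess error at most $\eps$, with the $1/\eps$ confined to the (cost-free) unlabeled count. Since only $O(\mfe)$ eliminations occur in total, some batch is eventually clean, so the learner provably terminates.

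The main obstacle I anticipate is the eluder/width accounting under noisy elimination. I must ensure that resolving each queried point to the correct accuracy genuinely reduces the future width there, so that the $\tau_0$-independent rounds are truly bounded by $\mfe$ through the width-potential argument, while simultaneously keeping the per-round budget at $O\prn{\tau_0^{-2}\log(\abs{\cF}/\delta)}$ and never eliminating $f^\star$. The delicate point is balancing the estimation accuracy against the summed-squared-error term appearing in the eluder lemma, so that the two potential sources of $\tau_0^{-2}$ do not compound and the final bound remains $O\prn{\mfe\,\tau_0^{-2}\log(\abs{\cF}/\delta)}$.
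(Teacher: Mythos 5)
Your reduction of the standard excess error to $\P_{x\sim\cD_\cX}\prn{w_t(x)\ge\tau_0}$ is sound, and offloading the $1/\eps$ dependence onto unlabeled certification is the right instinct. The gap is the central counting claim that only $O(\mfe)$ label-requesting rounds occur. The eluder dimension bounds the length of a sequence in which each $x_k$ admits an $f$ with $\abs{f(x_k)-f^\star(x_k)}>\tau_0$ \emph{and} $\sum_{j<k}\prn{f(x_j)-f^\star(x_j)}^2\le\tau_0^2$; your elimination rule does not supply the second condition. If you eliminate only by sign agreement with $h^\star(x_k)$, a surviving $f$ can deviate from $f^\star$ by up to $\tfrac12-\tau_0$ at every previously queried point while staying on the correct side of $\tfrac12$, so queried points need not be $\tau_0$-independent of their predecessors and the width-potential argument gives no bound (worse, a large-width point at which all survivors agree in sign triggers a query yet eliminates nothing, so termination itself is in doubt). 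If instead you eliminate by value, keeping only $f$ with $\abs{f(x_k)-\wh\eta(x_k)}\le\alpha$, then after $N$ rounds the survivors' cumulative budget is $\sum_{j\le N}\prn{f(x_j)-f^\star(x_j)}^2=O(N\alpha^2)$, and the eluder recursion $N\lesssim\prn{N\alpha^2/\tau_0^2+1}\,\mfe$ is vacuous for $\alpha$ a constant multiple of $\tau_0$; closing it requires $\alpha\approx\tau_0/\sqrt{\mfe}$, hence $O\prn{\mfe\,\tau_0^{-2}\log(\abs{\cF}/\delta)}$ labels \emph{per point} and a total of $O\prn{\mfe^2\,\tau_0^{-2}\log(\abs{\cF}/\delta)}$. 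The two sources of $\tau_0^{-2}$ you flag as the "delicate point" really do compound in this design.

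The paper's \cref{alg:eluder} resolves this differently: it queries a \emph{single} label per round and defines the surviving set by cumulative squared loss over all queried points, with threshold $\beta=O(\log(\abs{\cF}/\delta))$ justified by a time-uniform optimal-stopping concentration (\cref{lm:opt_stopping} and \cref{lm:expected_sq_loss_opt}). This yields $\sum_t Q_t\prn{f(x_t)-f^\star(x_t)}^2\le 2C_\delta$ for all survivors at all times --- a budget independent of the number of queries --- and the counting lemma (\cref{lm:conf_width_eluder}) then bounds the number of queried rounds with width exceeding $\tau_0$ by $O(\beta\,\mfe/\tau_0^2)$, so $\tau_0^{-2}$ enters exactly once, through the eluder counting rather than through repeated sampling at a point. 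To repair your argument you would need to replace per-point batch estimation with this kind of aggregate squared-loss confidence set (or otherwise keep the survivors' cumulative squared deviation on queried points at $O(\log\abs{\cF})$ rather than growing linearly in the number of queries).
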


A similar constant label complexity holds with Chow's excess error, without any low noise assumptions.
We also provide discussion on why previous algorithms do not achieve such constant label complexity, even in the case with linear functions.
We defer formal statements and discussion to \cref{sec:constant}.
In \cref{sec:misspecified}, we relax \cref{asmp:predictable} and propose an algorithm that can deal with model misspecification.

\paragraph{Paper organization}
The rest of this paper is organized as follows.
We discuss additional related work in \cref{sec:related_work}.
We present our main algorithm and its guarantees in \cref{sec:epoch}. 
In \cref{sec:standard_excess_error}, we analyze our algorithm under standard excess error and discuss other key properties.
Extensions of the algorithm, including achieving \emph{constant} label complexity and handling model misspecification, are presented in \cref{sec:extension}.
Additional definitions and all proofs are deferred to the appendix.

\subsection{Additional related work}
\label{sec:related_work}

Learning under Chow's excess error is closely related to learning under Massart noise \citep{massart2006risk}, which assumes that no data point has conditional expectation close to the decision boundary,
i.e., $\P \prn*{ \abs{\eta(x) - 1 / 2} \leq \tau_{0} } = 0$ for some constant $\tau_0> 0$.
Learning under Massart noise is commonly studied in active learning \citep{balcan2007margin, hanneke2014theory, zhang2014beyond, krishnamurthy2019active}, where $\wt O(\tau_0^{-2})$ type of guarantees are achieved. 
Instead of making explicit assumptions on the underlying distribution, learning with Chow's excess error empowers the learner with the ability to abstain:
There is no need to make predictions on hard data points that are close to the decision boundary, 
i.e., $\crl{x: \abs{ \eta(x) - 1 / 2} \leq \gamma }$. 
Learning under Chow's excess error thus works on more general settings and still enjoys the 
$\wt O(\gamma^{-2})$ type of guarantee as learning under Massart noise \citep{puchkin2021exponential}.\footnote{However, passive learning with abstention only achieves error rate $\frac{1}{n \gamma}$ with $n$ samples \citep{bousquet2021fast}.}
We show in \cref{sec:standard_excess_error} that statistical guarantees achieved under Chow's excess error can be directly translated to guarantees under (usual and more challenging versions of) Massart/Tsybakov noise \citep{massart2006risk, tsybakov2004optimal}.

Active learning at aim competing the best in-class classifier with few labels. 
A long line of work directly works with the set of classifiers \citep{balcan2007margin, hanneke2007bound, hanneke2014theory, huang2015efficient, puchkin2021exponential}, where the algorithms are developed with (in general) hard-to-implement ERM oracles \citep{guruswami2009hardness} and the the guarantees dependence on the so-called disagreement coefficient \citep{hanneke2014theory}.
More recently, learning with function approximation have been studied inactive learning and contextual bandits \citep{dekel2012selective, agarwal2012contextual, foster2018practical, krishnamurthy2019active}.
The function approximation scheme permits efficient regression oracles, which solve convex optimization problems with respect to regression functions \citep{krishnamurthy2017active,krishnamurthy2019active,foster2018practical}.
It can also be analyzed with the scale-sensitive version of disagreement coefficient, which is usually tighter than the original one \citep{foster2020instance, russo2013eluder}.
Our algorithms are inspired \citet{krishnamurthy2019active}, where the authors study active learning under the standard excess error. 
The main deviation from \citet{krishnamurthy2019active} is that we need to \emph{manually} construct a classifier $\wh h$ with an abstention option and $\wh h \notin \cH$, which leads to differences in the analysis of excess error and label complexity.
We borrow techniques developed in contextual bandits \citet{russo2013eluder, foster2020instance} to analyze our algorithm.

Although one can also apply our algorithms in the nonparametric regime with proper pre-processing schemes such discretizations, our algorithm primarily works in the parametric setting with finite pseudo dimension \citep{haussler1995sphere} and finite (value function) disagreement coefficient \citep{foster2020instance}.
Active learning has also been studied in the nonparametric regime \citep{castro2008minimax,koltchinskii2010rademacher, minsker2012plug, locatelli2017adaptivity}.
Notably, \citet{shekhar2021active} studies Chow's excess error with margin-type of assumptions. 
Their setting is different to ours and $\poly(\frac{1}{\eps})$ label complexities are achieved.
If abundant amounts of data points are allowed to be exactly at the decision boundary, i.e., $\eta(x) = \frac{1}{2}$,
\citet{kpotufe2021nuances} recently shows that, in the nonparametric regime, no active learner can outperform the passive counterpart.

\section{Efficient active learning with abstention}
\label{sec:epoch}
We provide our main algorithm (\cref{alg:epoch}) in this section. \cref{alg:epoch} is an adaptation of the algorithm developed in \citet{krishnamurthy2017active, krishnamurthy2019active}, which studies active learning under the standard excess error (and Massart/Tsybakov noises).
We additionally take the abstention option into consideration, and \emph{manually construct} classifiers using the active set of (uneliminated) regression functions (which do not belong to the original hypothesis class).
These new elements allow us to achieve $\eps$ Chow's excess error with $\polylog(\frac{1}{\eps})$ label complexity, without any low noise assumptions.

\begin{algorithm}[]
	\caption{Efficient Active Learning with Abstention}
	\label{alg:epoch} 
	\renewcommand{\algorithmicrequire}{\textbf{Input:}}
	\renewcommand{\algorithmicensure}{\textbf{Output:}}
	\newcommand{\algorithmicbreak}{\textbf{break}}
    \newcommand{\BREAK}{\STATE \algorithmicbreak}
	\begin{algorithmic}[1]
		\REQUIRE Accuracy level $\eps >0 $, abstention parameter $\gamma \in (0, 1/2)$ and confidence level $\delta \in (0, 1)$.
		\STATE 
Define $T \ldef \wt O \prn{\frac{\theta \, \pseud(\cF)}{\eps \, \gamma}}$,
		$M \ldef \ceil{\log_2 T}$ and $C_\delta \ldef O \prn*{\pseud(\cF) \cdot \log( T/\delta)}$.
		
		\STATE Define $\tau_m \ldef 2^m$ for $m\geq1$, $\tau_0 \ldef 0$ and $\beta_m \ldef (M-m+1) \cdot C_\delta$. 
		\FOR{epoch $m = 1, 2, \dots, M$}
		\STATE Get $\widehat f_m \ldef \argmin_{f \in \cF} \sum_{t=1}^{\tau_{m-1}} Q_t \paren{f(x_t) - y_t}^2 $.\\
		\hfill \algcommentlight{We use $Q_t \in \crl{0,1}$ to indicate whether the label of  $x_t$ is queried.}
		\STATE (Implicitly) Construct active set of regression functions $\cF_m \subseteq \cF$ as
		\begin{align*}
		    \cF_m \ldef \crl*{ f \in \cF:  \sum_{t = 1}^{\tau_{m-1}} Q_t \prn*{f(x_t) - y_t}^2 \leq \sum_{t = 1}^{\tau_{m-1}} Q_t \paren{\widehat f_m(x_t) - y_t}^2 + \beta_m }. 
\end{align*}
		\STATE Construct classifier $\wh h_m: \cX \rightarrow \crl{0, 1,\bot}$ as 
		\begin{align*}
			\wh h_m (x) \ldef 
			\begin{cases}
				\bot, & \text{ if } \brk { \lcb(x;\cF_m), \ucb(x;\cF_m)} \subseteq 
				\brk*{ \frac{1}{2} - \gamma, \frac{1}{2} + \gamma}; \\
        \ind(\wh f_m(x) \geq \frac{1}{2} ) , & \text{o.w.}
			\end{cases}
\end{align*}
and construct query function $g_m(x)\ldef \ind \prn*{ \frac{1}{2} \in \prn{\lcb(x;\cF_m), \ucb(x;\cF_m)} } \cdot 
		\ind \prn{\wh h_m(x) \neq \bot}$.
		\IF{epoch $m=M$}
		\STATE \textbf{Return} classifier $\wh h_M$.
		\ENDIF
		\FOR{time $t = \tau_{m-1} + 1 ,\ldots , \tau_{m} $} 
		\STATE Observe $x_t \sim \cD_{\cX}$. Set $Q_t \ldef g_m(x_t)$.
		\IF{$Q_t = 1$}
		\STATE Query the label $y_t$ of $x_t$.
		\ENDIF
		\ENDFOR
		\ENDFOR

	\end{algorithmic}
\end{algorithm}

\cref{alg:epoch} runs in epochs of geometrically increasing lengths.
At the beginning of epoch $m \in [M]$, \cref{alg:epoch} first computes the empirical best regression function $\wh f_m$ that achieves the smallest cumulative square loss over previously labeled data points ($\wh f_1$ can be selected arbitrarily); it then (implicitly) constructs an active set of regression functions $\cF_m$, where the cumulative square loss of each $f \in \cF_m$ is not too much larger than the cumulative square loss of empirical best regression function $\wh f_m$. 
For any $x \in \cX$, based on the active set of regression functions, \cref{alg:epoch} constructs a lower bound $\lcb(x;\cF_m) \ldef \inf_{f \in \cF_m} f(x)$ and an upper bound $\ucb(x;\cF_m) \ldef \sup_{f \in \cF_m} f(x)$ for the true conditional probability $\eta(x)$.
An empirical classifier $\wh h_m: \cX \rightarrow \crl{0, 1, \bot}$ and a query function $g_m:\cX \rightarrow \crl{0, 1}$ are then constructed based on these confidence ranges and the abstention parameter $\gamma$.
For any time step $t$ within epoch $m$, \cref{alg:epoch} queries the label of the observed data point $x_t$ if and only if $Q_t \ldef g_m(x_t) = 1$.
\cref{alg:epoch} returns $\wh h_M$ as the learned classifier.

We now discuss the empirical classifier $\wh h_m$ and the query function $g_m$ in more detail.
Consider the event where $f^\star \in \cF_m$ for all $m \in [M]$, which can be shown to hold with high probability.
The constructed confidence intervals are valid under this event, i.e., $\eta(x) \in [\lcb(x;\cF_m), \ucb(x;\cF_m)]$.
First, let us examine the conditions that determine a label query.  The label of $x$ is \emph{not} queried if
\begin{itemize}
	\item \textbf{Case 1: $\wh h_m(x) = \bot$.} We have $\eta(x) \in \brk{ \lcb(x;\cF_m) , \ucb(x;\cF_m)} \subseteq \brk{ \frac{1}{2} - \gamma, \frac{1}{2}+\gamma}$.
Abstention leads to the smallest error \citep{herbei2006classification}, and no query is needed.
  \item \textbf{Case 2: $\frac{1}{2} \notin ( \lcb(x;\cF_m) , \ucb(x;\cF_m))$.} We have $\ind(\wh f_m(x)\geq \frac{1}{2}) = \ind(f^{\star}(x)\geq \frac{1}{2})$. Thus, no excess error is incurred and there is no need to query.
\end{itemize}
The only case when label query \emph{is} issued, and thus when the classifier $\wh h_m$ may suffer from excess error, is when 
\begin{align}
\frac{1}{2} \in ( \lcb(x;\cF_m) , \ucb(x;\cF_m)) \quad \text{and} \quad 
\brk*{ \lcb(x;\cF_m) , \ucb(x;\cF_m)} \nsubseteq \brk*{ \frac{1}{2} - \gamma, \frac{1}{2}+\gamma} 
\label{eq:epoch_query}
\end{align}
hold simultaneously. 
\cref{eq:epoch_query} necessarily leads to the condition $w(x;\cF_m) \ldef \ucb(x;\cF_m) - \lcb(x;\cF_m) > {\gamma}  $.
Our theoretical analysis shows that the event must $\ind( w(x;\cF_m) > \gamma)$ happens infrequently, and its frequency is closely related to the so-called \emph{value function disagreement coefficient} \citep{foster2020instance}, which we introduce as follows.\footnote{Compared to the original definition studied in contextual bandits \citep{foster2020instance}, our definition takes an additional ``sup'' over all possible marginal distributions $\cD_\cX$ to account for \emph{distributional shifts} incurred by selective querying (which do not occur in contextual bandits). Nevertheless, as we show below, our disagreement coefficient is still well-bounded for many important function classes.}
\begin{definition}[Value function disagreement coefficient]
    \label{def:dis_coeff}
    For any $f^{\star} \in \cF$ and $\gamma_0, \eps_0 > 0$,
    the value function disagreement coefficient $	 \theta^{\val}_{f^{\star}}(\cF, \gamma_0, \eps_0)$ is defined as 
    \begin{align*}
 \sup_{\cD_\cX}\sup_{\gamma> \gamma_0, \eps> \eps_0} 
	 \crl*{ \frac{\gamma^2}{\epsilon^2} \cdot 
	\P_{\cD_\cX} \prn*{ \exists f \in \cF: \abs{f(x) - f^{\star}(x)} > \gamma,
	\nrm*{ f - f^{\star}}_{\cD_\cX} \leq \eps} } \vee 1,
    \end{align*}
    where $\nrm{f}^2_{\cD_\cX} \ldef \E_{x \sim \cD_\cX} \brk{f^2(x)}$.
\end{definition}

Combining the insights discussed above, we derive the following label complexity guarantee for \cref{alg:epoch} (we use $\theta \ldef \sup_{f^{\star} \in \cF, \iota > 0} \theta^{\val}_{f^{\star}}(\cF,\gamma / 2, \iota)$ and discuss its boundedness below).
\footnote{It suffices to take $\theta \ldef \theta^{\val}_{f^{\star}}(\cF, \gamma /2, \iota)$ with $\iota \propto \sqrt{\gamma \eps}$ to derive a slightly different guarantee. See \cref{app:epoch}.}
\begin{restatable}{theorem}{thmEpoch}
\label{thm:epoch}
With probability at least $1-2\delta$, \cref{alg:epoch} returns a classifier with Chow's excess error at most $\epsilon$ and label complexity 
$O\prn{ \frac{\theta \, \pseud(\cF)}{\gamma^2} \cdot \log^2 \prn{\frac{\theta \, \pseud(\cF)}{\eps \, \gamma}}  \cdot  {\log\prn{\frac{\theta \, \pseud(\cF)}{\eps \, \gamma \,\delta}}}}$.
\end{restatable}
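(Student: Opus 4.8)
The plan is to condition on a single high-probability ``good event'' that controls the square-loss confidence sets, and then split the analysis into a pointwise excess-error bound (for Chow's excess error) and a per-epoch query-probability bound (for label complexity). First I would establish the good event $\mathcal E$. Under realizability ($f^\star=\eta$) the offset $(f(x)-y)^2-(f^\star(x)-y)^2$ has conditional mean $(f(x)-f^\star(x))^2$, so a Freedman/Bernstein martingale inequality combined with a Pseudo-dimension covering bound for $\cF$ yields, simultaneously for all epochs $m$ and all $f\in\cF$, a two-sided comparison between the empirical offset loss and $\sum_{t\le\tau_{m-1}}Q_t(f(x_t)-f^\star(x_t))^2$ up to the additive slack $C_\delta=O(\pseud(\cF)\log(T/\delta))$; the $\log(T/\delta)$ absorbs the union bound over the $M=O(\log T)$ epochs. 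On $\mathcal E$ this gives the two facts I use repeatedly: (i) $f^\star\in\cF_m$ for every $m$, so the intervals are valid, i.e.\ $\eta(x)\in[\lcb(x;\cF_m),\ucb(x;\cF_m)]$; and (ii) every $f\in\cF_m$ satisfies $\sum_{t\le\tau_{m-1}}Q_t(f(x_t)-f^\star(x_t))^2\lesssim\beta_m$. I would also verify here that the choice $\beta_m=(M-m+1)C_\delta$ --- one extra $C_\delta$ of buffer per remaining epoch --- forces the confidence sets to nest, $\cF_M\subseteq\cdots\subseteq\cF_1$, on $\mathcal E$: the buffer absorbs the concentration slack when passing from the data up to $\tau_m$ back to the data up to $\tau_{m-1}$. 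Nesting of $\cF_m$ immediately gives nesting of the query regions $\{g_M=1\}\subseteq\cdots\subseteq\{g_1=1\}$, which is what makes the change-of-measure step below possible.

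Next, the Chow excess-error bound, argued pointwise against $h^\star$. On $\{\wh h_M=\bot\}$ validity gives $\eta(x)\in[\tfrac12-\gamma,\tfrac12+\gamma]$, so abstaining costs $(\tfrac12-\gamma)-\min(\eta,1-\eta)\le0$ --- the proper-abstention property, contributing no excess error. On $\{\tfrac12\notin(\lcb,\ucb)\}$ all of $\cF_M$ (hence $\wh f_M$ and $f^\star$) lie on one side of $\tfrac12$, so $\sgn(2\wh f_M(x)-1)=h^\star(x)$ and again there is no excess error. The only contribution is the epoch-$M$ query region $\{g_M=1\}$, where $\tfrac12\in(\lcb,\ucb)$ forces $|\eta(x)-\tfrac12|\le w(x;\cF_M)$ and hence pointwise excess $\le 2w(x;\cF_M)$. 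Thus Chow's excess error is at most $2\,\E_{\cD_\cX}[w(x;\cF_M)\ind(g_M=1)]$, which I bound by Cauchy--Schwarz as $2\sqrt{\P(g_M=1)}\cdot\sqrt{\E[w(x;\cF_M)^2\ind(g_M=1)]}$; the second factor is controlled by fact (ii) at $m=M$ (turning the empirical query-weighted loss bound into $\E_{\cD_\cX}[w^2\ind(g_{M-1}=1)]\lesssim\beta_M/\tau_{M-2}$ via concentration and nesting), yielding excess error $\lesssim\sqrt\theta\,C_\delta/(\gamma T)$, which is $\le\eps$ for the prescribed $T=\wt O(\theta\,\pseud(\cF)/(\eps\gamma))$.

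For label complexity I would bound, for each epoch $m$, the query probability $Z_m:=\P_{\cD_\cX}(g_m(x)=1)$. On $\mathcal E$, $g_m(x)=1$ forces $w(x;\cF_m)>\gamma$, and since $f^\star\in\cF_m$ this forces some $f\in\cF_m$ with $|f(x)-f^\star(x)|>\gamma/2$; so $Z_m$ is exactly the disagreement probability that the value-function disagreement coefficient controls. Using nesting to write $Z_m=Z_{m-1}\,\P_{\mu_{m-1}}(g_m=1)$, where $\mu_{m-1}$ is $\cD_\cX$ restricted to the previous query region, I apply \cref{def:dis_coeff} with the marginal $\mu_{m-1}$ (legitimate because the coefficient takes a supremum over all marginals) and radius $r^2=\sup_{f\in\cF_m}\nrm{f-f^\star}_{\mu_{m-1}}^2\le\beta_m/(Z_{m-1}\tau_{m-2})$ (again from fact (ii) restricted to the epoch-$(m-1)$ batch). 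This gives $Z_m\le Z_{m-1}\cdot\theta\,r^2/(\gamma/2)^2\lesssim\theta\beta_m/(\gamma^2\tau_{m-2})$ --- crucially the $Z_{m-1}$ cancels. Multiplying by the epoch length $\tau_m-\tau_{m-1}=\tau_{m-1}=2\tau_{m-2}$ gives expected queries $\lesssim\theta\beta_m/\gamma^2$ per epoch, and summing $\sum_m\beta_m=C_\delta\sum_m(M-m+1)=O(M^2C_\delta)$ produces the stated $O\big(\tfrac{\theta\,\pseud(\cF)}{\gamma^2}\log^2(\cdot)\log(\cdot)\big)$ bound (with $M=O(\log T)$ and $C_\delta=O(\pseud(\cF)\log(T/\delta))$); a final Bernstein step converts expected into high-probability query counts, costing the second $\delta$.

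I expect the main obstacle to be the change-of-measure at the heart of both bounds: the concentration only controls $f-f^\star$ in the \emph{query-weighted} empirical measure, whereas the disagreement coefficient and the error both live under the marginal $\cD_\cX$. Making this rigorous requires (a) the nesting of query regions, so that epoch-$m$ disagreement can be measured against the normalized epoch-$(m-1)$ query distribution $\mu_{m-1}$, and (b) a \emph{uniform}-over-$f\in\cF_m$ control of the width $w(x;\cF_m)=\sup_f f-\inf_f f$ (rather than a per-$f$ bound), which I would obtain from the same covering argument underlying $\mathcal E$. The supremum over marginals in \cref{def:dis_coeff} is exactly what licenses applying the coefficient to the data-dependent, shifted measure $\mu_{m-1}$.
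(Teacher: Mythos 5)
Your overall architecture --- the good event giving $f^\star\in\cF_m$ together with the per-$f$ bound $\sum_t Q_t(f(x_t)-f^\star(x_t))^2\lesssim\beta_m$ for all $f\in\cF_m$, the nesting of confidence sets and hence of query regions, the pointwise case analysis showing that only the query region contributes excess error (at most $2w(x;\cF_M)$ per point), and the label-complexity recursion in which $Z_{m-1}$ cancels after changing measure to the normalized previous query region --- is exactly the paper's proof. The label-complexity half is correct as written: the paper uses the time-weighted mixture of \emph{all} previous epochs' restricted measures rather than only epoch $m-1$, but this changes nothing beyond constants, and its proposition on sub-probability measures licenses precisely the normalization step you describe.

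The gap is in the excess-error half. After reducing to $2\,\E_{\cD_\cX}\brk{w(x;\cF_M)\ind(g_M(x)=1)}$ you apply Cauchy--Schwarz and claim the second factor satisfies $\E\brk{w^2\ind}\lesssim\beta_M/\tau_{M-2}$ ``from fact (ii).'' But fact (ii) is a per-$f$ statement: it yields $\sup_{f\in\cF_M}\E\brk{(f-f^\star)^2}\lesssim\beta_M/\tau_{M-1}$ under the relevant measure, whereas $w(x;\cF_M)^2\leq 4\sup_{f\in\cF_M}(f(x)-f^\star(x))^2$ requires a bound on $\E\brk{\sup_f (f-f^\star)^2}$. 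The gap between $\E\brk{\sup_f\cdot}$ and $\sup_f\E\brk{\cdot}$ is precisely what the disagreement coefficient exists to control, and your proposed fix --- ``the same covering argument underlying $\cE$'' --- cannot close it: uniform convergence over $\cF$ is already what delivers the per-$f$ population bound and says nothing about interchanging the supremum with the expectation. The telltale symptom is that your final excess-error bound scales as $\sqrt{\theta}/(\gamma T)$, a spurious $\sqrt{\theta}$ improvement over the theorem statement. The paper instead bounds $\E\brk{w\,\ind(w>\gamma)}$ directly by the layer-cake formula $2\int_{\gamma/2}^{1}\P\prn{\sup_{f\in\cF_M}\abs{f(x)-f^\star(x)}\geq\omega}\,d\omega$ under the mixture measure and applies the disagreement coefficient at every level $\omega$, each level contributing at most $\theta\cdot(\rho_M/\tau_{M-1})/\omega^2$ with $\rho_M=2\beta_M+C_\delta$; integrating over $\omega\in[\gamma/2,1]$ gives the linear-in-$\theta$ bound of order $\theta\rho_M/(\tau_{M-1}\gamma)$, which is what forces the stated choice of $T$. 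Replacing your Cauchy--Schwarz step with this integral argument repairs the proof and recovers the theorem exactly.
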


\cref{thm:epoch} shows that \cref{alg:epoch} achieves exponential label savings (i.e., $\polylog(\frac{1}{\eps})$) without any low noise assumptions. 
We discuss the result in more detail next.
\begin{itemize}
	\item \textbf{Boundedness of $\theta$.}
	The value function disagreement coefficient is well-bounded for many function classes of practical interests.
For instance, we have $\theta \leq d$ for linear functions on $\R^d$ and $\theta \leq C_{\textsf{link}} \cdot d$ for generalized linear functions (where $C_{\textsf{link}}$ is a quantity related to the link function).
Moreover, $\theta$ is \emph{always} upper bounded by complexity measures such as (squared) star number and eluder dimension \citep{foster2020instance}. See \cref{app:star_eluder} for the detailed definitions/bounds. 
\item \textbf{Comparison to \citet{puchkin2021exponential}.}
The label complexity bound derived in \cref{thm:epoch} is similar to the one derived in \citet{puchkin2021exponential}, with one critical difference: The bound derived in \citet{puchkin2021exponential} is in terms of \emph{classifier-based} disagreement coefficient $\check \theta$ \citep{hanneke2014theory}. Even in the case with linear classifiers, $\check \theta$ is only known to be bounded under additional assumptions, e.g., when $\cD_\cX$ is uniform over the unit sphere.
\end{itemize}

\paragraph{Computational efficiency} 
We discuss how to efficiently implement \cref{alg:epoch} with the regression oracle defined in \cref{eq:regression_oracle}.
\footnote{Recall that the implementation of the regression oracle should be viewed as an efficient operation since it solves a convex optimization problem with respect to the regression function, and it even admits closed-form solutions in many cases, e.g., it is reduced to least squares when $f$ is linear. On the other hand, the ERM oracle used in \citet{puchkin2021exponential} is NP-hard even for a set of linear classifiers \citep{guruswami2009hardness}.}
Our implementation relies on subroutines developed in \citet{krishnamurthy2017active, foster2018practical}, which allow us to approximate confidence bounds $\ucb(x;\cF_m)$ and $\lcb(x;\cF_m)$ up to $\alpha$ approximation error with $O(\frac{1}{\alpha^{2}} \log \frac{1}{\alpha})$ (or $O(\log \frac{1}{\alpha})$ when $\cF$ is convex and closed under pointwise convergence) calls to the regression oracle.
To achieve the same theoretical guarantees shown in \cref{thm:epoch} (up to changes in constant terms),
we show that it suffices to (i) control the approximation error at level $O(\frac{\gamma}{\log T})$, (ii) construct the approximated confidence bounds $\wh \lcb(x;\cF_m)$ and  $\wh \ucb(x;\cF_m)$ in a way such that the confidence region is  non-increasing with respect to the epoch $m$, i.e., $(\wh \lcb(x; \cF_m), \wh \ucb(x;\cF_m)) \subseteq (\wh \lcb(x; \cF_{m-1}), \wh \ucb(x;\cF_{m-1}))$ (this ensures that the sampling region is non-increasing even with \emph{approximated} confidence bounds, which is important to our theoretical analysis), and (iii) use the approximated confidence bounds $\wh \lcb(x;\cF_m)$ and  $\wh \ucb(x;\cF_m)$ to construct the classifier $\wh h_m$ and the query function $g_m$.
We provide our guarantees as follows, and leave details to \cref{app:epoch}
(we redefine $\theta \ldef \sup_{f^{\star} \in \cF, \iota > 0} \theta^{\val}_{f^{\star}}(\cF,\gamma / 4, \iota)$ in the \cref{thm:epoch_efficient} to account to approximation error).
\begin{restatable}{theorem}{thmEpochEfficient}
	\label{thm:epoch_efficient}
	\cref{alg:epoch} can be efficiently implemented via the regression oracle and enjoys the same theoretical guarantees stated in \cref{thm:epoch}.
	The number of oracle calls needed is $\wt O(\frac{\theta \, \pseud(\cF)}{\eps \, \gamma^{3}})$ for a general set of regression functions $\cF$, and $\wt O(\frac{\theta \, \pseud(\cF)}{\eps \, \gamma})$ when $\cF$ is convex and closed under pointwise convergence.
	The per-example inference time of the learned $\wh h_{M}$ is $\wt O ( \frac{1}{\gamma^2} \log^2 \prn{\frac{\theta \, \pseud(\cF)}{\eps }})$ for general $\cF$, and $\wt O ( \log \frac{1}{\gamma}) $ when $\cF$ is convex and closed under pointwise convergence.
\end{restatable}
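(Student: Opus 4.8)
The plan is to show that every primitive operation in \cref{alg:epoch} reduces to a bounded number of regression-oracle calls, and then to re-run the statistical analysis of \cref{thm:epoch} with the resulting \emph{approximate} confidence bounds substituted for the exact ones. Computing $\wh f_m$ is a single oracle call by definition of \cref{eq:regression_oracle}, so the only nontrivial primitive is the confidence bound: $\ucb(x;\cF_m) = \sup_{f\in\cF_m} f(x)$ is a constrained optimization over the implicitly-defined set $\cF_m$. Following \citet{krishnamurthy2017active, foster2018practical}, I would solve it by a Lagrangian reformulation — appending a heavily-weighted pseudo-example at $x$ that pushes $f(x)$ toward a target value, calling the oracle, and checking the incurred square loss against $\beta_m$ — combined with a binary search over the target value. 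This yields an additive $\alpha$-approximation to each bound using $O(\alpha^{-2}\log\frac1\alpha)$ oracle calls in general, and $O(\log\frac1\alpha)$ calls when $\cF$ is convex and closed under pointwise convergence, where the subproblem is itself convex and a clean binary search suffices.

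Next I would fix the approximation level and enforce the structural property the analysis relies on. Setting $\alpha = \Theta(\gamma/\log T)$, I would define the approximate bounds \emph{nested across epochs}, $\wh\ucb(x;\cF_m) = \min_{m'\le m}\wh\ucb^{\mathrm{raw}}(x;\cF_{m'})$ and symmetrically for $\wh\lcb$. Since the exact sets satisfy $\cF_1 \supseteq \cdots \supseteq \cF_M$, this running intersection keeps the per-epoch additive error at $O(\alpha)$ while guaranteeing $(\wh\lcb(x;\cF_m),\wh\ucb(x;\cF_m)) \subseteq (\wh\lcb(x;\cF_{m-1}),\wh\ucb(x;\cF_{m-1}))$. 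This monotonicity is exactly what makes the sampling region non-increasing — a property the query-complexity argument uses when invoking the disagreement coefficient — and it would be destroyed by independent per-epoch approximation.

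I would then repeat the proof of \cref{thm:epoch} verbatim with $(\wh\lcb,\wh\ucb)$ in place of the exact bounds, tracking the $O(\alpha)$ slack. Under the good event $f^\star\in\cF_m$ the inflated intervals still contain $\eta(x)$; the two no-query cases (abstention-optimal, and $\frac12$ outside the interval) go through with the thresholds $\frac12\pm\gamma$ and $\frac12$ perturbed by $O(\alpha)$; and the only-query condition \cref{eq:epoch_query} still forces width $w(x;\cF_m) = \Omega(\gamma)$. Because every threshold shifts by at most $O(\alpha) = O(\gamma/\log T)$ and these shifts are absorbed by passing from $\gamma$ to $\gamma/4$ in the disagreement coefficient (hence the redefinition $\theta \ldef \sup_{f^\star\in\cF,\iota>0}\theta^{\val}_{f^\star}(\cF,\gamma/4,\iota)$), the label-complexity and Chow-excess-error conclusions of \cref{thm:epoch} are preserved up to constants. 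The oracle-call count then follows by multiplying the number of processed examples $\wt O(T) = \wt O(\frac{\theta\,\pseud(\cF)}{\eps\gamma})$ by the per-bound cost $O(\alpha^{-2}\log\frac1\alpha) = \wt O(\gamma^{-2})$ in general, giving $\wt O(\frac{\theta\,\pseud(\cF)}{\eps\gamma^3})$, and $\wt O(1)$ in the convex case, giving $\wt O(\frac{\theta\,\pseud(\cF)}{\eps\gamma})$; the $M$ start-of-epoch calls for the $\wh f_m$ are lower order. The per-example inference time is a single confidence-bound evaluation at the final epoch, $\wt O(\frac1{\gamma^2}\log^2(\frac{\theta\,\pseud(\cF)}{\eps}))$ in general (as $\alpha^{-2} = \Theta(\gamma^{-2}\log^2 T)$) and $\wt O(\log\frac1\gamma)$ when $\cF$ is convex.

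The main obstacle I anticipate is item (ii): reconciling the nesting requirement with both the cost budget and the error analysis. Naively intersecting raw bounds over all epochs risks either accumulating error or inflating the oracle count by a factor of $M$, so the argument must verify simultaneously that the running-intersection construction (a) preserves the $O(\alpha)$ accuracy using only the exact nesting of the $\cF_m$, (b) does not increase the asymptotic oracle count, and (c) leaves the disagreement-coefficient-based query bound intact after the thresholds are perturbed — which is precisely what forces the $\Theta(\gamma/\log T)$ choice of $\alpha$ and the $\gamma/4$ redefinition of $\theta$.
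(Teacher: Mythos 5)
Your proposal is correct and follows the same overall strategy as the paper: approximate $\lcb(x;\cF_m)$ and $\ucb(x;\cF_m)$ via the oracle subroutines of \citet{krishnamurthy2017active, foster2018practical} at accuracy $\alpha = \Theta(\gamma/\log T)$, enforce monotonicity of the confidence intervals across epochs so the sampling regions remain nested, rerun the \cref{thm:epoch} analysis with the disagreement coefficient at scale $\gamma/4$, and multiply the per-round cost by $T$. The one genuine difference is the mechanism for enforcing nesting. You take a running intersection $\wh\ucb(x;\cF_m) = \min_{m'\le m}\wh\ucb^{\mathrm{raw}}(x;\cF_{m'})$, which is correct (it contains the exact interval and inflates it by at most $\alpha$ per side, using $\cF_m \subseteq \cF_{m-1}$), but it requires recomputing the raw bounds for all $m' \le m$ on each fresh example, an extra factor of $M = O(\log T)$ in the per-example oracle count that is absorbed only because the bounds are stated in $\wt O$. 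The paper instead pads the single current-epoch raw bound outward by a \emph{decreasing} amount $\alpha_m \ldef \frac{(M-m)\gamma}{4M}$, chosen so that $\alpha_m + \wb\alpha = \alpha_{m-1}$ with $\wb\alpha \ldef \frac{\gamma}{4M}$; the telescoping padding plus exact nesting of the $\cF_m$ then yields $(\wh\lcb(x;\cF_m),\wh\ucb(x;\cF_m)) \subseteq (\wh\lcb(x;\cF_{m-1}),\wh\ucb(x;\cF_{m-1}))$ with only one pair of raw-bound computations per example. Both constructions give valid, $O(\gamma/4)$-accurate, nested intervals containing $\eta(x)$ under the good event, so the remainder of your argument — and the resulting oracle-call and inference-time bounds — goes through.
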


With \cref{thm:epoch_efficient}, we provide the first computationally efficient active learning algorithm that achieves exponential label savings, without any low noise assumptions.

\section{Guarantees under standard excess error}
\label{sec:standard_excess_error}
We provide guarantees for \cref{alg:epoch} under \emph{standard} excess error. 
In \cref{sec:minimax}, we show that \cref{alg:epoch} can be used to recover the usual minimax label complexity under Massart/Tsybakov noise; we also provide a new learning paradigm based on \cref{alg:epoch} under limited budget.
In \cref{sec:noise_seeking}, we show that \cref{alg:epoch} provably avoid the undesired \emph{noise-seeking} behavior often seen in active learning.

\subsection{Recovering minimax optimal label complexity}
\label{sec:minimax}

One way to convert an abstaining classifier $\widehat h: \cX \rightarrow \cY \cup \curly*{\bot}$ into a standard classifier $\check h: \cX \rightarrow \cY$ is by randomizing the prediction in its abstention region, i.e., if $\wh h(x) = \bot$, then its randomized version $\check h(x)$ predicts $0$ and $1$ with equal probability \citep{puchkin2021exponential}. With such randomization, the \emph{standard excess error} of $\check h$ can be characterized as   
\begin{align}
	\label{eq:randomization}
    \err(\check h) - \err(h^\star) = \err_{\gamma}(\widehat h) - \err(h^\star) + \gamma \cdot \P_{x \sim \cD_{\cX }} (\widehat h(x) = \bot).
\end{align}
The standard excess error depends on the (random) abstention region of $\wh h$, which is difficult to quantify in general. 
To give a more practical characterization of the standard excess error, we introduce the concept of proper abstention in the following. 

\begin{definition}[Proper abstention]
\label{def:proper_abstention}
A classifier $\widehat h : \cX \rightarrow \cY \cup \curly*{\bot}$ enjoys proper abstention if and only if it abstains in regions where abstention is indeed the optimal choice, i.e., 
$\crl[\big]{x \in \cX: \widehat h(x) = \bot} \subseteq \crl*{x \in \cX: \eta(x) \in \brk*{\frac{1}{2} - \gamma , \frac{1}{2} + \gamma  } } \rdef \cX_\gamma$.
\end{definition}

\begin{restatable}{proposition}{propProperAbs}
\label{prop:proper_abstention}
The classifier $\wh h$ returned by \cref{alg:epoch} enjoys proper abstention. With randomization over the abstention region, we have the following upper bound on its standard excess error
\begin{align}
	\label{eq:prop_abstention}
    \err(\check h) - \err(h^\star)  
     \leq \err_{\gamma}(\widehat h) - \err(h^\star) + \gamma \cdot \P_{x \sim \cD_{\cX}} (x \in \cX_{\gamma}).
\end{align}
\end{restatable}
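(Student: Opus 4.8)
The plan is to prove the two assertions in turn, both on the high-probability event $\cE \ldef \crl{f^\star \in \cF_m \text{ for all } m \in [M]}$ that already underlies \cref{thm:epoch}. I would first establish proper abstention. On $\cE$ we have $f^\star \in \cF_M$, so by the very definitions $\lcb(x;\cF_M) = \inf_{f \in \cF_M} f(x) \le f^\star(x)$ and $\ucb(x;\cF_M) = \sup_{f \in \cF_M} f(x) \ge f^\star(x)$; since $f^\star = \eta$ under \cref{asmp:predictable}, the confidence interval is valid, i.e.\ $\eta(x) \in \brk{\lcb(x;\cF_M), \ucb(x;\cF_M)}$ for every $x$. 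The returned classifier $\wh h_M$ abstains precisely when $\brk{\lcb(x;\cF_M), \ucb(x;\cF_M)} \subseteq \brk{\tfrac12 - \gamma,\, \tfrac12 + \gamma}$, so composing the two containments gives $\eta(x) \in \brk{\tfrac12-\gamma,\, \tfrac12+\gamma}$ whenever $\wh h_M(x) = \bot$. This is exactly the containment $\crl{x : \wh h(x) = \bot} \subseteq \cX_\gamma$ required by \cref{def:proper_abstention}.

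Next I would convert the identity \cref{eq:randomization} into the stated bound. Writing $A \ldef \crl{x : \wh h(x) = \bot}$ for the abstention region, the identity \cref{eq:randomization} itself can be checked by conditioning on $x$: off $A$ the randomized $\check h$ agrees with $\wh h$ and contributes the same conditional error as in \cref{eq:chow_error}, while on $A$ the fair-coin prediction errs with probability exactly $\tfrac12$ irrespective of $\eta(x)$, whereas Chow's error charges only $\tfrac12 - \gamma$ there; the two therefore differ by exactly $\gamma \cdot \P_{\cD_\cX}(A)$, which is \cref{eq:randomization} after subtracting $\err(h^\star)$ from both sides. Proper abstention then supplies $\P_{\cD_\cX}(\wh h(x)=\bot) = \P_{\cD_\cX}(A) \le \P_{\cD_\cX}(x \in \cX_\gamma)$, and substituting this inequality into \cref{eq:randomization} yields the claimed \cref{eq:prop_abstention}.

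The computations are elementary, so I do not anticipate a genuine technical obstacle; the one point requiring care is that proper abstention---and hence the whole proposition---holds on the event $\cE$, so I must invoke the same high-probability guarantee (that $f^\star$ is never eliminated in any epoch) that drives \cref{thm:epoch} rather than treating the $\lcb/\ucb$ intervals as deterministically valid. For the efficient version in \cref{thm:epoch_efficient} the intervals are only approximate, so a fully rigorous treatment would additionally verify that the $O(\gamma/\log T)$ approximation error is absorbed into the $\gamma$-slack of the abstention rule; I would relegate that bookkeeping to the formal proof.
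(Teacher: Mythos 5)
Your proposal is correct and follows essentially the same route as the paper's proof: proper abstention follows because the (valid, on the event that $f^\star$ survives elimination) confidence interval containing $\eta(x)$ is itself contained in $[\tfrac12-\gamma,\tfrac12+\gamma]$ whenever $\wh h_M$ abstains, and the bound then follows by substituting $\P(\wh h(x)=\bot)\le\P(x\in\cX_\gamma)$ into \cref{eq:randomization}. Your version is simply more explicit than the paper's about the high-probability event and the verification of \cref{eq:randomization}, which is a reasonable level of added care.
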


The proper abstention property of $\wh h$ returned by \cref{alg:epoch} is achieved via conservation: $\wh h$ will avoid abstention unless it is absolutely sure that abstention is the optimal choice.\footnote{On the other hand, however, the algorithm provided in \citet{puchkin2021exponential} is very unlikely to have such property. In fact, only a small but \emph{nonzero} upper bound of abstention rate is provided (Proposition 3.6 therein) under the Massart noise with $\gamma \leq \frac{\tau_0}{2}$; yet any classifier that enjoys proper abstention should have exactly zero abstention rate.}
To characterize the standard excess error of classifier with proper abstention, we only need to upper bound the term $ \P_{x \sim \cD_{\cX}} (x \in \cX_{\gamma})$, which does \emph{not} depends on the (random) classifier $\wh h$. Instead, it only depends on the marginal distribution. 
We next introduce the common Massart/Tsybakov noise conditions.
\begin{definition}[Massart noise, \citet{massart2006risk}]
	\label{def:massart}
  A distribution $\cD_{\cX \cY}$ satisfies the Massart noise condition with parameter $\tau_0> 0$ if
  $\P_{x \sim \cD_\cX} \paren*{ \abs*{\eta(x) - 1 / 2} \leq \tau_0} = 0$.
\end{definition}
\begin{definition}[Tsybakov noise, \citet{tsybakov2004optimal}]
	\label{def:tsybakov}
  A distribution $\cD_{\cX \cY}$ satisfies the Tsybakov noise condition with parameter $\beta \geq 0$ and a universal constant $c>0$ if
  $\P_{x \sim \cD_\cX} \paren*{\abs*{\eta(x) - 1 / 2} \leq  \tau} \leq c \, \tau^{\beta}$ for any $\tau > 0$.
\end{definition}

As in \citet{balcan2007margin, hanneke2014theory}, we assume knowledge of noise parameters (e.g., $\tau_0, \beta$).
Together with the active learning lower established in \citet{castro2006upper, castro2008minimax}, and focusing on the dependence of $\eps$, our next theorem shows that \cref{alg:epoch} can be used to recover the minimax label complexity in active learning, under the \emph{standard} excess error.

\begin{restatable}{theorem}{thmStandardExcessError}
\label{thm:standard_excess_error}
With an appropriate choice of the abstention parameter $\gamma$ in \cref{alg:epoch} and randomization over the abstention region, \cref{alg:epoch} learns a classifier $\check h$ at the minimax optimal rates: To achieve $\eps$ standard excess error, it takes $\wt \Theta(\tau_0^{-2})$ labels under Massart noise and takes $\wt \Theta \prn{ {\eps}^{ - 2 / (1 + \beta)} }$ labels under Tsybakov noise.
\end{restatable}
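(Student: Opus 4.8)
The plan is to reduce the standard-excess-error guarantee to the Chow-excess-error guarantee already established in \cref{thm:epoch}, using the proper-abstention decomposition of \cref{prop:proper_abstention}. Recall that for the randomized classifier $\check h$ this decomposition reads
\[
\err(\check h) - \err(h^\star) \;\leq\; \underbrace{\err_\gamma(\wh h) - \err(h^\star)}_{\textup{(I)}} \;+\; \underbrace{\gamma \cdot \P_{x \sim \cD_\cX}(x \in \cX_\gamma)}_{\textup{(II)}},
\]
where $\cX_\gamma = \{x : \eta(x) \in [\tfrac12 - \gamma, \tfrac12 + \gamma]\}$. Term (I) is controlled directly by running \cref{alg:epoch} with a suitable Chow accuracy target and invoking \cref{thm:epoch}; term (II) is a purely distributional quantity that I would bound using the margin conditions. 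The whole proof then amounts to choosing the abstention parameter $\gamma$ so that (I) and (II) are each at most $\eps/2$, and reading off the resulting label complexity from the $\gamma^{-2}$ dependence in \cref{thm:epoch}. Throughout, $\theta$ and $\pseud(\cF)$ are $\eps$-independent (the former bounded by star number / eluder dimension), so I treat them as constants absorbed into the $\wt\Theta$ notation and track only the $\eps$ (resp.\ $\tau_0$) scaling.

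For the Massart case (\cref{def:massart}) I would take $\gamma = \tau_0$, a fixed constant independent of $\eps$. Since then $\cX_\gamma \subseteq \{x : |\eta(x) - \tfrac12| \leq \tau_0\}$ has $\cD_\cX$-measure zero, term (II) vanishes exactly, and proper abstention forces $\wh h$ never to abstain on a positive-measure set (so $\check h = \wh h$ almost surely). It therefore suffices to drive term (I) below $\eps$, which by \cref{thm:epoch} costs $\wt O(\theta\,\pseud(\cF)/\gamma^2) = \wt O(\tau_0^{-2})$ labels, with the $\polylog(1/\eps)$ factors hidden inside $\wt\Theta$. Matching this against the elementary $\Omega(\tau_0^{-2})$ lower bound for resolving the sign of $\eta(x) - \tfrac12$ at margin $\tau_0$ (as in the active-learning lower bounds of \citet{castro2006upper, castro2008minimax}) yields the claimed $\wt\Theta(\tau_0^{-2})$ rate.

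For the Tsybakov case (\cref{def:tsybakov}) term (II) no longer vanishes, but the margin condition gives $\P(x \in \cX_\gamma) = \P(|\eta(x) - \tfrac12| \leq \gamma) \leq c\,\gamma^\beta$, so (II) $\leq c\,\gamma^{1+\beta}$. I would balance the two contributions by choosing $\gamma \asymp \eps^{1/(1+\beta)}$, which makes (II) $\leq \eps/2$; running \cref{alg:epoch} with this $\gamma$ and Chow target $\eps/2$ makes (I) $\leq \eps/2$ as well, so the total standard excess error is at most $\eps$. Substituting $\gamma \asymp \eps^{1/(1+\beta)}$ into the $\gamma^{-2}$ label complexity of \cref{thm:epoch} gives $\wt O(\eps^{-2/(1+\beta)})$ labels, and the matching $\wt\Omega(\eps^{-2/(1+\beta)})$ minimax lower bound is imported from \citet{castro2006upper, castro2008minimax}.

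The step I expect to require the most care is the Tsybakov balancing, where $\gamma$ is now tied to $\eps$. I must verify that (a) the constant-factor split of the error budget between (I) and (II) is internally consistent (solving $c\,\gamma^{1+\beta} \leq \eps/2$ and simultaneously feeding $\eps/2$ as the Chow target to \cref{thm:epoch}), and (b) the value-function disagreement coefficient $\theta = \theta^{\val}_{f^\star}(\cF, \gamma/2, \cdot)$ and $\pseud(\cF)$ do not degrade as $\gamma \to 0$, so that the $\eps$-exponent of the label complexity is genuinely $-2/(1+\beta)$ and is not secretly worsened by a hidden $\gamma$-dependence in $\theta$. The remaining work — invoking the cited lower bounds to upgrade the upper bounds to $\wt\Theta$ — is standard.
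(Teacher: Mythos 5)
Your proposal is correct and follows essentially the same route as the paper: both apply the proper-abstention decomposition of \cref{prop:proper_abstention}, set $\gamma=\tau_0$ in the Massart case so the abstention term vanishes, and set $\gamma \asymp (\eps/c)^{1/(1+\beta)}$ in the Tsybakov case to balance the two error terms at $\eps/2$ each, then read the label complexity off the $\gamma^{-2}$ dependence in \cref{thm:epoch} and invoke the lower bounds of \citet{castro2006upper, castro2008minimax}. Your additional care about the $\gamma$-dependence of $\theta$ is a reasonable sanity check that the paper handles via the $\gamma$-uniform bounds in \cref{prop:dis_coeff_bound}.
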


\begin{remark}
    In addition to recovering the minimax rates, the proper abstention property is desirable in practice:
It guarantees that $\wh h$ will not abstain on easy examples, i.e., it will not mistakenly flag easy examples as ``hard-to-classify'', thus eliminating unnecessary human labeling efforts.
\end{remark}

\cref{alg:epoch} can also be used to provide new learning paradigms in the limited budget setting, which we introduce below.
No prior knowledge of noise parameters are required in this setup.

\paragraph{New learning paradigm under limited budget} 
Given any labeling budget $B>0$, we can then choose $\gamma \approx {B}^{-1/2}$ in \cref{alg:epoch} to make sure the label complexity is never greater than $B$ (with high probability).
The learned classifier enjoys Chow's excess error (with parameter $\gamma $) at most $\eps$; its standard excess error (with randomization over the abstention region) can be analyzed by relating the $\gamma \cdot \P_{x \sim \cD_\cX} \prn{ x \in \cX_{\gamma}}$ term in \cref{eq:prop_abstention} to the Massart/Tsybakov noise conditions, as discussed above.

\subsection{Abstention to avoid noise-seeking}
\label{sec:noise_seeking}

Active learning algorithms sometimes exhibit \emph{noise-seeking} behaviors,
i.e., oversampling in regions where $\eta(x)$ is close to the $\frac{1}{2}$ level.
Such noise-seeking behavior is known to be a fundamental barrier to achieve low label complexity (under standard excess error), e.g., see \citet{kaariainen2006active}.
We show in this section that abstention naturally helps avoiding noise-seeking behaviors and speeding up active learning.

To better illustrate how properly abstaining classifiers avoid noise-seeking behavior, we first introduce new noise conditions below, which strictly generalize the usual Massart/Tsybakov noises.
\begin{definition}[Noise-seeking Massart noise]
	\label{def:noise_seeking_Massart}
  A distribution $\cD_{\cX \cY}$ satisfies the noise-seeking Massart noise condition with parameters $0 \leq \zeta_0 < \tau_0 \leq 1 /2  $ if $\P_{x \sim \cD_\cX} \prn{\zeta_0 < \abs{ \eta(x) - 1 / 2} \leq \tau_0} = 0$.	
\end{definition}

\begin{definition}[Noise-seeking Tsybakov noise]
	\label{def:noise_seeking_Tsybakov}
  A distribution $\cD_{\cX \cY}$ satisfies the noise-seeking Tsybakov noise condition with parameters $0 \leq \zeta_0 < 1 /2  $, $\beta \geq 0$ and a universal constant $c>0$ if $\P_{x \sim \cD_\cX} \prn{ \zeta_0 < \abs{ \eta(x) - 1 / 2} \leq \tau} \leq  c\, \tau^\beta$ for any $\tau > \zeta_0$.	
\end{definition}

Compared to the standard Massart/Tsybakov noises, these newly introduced noise-seeking conditions allow arbitrary probability mass of data points whose conditional probability $\eta(x)$ is equal/close to $1/2$. 
As a result, they can trick standard active learning algorithms into exhibiting the noise-seeking bahaviors (and hence their names).
We also mention that the parameter $\zeta_0$ should be considered as an \emph{extremely small quantity} (e.g., $\zeta_0 \ll \eps$), with the extreme case corresponding to $\zeta_0 = 0$ (which still allow arbitrary probability for region $\crl{x \in \cX: \eta(x) = 1 /2 }$).

Ideally, any active learning algorithm should not be heavily affected by these noise conditions since it hardly matters (in terms of excess error) what label is predicted over region $\crl{x \in \cX: \abs{\eta(x) - 1 /2} \leq \zeta_0}$.
However, these seemingly benign noise-seeking conditions can cause troubles for any ``uncertainty-based'' active learner, i.e., any active learning algorithm that requests the label for any point that is uncertain (see \cref{def:proper_learner} in \cref{app:standard} for formal definition).
In particular, under limited budget, we derive the following result.

\begin{restatable}{proposition}{propBudget}
	\label{prop:budget}
Fix $\eps, \delta, \gamma > 0$.
    For any labeling budget $B \gtrsim \frac{1}{\gamma^2} \cdot \log^{2}\prn{\frac{1}{\eps \, \gamma}} \cdot \log \prn{\frac{1}{\eps \, \gamma \, \delta}}$, there exists a learning problem (with a set of linear regression functions) satisfying \cref{def:noise_seeking_Massart}/\cref{def:noise_seeking_Tsybakov} such that 
    (1) any ``uncertainty-based'' active learner suffers expected standard excess error $\Omega(B^{-1})$;
    yet (2) with probability at least $1-\delta$, \cref{alg:epoch} returns a classifier with standard excess error at most $\eps$.
\end{restatable}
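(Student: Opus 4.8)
The plan is to exhibit a two-region instance on which the noise-seeking mass at $\eta=\tfrac12$ lures every uncertainty-based learner into exhausting its budget, while \cref{alg:epoch} simply abstains on that mass and spends its queries where they matter. Concretely, I would take $\cX=\crl{x_A,x_C}\subset\R^2$ together with a class $\cF$ of (clipped) linear maps $x\mapsto\ang{w,x}$ rich enough to realize the two profiles below, so that \cref{asmp:predictable} holds. Fix a constant margin $\Delta$ with $\max\crl{\gamma,\tau_0}<\Delta<\tfrac12$, set $\eta(x_A)=\tfrac12$ with mass $p_A\ldef 1-p_C$, and $\eta(x_C)=\tfrac12\pm\Delta$ with mass $p_C\ldef c\,\Delta^{-2}/B\asymp B^{-1}$ for a small constant $c$. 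Since the only attained margins are $0$ and $\Delta>\tau_0$, with $\zeta_0\ldef0$ there is no mass in $(\zeta_0,\tau_0]$, so \cref{def:noise_seeking_Massart} (and likewise \cref{def:noise_seeking_Tsybakov}) holds; the region $A$ is precisely the cost-free $\eta\equiv\tfrac12$ mass these conditions permit in arbitrary amount. I would in fact keep the \emph{pair} of instances $\eta(x_C)=\tfrac12\pm\Delta$, both realizable and both satisfying the noise-seeking condition.

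For the lower bound I would first argue budget exhaustion. By \cref{def:proper_learner}, an uncertainty-based learner must query $x$ whenever its data cannot rule out $\abs{\eta(x)-\tfrac12}>0$; at $x_A$ the observations are $\mathrm{Bernoulli}(\tfrac12)$ and can never certify exact equality, so the learner queries $x_A$ on every appearance. Resolving the sign of $\eta(x_C)-\tfrac12$ needs $\gtrsim\Delta^{-2}$ labels from $x_C$, but $x_C$ arrives only at rate $p_C$ while every interleaving $x_A$ is forcibly queried, so accumulating them costs $\asymp\Delta^{-2}/p_C$ queries in total, and the small constant in $p_C$ makes this exceed $B$. Hence with budget $B$ the learner collects only $o(\Delta^{-2})$ labels at $x_C$; I would then turn this into a genuine error by a Le Cam two-point argument over the pair $\eta(x_C)=\tfrac12\pm\Delta$: the two induced label sequences are statistically indistinguishable, so the learner predicts the wrong sign at $x_C$ with probability $\Omega(1)$ on at least one instance, incurring expected standard excess error $\gtrsim p_C\cdot2\Delta=\Omega(B^{-1})$. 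Fix that instance.

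For the upper bound I would invoke \cref{thm:epoch}. Because $A\subseteq\cX_\gamma$, \cref{alg:epoch} abstains on $x_A$ and issues no query there, so all querying goes to $x_C$; with $\theta,\pseud(\cF)=O(1)$ for this linear class the label complexity is at most $O(\gamma^{-2}\log^2(\tfrac1{\eps\gamma})\log(\tfrac1{\eps\gamma\delta}))\le B$, and over its horizon $T=\wt O(1/(\eps\gamma))$ it sees $\asymp Tp_C\gg\polylog(\tfrac1\eps)$ copies of $x_C$—enough to drive the Chow's excess error below $\eps$ with probability $1-\delta$. To convert to standard excess error I would \emph{not} apply \cref{prop:proper_abstention} as a black box, since $\gamma\,\P(\cX_\gamma)=\gamma p_A\approx\gamma\not\le\eps$; instead I use that $\eta\equiv\tfrac12$ on $A$, so the pointwise excess error $\abs{2\eta(x)-1}\,\ind(\check h(x)\neq h^\star(x))$ vanishes on $A$ no matter how abstention is randomized. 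On $x_C$, since $\eps\ll B^{-1}\lesssim p_C\Delta$, a Chow's excess error below $\eps$ forces $\wh h_M$ neither to abstain nor to misclassify there (either would cost $\gtrsim p_C\Delta$). Thus both regions contribute zero excess error up to the $\eps$-level failure term, giving standard excess error $\le\eps$ on the same (bad) instance.

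The main obstacle is the lower bound's bookkeeping. I must simultaneously (i) calibrate $p_C\asymp\Delta^{-2}/B$ so the budget is provably spent before $x_C$ can be resolved, yet keep $p_C\Delta\asymp B^{-1}$ as large as possible; (ii) upgrade ``under-sampled at $x_C$'' to an honest $\Omega(1)$ sign error via Le Cam while correctly handling the learner's adaptivity and the randomness of the stream (the number of $x_C$ labels is itself random, so I would condition on the high-probability event that it is $o(\Delta^{-2})$); and (iii) keep one construction consistent with all three demands at once—linearity and realizability of $\cF$, the noise-seeking condition, and the match between our algorithm's label complexity and the stated budget threshold. The upper bound is, by contrast, essentially a corollary of \cref{thm:epoch} plus the observation that $\eta\equiv\tfrac12$ renders the abstention region cost-free under standard excess error.
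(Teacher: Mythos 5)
Your construction is the same in spirit as the paper's (\cref{ex:linear}): a large $\eta\equiv\frac12$ region that forces every uncertainty-based learner to query, plus a rare high-margin region of mass $\asymp B^{-1}$ that carries the error. Your lower bound takes a slightly different route — the paper sets the rare region's mass to $B^{-1}/2$ so that with probability $\geq 1/3$ it is \emph{never observed} in the first $B$ rounds (after which the budget is gone), and then a uniformly random sign of $\theta_2^\star$ gives $\Omega(B^{-1})$ expected error with no hypothesis-testing machinery; your Le Cam argument over $\eta(x_C)=\frac12\pm\Delta$ with $o(\Delta^{-2})$ collected labels also works, at the cost of the adaptivity bookkeeping you flag. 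Both are fine.

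The genuine gap is in your upper bound, in the step ``a Chow's excess error below $\eps$ forces $\wh h_M$ neither to abstain nor to misclassify at $x_C$.'' First, this invokes $\eps\ll B^{-1}$, which is not among the hypotheses: the proposition fixes $\eps,\gamma,\delta$ and then allows \emph{any} sufficiently large $B$, so $B^{-1}$ may be far below $\eps$. Second, even granting $\eps\ll B^{-1}$, the inference is invalid because the pointwise Chow's excess error $\exc_\gamma(\wh h;x)$ is \emph{negative} on the abstention region when $\eta(x)=\frac12$ (it equals $(\frac12-\gamma)-\frac12=-\gamma$, since the benchmark is $\err(h^\star)$, not the optimal Chow's error). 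Hence the aggregate bound $\exc_\gamma(\wh h)\le\eps$ is consistent with misclassifying $x_C$: the loss $2\Delta p_C\asymp B^{-1}$ there can be fully offset by the gain $-\gamma\,\P(\wh h=\bot)$ on the $\eta\equiv\frac12$ mass. The correct route (which the paper uses implicitly) is pointwise: proper abstention guarantees $\wh h_M$ does not abstain at $x_C$ since $\Delta>\gamma$, and the proof of \cref{thm:epoch} actually bounds the \emph{positive part} of the pointwise excess error, $\E_x\brk{\ind(g_M(x)=1)\cdot 2w(x;\cF_M)}\le\eps$, with $\exc_\gamma(\wh h_M;x)\le 0$ wherever $g_M(x)=0$; combining this with the zero contribution of the $\eta\equiv\frac12$ region under randomization gives standard excess error $\le\eps$ without any relation between $\eps$ and $B$. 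A smaller slip: proper abstention gives $\crl{x:\wh h(x)=\bot}\subseteq\cX_\gamma$, not the converse, so \cref{alg:epoch} does \emph{not} ``issue no query'' at $x_A$ — it queries $x_A$ until its confidence interval shrinks below width $2\gamma$; this does not harm the argument since \cref{thm:epoch}'s label complexity bound covers all queries, but your claim that ``all querying goes to $x_C$'' (and the ensuing count of $x_C$ visits) should be dropped — \cref{thm:epoch}'s Chow's-excess-error guarantee does not require seeing $x_C$ many times.
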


The above result demonstrates the superiority of our \cref{alg:epoch} over any ``uncertainty-based'' active learner.
Moreover, we show that \cref{alg:epoch} achieves similar guarantees as in \cref{thm:standard_excess_error} under the strictly harder noise-seeking conditions.
Specifically, we have the following guarantees.

\begin{restatable}{theorem}{thmStandardExcessErrorNoise}
\label{thm:standard_excess_error_noise}
With an appropriate choice of the abstention parameter $\gamma$ in \cref{alg:epoch} and randomization over the abstention region, \cref{alg:epoch} learns a classifier $\check h$ with $\eps + \zeta_0$ standard excess error after querying $\wt \Theta(\tau_0^{-2})$ labels under \cref{def:noise_seeking_Massart} or querying $\wt \Theta \prn{ {\eps}^{ - 2 / (1 + \beta)} }$ labels under \cref{def:noise_seeking_Tsybakov}.
\end{restatable}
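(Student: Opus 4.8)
The plan is to bound the \emph{standard} excess error of the randomized classifier $\check h$ through a pointwise decomposition over the abstention and non-abstention regions, rather than through \cref{eq:prop_abstention}. The reason is that \cref{eq:prop_abstention} charges the abstention region at rate $\gamma$, contributing $\gamma\cdot\P_{\cD_\cX}(x\in\cX_\gamma)$; under the noise-seeking conditions the set $\crl{x:\abs{\eta(x)-\tfrac12}\leq\zeta_0}$ may carry arbitrary mass, so this term is only $O(\gamma)$ and is far too lossy to give the claimed $\eps+\zeta_0$ bound. I would instead write
\begin{align*}
\err(\check h)-\err(h^\star)=A+\int_{\crl{\wh h(x)=\bot}}\abs[\big]{\eta(x)-\tfrac12}\,d\cD_\cX,
\end{align*}
where on $\crl{\wh h(x)\neq\bot}$ we have $\check h=\wh h$ and the contribution equals the non-abstention excess error $A\ldef\int_{\crl{\wh h(x)\neq\bot}}\abs{2\eta(x)-1}\,\ind(\wh h(x)\neq h^\star(x))\,d\cD_\cX$, while on $\crl{\wh h(x)=\bot}$ the randomization incurs pointwise excess error exactly $\abs{\eta(x)-\tfrac12}$.

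The first ingredient is that the analysis behind \cref{thm:epoch} in fact controls $A$ itself. Writing Chow's excess error as $A+\int_{\crl{\wh h=\bot}}(\abs{\eta-\tfrac12}-\gamma)\,d\cD_\cX$, proper abstention (\cref{prop:proper_abstention}) makes the second term nonpositive, so the bound $\err_\gamma(\wh h)-\err(h^\star)\leq\eps$ is proved precisely by establishing $A\leq\eps$. I would therefore reuse this intermediate bound $A\leq\eps$ directly. The second ingredient handles the abstention integral: proper abstention gives $\crl{\wh h=\bot}\subseteq\cX_\gamma$, so I split $\cX_\gamma$ at the level $\zeta_0$. On the inner part $\crl{\abs{\eta-\tfrac12}\leq\zeta_0}$ the pointwise estimate $\abs{\eta-\tfrac12}\leq\zeta_0$ gives a contribution at most $\zeta_0$, and the annulus $\crl{\zeta_0<\abs{\eta-\tfrac12}\leq\gamma}$ is handled by the noise condition.

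Under \cref{def:noise_seeking_Massart}, choosing $\gamma\leq\tau_0$ makes the annulus a null set, so the abstention integral is at most $\zeta_0$ and hence $\err(\check h)-\err(h^\star)\leq\eps+\zeta_0$; with $\gamma\asymp\tau_0$ the label complexity of \cref{thm:epoch} is $\wt\Theta(\tau_0^{-2})$. Under \cref{def:noise_seeking_Tsybakov}, choosing $\gamma\asymp\eps^{1/(1+\beta)}$ and bounding the annulus by $\gamma\cdot\P(\zeta_0<\abs{\eta-\tfrac12}\leq\gamma)\leq c\,\gamma^{1+\beta}=O(\eps)$ yields standard excess error $O(\eps)+\zeta_0$, which is $\eps+\zeta_0$ after rescaling $\eps$, with label complexity $\wt\Theta(\gamma^{-2})=\wt\Theta(\eps^{-2/(1+\beta)})$. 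Optimality of both rates follows from the lower bounds of \citet{castro2006upper,castro2008minimax}, which transfer because the noise-seeking classes contain the corresponding standard-noise classes (take $\zeta_0$ arbitrarily small).

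The step I expect to be the main obstacle is the first ingredient: isolating and invoking $A\leq\eps$. The proper-abstention decomposition only yields $\err_\gamma(\wh h)-\err(h^\star)\leq A$, an inequality in the wrong direction for recovering $A\leq\eps$ from the Chow bound alone; likewise, combining it with \cref{thm:epoch} gives only $A\leq\eps+\gamma$, which would leave an unwanted additive $\gamma$. Thus $A\leq\eps$ must be read off from the internal width-based argument of \cref{thm:epoch}, where any error off the abstention region forces $\tfrac12\in(\lcb(x;\cF_M),\ucb(x;\cF_M))$ with width exceeding $\gamma$, and the value-function disagreement coefficient controls the integrated width of that region after $M$ epochs. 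Everything else reduces to the routine splitting of the abstention integral at the threshold $\zeta_0$.
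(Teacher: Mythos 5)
Your proof is correct and follows essentially the same route as the paper: the paper likewise abandons the coarse bound \cref{eq:prop_abstention}, splits the standard excess error of $\check h$ over the non-abstention region (bounded by the internal width-based analysis of \cref{thm:epoch}), the abstention region with $\abs{\eta-\tfrac12}\leq\zeta_0$ (contributing at most $\zeta_0$), and the annulus $\cX_{\zeta_0,\gamma}$ (charged at rate $\gamma$ and controlled by the noise-seeking condition via the same choices of $\gamma$ as in \cref{thm:standard_excess_error}). Your identification of the subtle point — that the non-abstention excess error must be read off from the proof of \cref{thm:epoch} rather than from its statement — matches exactly how the paper invokes that bound.
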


The special case of the noise-seeking condition with $\zeta_0 = 0$ is recently studied in \citep{kpotufe2021nuances}, where the authors conclude that no active learners can outperform the passive counterparts in the \emph{nonparametric} regime.
\cref{thm:standard_excess_error_noise} shows that, in the \emph{parametric} setting (with function approximation), \cref{alg:epoch} provably overcomes these noise-seeking conditions.

\section{Extensions}
\label{sec:extension}
We provide two adaptations of our main algorithm (\cref{alg:epoch}) that can 
(1) achieve constant label complexity for a general set of regression functions (\cref{sec:constant}); 
and (2) adapt to model misspecification (\cref{sec:misspecified}).
These two adaptations can also be efficiently implemented via regression oracle and enjoy similar guarantees stated in \cref{thm:epoch_efficient}. We defer computational analysis to \cref{app:constant} and \cref{app:mis}.

\subsection{Constant label complexity}
\label{sec:constant}

We start by considering a simple problem instance with $\cX = \crl{x}$, where active learning is reduced to mean estimation of $\eta(x)$.
Consider the Massart noise case where $\eta(x) \notin [\frac{1}{2} - \tau_0, \frac{1}{2} + \tau_0]$.
No matter how small the desired accuracy level $\epsilon>0$ is, the learner should not spend more than $O(\frac{\log(1/\delta)}{\tau_0^2})$ labels to correctly classify $x$ with probability at least $1-\delta$, which ensures $0$ excess error. 
In the general setting, but with Chow's excess error, a similar result follows: 
It takes at most $O(\frac{\log(1/\delta)}{\gamma^2})$ samples to verify if $\eta(x)$ is contained in $[\frac{1}{2}-\gamma, \frac{1}{2} + \gamma]$ or not. 
Taking the optimal action within $\crl{0, 1,\bot}$ (based on \cref{eq:chow_error}) then leads to $0$ Chow's excess error.
This reasoning shows that, at least in simple cases, one should be able to achieve \emph{constant} label complexity no matter how small $\epsilon$ is. One natural question to ask is as follows.

\begin{minipage}[c]{\linewidth}
\vspace{2 pt}
\centering
\emph{Is it possible to achieve constant label complexity in the general case of active learning?}
\vspace{2 pt}
\end{minipage}

We provide the first affirmative answer to the above question with a \emph{general} set of regression function $\cF$ (finite), and under \emph{general} action space $\cX$ and marginal distribution $\cD_{\cX}$.
The positive result is achieved by \cref{alg:eluder} (deferred to \cref{app:constant_alg}), which differs from \cref{alg:epoch} in two aspects: 
(1) we drop the epoch scheduling, and 
(2) apply a tighter elimination step derived from an optimal stopping theorem.
Another change comes from the analysis of the algorithm: Instead of analyzing with respect to the disagreement coefficient, we work with the \emph{eluder dimension} $\mfe \ldef \sup_{f^{\star} \in \cF}\mfe_{f^\star}(\cF,\gamma/2)$.\footnote{We formally define eluder dimension in \cref{app:star_eluder}. As examples, we have $\mfe = O(d \cdot \log \frac{1}{\gamma})$ for linear functions in $\R^d$, and $\mfe = O(C_{\textsf{link}} \cdot d \log \frac{1}{\gamma})$ for generalized linear functions (where $C_{\textsf{link}}$ is a quantity related to the link function).} 
To do that, we analyze active learning from the perspective of \emph{regret minimization with selective querying} \citep{dekel2012selective}, which allows us to incorporate techniques developed in the field of contextual bandits \citep{russo2013eluder, foster2020instance}.
We defer a detailed discussion to \cref{app:constant_regret} and provide the following guarantees. 
\begin{restatable}{theorem}{thmConstant}
\label{thm:constant}
With probability at least $1-2\delta$, \cref{alg:eluder} returns a classifier with expected Chow's excess error at most $\epsilon$ and label complexity $O\paren{ \frac{\mfe \cdot \log ({\abs*{\cF}}/{\delta})}{\gamma^2}}$, which is independent of $\frac{1}{\eps}$.
\end{restatable}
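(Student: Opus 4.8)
The plan is to prove the two assertions — the horizon-free label complexity and the (index-)expected Chow excess error bound — by combining a \emph{time-uniform} confidence argument with an \emph{eluder-dimension counting} bound, followed by an online-to-batch conversion. The conceptual crux is that, unlike in \cref{thm:epoch}, the number of queries will be controlled by the eluder dimension \emph{alone}, hence capped no matter how long \cref{alg:eluder} runs; this is exactly what removes the $\polylog(\frac{1}{\eps})$ factor and yields a budget independent of $\frac{1}{\eps}$.

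First I would establish anytime validity of the confidence sets. Because \cref{alg:eluder} drops the epoch scheduling, I cannot afford the union-bound-over-$M$-epochs argument behind \cref{thm:epoch}, which costs a $\log T$ (i.e.\ $\polylog(\frac{1}{\eps})$) factor. Instead, using realizability (\cref{asmp:predictable}) and properness of the square loss, for each fixed $f$ the process $\sum_{s\le t} Q_s\brk{(f(x_s)-y_s)^2 - (f^\star(x_s)-y_s)^2 - (f(x_s)-f^\star(x_s))^2}$ is a martingale with bounded increments. I would apply an optional-stopping / nonnegative-supermartingale (Ville-type) argument and union bound over the finite class to obtain, with probability at least $1-\delta$, that \emph{simultaneously for all} $t$ and all $f \in \cF$, $\sum_{s\le t} Q_s(f(x_s)-f^\star(x_s))^2 \lesssim \sum_{s\le t} Q_s\brk{(f(x_s)-y_s)^2 - (f^\star(x_s)-y_s)^2} + \beta$ with $\beta = O(\log(\abs{\cF}/\delta))$ free of any $\log T$ term. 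This certifies $f^\star \in \cF_t$ for all $t$ (so $[\lcb(x;\cF_t),\ucb(x;\cF_t)]$ traps $\eta(x)$) and that every surviving $f\in\cF_t$ has in-sample error $\sum_{s\le t}Q_s(f(x_s)-f^\star(x_s))^2 \le \beta$. This epoch-free, $\log T$-free elimination is the ``optimal stopping'' step.

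Next I would reuse the per-round dichotomy from \cref{sec:epoch}: on the confidence event, a label is queried at $x_t$ only when the width $w(x_t;\cF_t) = \ucb(x_t;\cF_t)-\lcb(x_t;\cF_t)$ exceeds $\gamma$ (cf.\ \cref{eq:epoch_query}), and whenever $w(x_t;\cF_t)\le \gamma$ the action $\wh h_t(x_t)$ is Chow-optimal and contributes $0$ to the excess Chow risk while issuing no query. Hence both the labels and the (realized) Chow regret $R_t$ are supported on the large-width rounds $\crl{t : w(x_t;\cF_t)>\gamma}$, on each of which $R_t \le 2\,w(x_t;\cF_t)\le 2$ (since $\eta(x_t)$ and $\tfrac12$ lie in an interval of width $w(x_t;\cF_t)$). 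The crucial step is the eluder counting bound: every surviving $f\in\cF_t$ is within squared-error budget $\beta$ of $f^\star$ on the queried history, yet some pair in $\cF_t$ disagrees by more than $\gamma$ at the fresh point $x_t$, so each large-width query witnesses a point that is $(\gamma/2)$-independent of its predecessors. A potential/pigeonhole argument (adapting \citet{russo2013eluder, foster2020instance}) then gives $\abs{\crl{t:w(x_t;\cF_t)>\gamma}} = O\prn{\mfe_{f^\star}(\cF,\gamma/2)\cdot \beta/\gamma^2} = O\prn{\mfe\log(\abs{\cF}/\delta)/\gamma^2}$, a bound \emph{independent of the horizon}. This simultaneously yields the stated label complexity and $\sum_{t} R_t = O\prn{\mfe\log(\abs{\cF}/\delta)/\gamma^2}$.

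Finally I would convert to the batch guarantee. Running \cref{alg:eluder} for a sufficiently large horizon $T$ (which may depend on $\eps,\delta$ but does \emph{not} enter the label count, since no queries occur once widths drop to $\gamma$) and returning a uniformly random $\wh h_t$, the index-expected Chow excess error equals $\frac{1}{T}\sum_{t}\prn{\err_\gamma(\wh h_t)-\err(h^\star)}$. Since each $\wh h_t$ is determined before $x_t$ is drawn, a Freedman-type coupling transfers the realized bound $\sum_t R_t = O(\mfe\log(\abs{\cF}/\delta)/\gamma^2)$ to this population average up to a vanishing $O(\sqrt{\log(1/\delta)/T})$ term; taking $T$ large drives the average below $\eps$. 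The two failure events — confidence validity (the optimal-stopping step) and this Freedman transfer — account for the $1-2\delta$ probability. I expect the \textbf{main obstacle} to be the first step: securing a confidence radius $\beta$ with \emph{no} residual $\log T$ dependence through the optimal-stopping martingale, because any surviving horizon factor there would reintroduce a $\polylog(\frac{1}{\eps})$ term and destroy the constant label complexity; a secondary technical point is checking that the eluder count couples correctly to \emph{both} the query budget and the Chow regret under the distribution shift induced by selective querying. The standard-excess Massart version (informal \cref{thm:eluder_informal}) then follows by invoking proper abstention (\cref{prop:proper_abstention}) and tuning $\gamma$ against $\tau_0$.
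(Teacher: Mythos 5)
Your proposal follows essentially the same route as the paper's proof: a Ville/optional-stopping time-uniform confidence set with radius $\beta = O(\log(\abs{\cF}/\delta))$ free of any $\log T$ factor (the paper's \cref{lm:opt_stopping} and \cref{lm:expected_sq_loss_opt}), the observation that queries and nonzero Chow excess risk occur only on rounds with $w(x_t;\cF_t) > \gamma$, an eluder-dimension pigeonhole count of those rounds restricted to the \emph{queried} subsequence (\cref{lm:conf_width_eluder}), and an online-to-batch martingale transfer to the population average. The one place you are looser is the cumulative regret: you bound it by $2$ times the count of large-width rounds, giving $O(\mfe\,\beta/\gamma^{2})$, whereas the paper's \cref{lm:regret_eluder_cond} reorders the queried widths and uses the eluder bound at every scale $\zeta \geq \gamma$ to get $O(\mfe\,\beta/\gamma)$ — this is what makes the algorithm's hard-coded horizon $T = O\prn{\frac{\mfe}{\eps\,\gamma}\log\frac{\abs{\cF}}{\delta}}$ suffice, so under your cruder bound you would need to enlarge $T$ by a factor of $\frac{1}{\gamma}$ (harmless for the label complexity and the excess-error claim, but a necessary adjustment to match the algorithm as written).
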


Based on discussion in \cref{sec:standard_excess_error}, we can immediately translate the above results into \emph{standard} excess error guarantees under the Massart noise (with $\gamma$ replaced by $\tau_0$).
We next discuss why existing algorithms/analyses do not guarantee constant label complexity, even in the linear case.
\begin{enumerate}
	\item \textbf{Epoch scheduling.} Many algorithms proceed in epochs and aim at \emph{halving} the excess error after each epoch \citep{balcan2007margin, zhang2014beyond, puchkin2021exponential}.
	One inevitably needs $\log \frac{1}{\eps}$ epochs to achieve $\epsilon$ excess error. 
	\item \textbf{Relating to disagreement coefficient.}
	The algorithm presented in \citet{krishnamurthy2019active} does not use epoch scheduling. However, their label complexity are analyzed with disagreement coefficient, which incurs a $\sum_{t=1}^{1/\eps} \frac{1}{t} = O(\log \frac{1}{\eps})$ term in the label complexity.
\end{enumerate}

\begin{remark}
    \cref{alg:eluder} also provides guarantees when $x$ is selected by an adaptive adversary (instead of i.i.d. sampled $x \sim \cD_\cX$). In that case, we simultaneously upper bound the regret and the label complexity (see \cref{thm:constant_adv} in \cref{app:constant_alg}). Our results can be viewed as a generalization of the results developed in the linear case \citep{dekel2012selective}.
\end{remark}

\subsection{Dealing with model misspecification}
\label{sec:misspecified}

Our main results are developed under realizability (\cref{asmp:predictable}), which assumes that there exists a $f^\star \in \cF$ such that $f^\star = \eta$. 
In this section, we relax that assumption and allow model misspecification.
We assume the learner is given a set of regression function $\cF: \cX \to [0, 1]$ that may only \emph{approximates} the conditional probability $\eta$. 
More specifically, we make the following assumption.

\begin{assumption}[Model misspecification]
\label{asmp:misspecified}
There exists a $\wb f \in \cF$ such that $\wb f$ approximate  $\eta$ up to $\kappa > 0$ accuracy, i.e., $\sup_{x \in \cX} \abs*{\bar f(x) - \eta(x)} \leq \kappa$.
\end{assumption}

We use a variation of \cref{alg:epoch} to adapt to model misspecification (\cref{alg:mis}, deferred to \cref{app:mis_alg}). 
Compared to \cref{alg:epoch}, the main change in \cref{alg:mis} is to apply a more conservative step in determining the active set $\cF_m$ at each epoch:
We maintain a larger active set of regression function to ensure that $\wb f$ is not eliminated throughout all epochs.
Our algorithm proceeds \emph{without} knowing the misspecification level $\kappa$. 
However, the excess error bound presented next holds under the condition that $\kappa \leq \eps$ (i.e., it requires that the misspecification is no larger than the desired accuracy). 
Abbreviate $\wb \theta \ldef \sup_{\iota > 0}\theta_{\wb f}^{\val}(\cF, \gamma /2, \iota)$, we achieve the following guarantees.

\begin{restatable}{theorem}{thmMis}
\label{thm:mis}
Suppose $\kappa\leq \eps$. With probability at least $1-2\delta$, \cref{alg:mis} returns a classifier with Chow's excess error 
$O \prn{ \eps \cdot  \wb \theta \cdot {\log\prn{\frac{ \pseud(\cF)}{\eps \, \gamma \, \delta}}}}$
and label complexity
$O\prn{ \frac{\wb \theta \, \pseud(\cF)}{\gamma^2} \cdot \log^2 \prn{\frac{\pseud(\cF)}{\eps \, \gamma}}  \cdot  {\log\prn{\frac{ \pseud(\cF)}{\eps \, \gamma \, \delta}}}}$.
\end{restatable}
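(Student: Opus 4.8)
The plan is to mirror the analysis behind \cref{thm:epoch}, with the misspecified approximator $\wb f$ from \cref{asmp:misspecified} playing the role that $f^\star = \eta$ plays in the realizable case, while carefully tracking the $\kappa$-sized bias that misspecification injects into the square loss. As there, everything hinges on a high-probability \emph{good event} on which $\wb f$ is never eliminated, i.e., $\wb f \in \cF_m$ for every epoch $m \in [M]$; under this event the confidence intervals $[\lcb(x;\cF_m),\ucb(x;\cF_m)]$ sandwich $\wb f(x)$, and hence $\eta(x)$ up to an additive $\kappa$ by \cref{asmp:misspecified}. I would then separately (i) bound the Chow's excess error of the returned $\wh h_M$ and (ii) bound the total number of queries, both following the template of \cref{thm:epoch} but propagating the extra $\kappa$-slack through the three ``abstain / predict / query'' cases.

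The first --- and genuinely new --- step is establishing the good event, which is exactly where the conservative elimination rule of \cref{alg:mis} is needed. Decomposing the per-sample square-loss difference relative to $\eta$,
\begin{align*}
(\wb f(x_t) - y_t)^2 - (\wh f_m(x_t) - y_t)^2 = (\wb f(x_t) - \eta(x_t))^2 - (\wh f_m(x_t) - \eta(x_t))^2 + 2(\wb f(x_t) - \wh f_m(x_t))(\eta(x_t) - y_t),
\end{align*}
the cross term is a martingale-difference sequence (since $\E[y_t \mid x_t] = \eta(x_t)$) and concentrates via the same Freedman/self-bounding argument used for \cref{thm:epoch}, contributing the usual statistical term $O(\pseud(\cF)\log(T/\delta))$. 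The remaining deterministic term is at most $\sum_t Q_t(\wb f(x_t)-\eta(x_t))^2 \le \kappa^2\cdot(\#\text{queries})$ by \cref{asmp:misspecified}. Since the algorithm does not know $\kappa$, I would enlarge $\beta_m$ by an additive term proportional to $\eps^2$ times the (observable) number of queries made so far; under the hypothesis $\kappa \le \eps$ this dominates the misspecification bias, which is precisely the ``larger active set'' modification of \cref{alg:mis}, and it guarantees $\wb f \in \cF_m$ for all $m$ with probability at least $1-\delta$.

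With the good event in hand, the rest follows the realizable template with $\kappa$-inflated margins. The intervals are now valid only up to $\kappa$, so in the ``abstain'' and ``predict'' cases each decision incurs at most $O(\kappa) \le O(\eps)$ additional Chow's excess error rather than zero. The measure of the region that still triggers a query, $\{x : w(x;\cF_m) > \gamma,\ \tfrac12 \in (\lcb(x;\cF_m),\ucb(x;\cF_m))\}$, is controlled by the value function disagreement coefficient $\wb\theta = \sup_\iota \theta^{\val}_{\wb f}(\cF,\gamma/2,\iota)$ exactly as in \cref{def:dis_coeff}, after verifying that $\cF_m$ sits inside a $\|\cdot\|_{\cD_\cX}$-ball around $\wb f$ whose radius shrinks geometrically across epochs down to the $\kappa \lesssim \eps$ floor imposed by the enlarged threshold (together with the growing sample sizes $\tau_m$). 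Summing query probabilities over the $M = O(\log T)$ epochs yields the stated label complexity, and a parallel accounting of the per-point excess error over the same $\wb\theta$-controlled uncertain region --- now bounded below by the misspecification floor rather than driven to zero --- yields $O(\eps\cdot\wb\theta\cdot\log(\pseud(\cF)/(\eps\gamma\delta)))$, whose extra $\wb\theta$ factor is absent in the realizable \cref{thm:epoch}.

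The main obstacle is the tension in calibrating the enlargement of $\beta_m$: it must be large enough to keep $\wb f$ in $\cF_m$ against an unknown $\kappa$ (only promised to satisfy $\kappa \le \eps$), yet small enough that the resulting $\cF_m$ stays tight enough for both the disagreement-coefficient measure bound and the geometric width-shrinkage argument to survive. Getting this balance right --- so that the $\eps^2\cdot(\#\text{queries})$ enlargement simultaneously absorbs the misspecification bias and the statistical fluctuation while inflating the final bounds by no more than the claimed $\wb\theta\cdot\polylog$ factors --- is the crux; the remaining steps are faithful, if bookkeeping-heavy, adaptations of the realizable analysis.
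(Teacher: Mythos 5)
Your proposal follows essentially the same route as the paper's proof: a good event guaranteeing $\wb f\in\cF_m$ for all $m$ via the $\eps^2$-enlarged elimination threshold (which dominates the unknown $\kappa^2$-bias precisely because $\kappa\le\eps$), a case analysis showing the no-query cases incur only $O(\kappa)$ extra Chow's excess error, and a disagreement-coefficient bound centered at $\wb f$ for both the query region and the width term. The only (immaterial) deviations are that the paper routes the concentration argument through $f^\star=\eta$ in two steps rather than your single bias--variance decomposition, and inflates $\beta_m$ by $\eps^2\tau_{M-1}$ (the deterministic round count) rather than $\eps^2$ times the observed query count.
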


We only provide guarantee when $\kappa \leq \eps$, since the learned classifier suffers from an additive $\kappa$ term in the excess error (see \cref{app:mis_partial} for more discussion).
On the other hand, the (inefficient) algorithm provided in \citet{puchkin2021exponential} works without any assumption on the approximation error.
An interesting future direction is to study the relation between computational efficiency and learning with \emph{general} approximation error.

\section*{Acknowledgements}

We thank the anonymous reviewers for their helpful comments. 
This work is partially supported by NSF grant 1934612 and AFOSR grant FA9550-18-1-0166.

\bibliography{refs.bib}
\clearpage

\appendix

\section{Disagreement coefficient, star number and eluder dimension}
\label{app:star_eluder}

We provide formal definitions/guarantees of value function disagreement coefficient, eluder dimension and star number in this section. These results are developed in \citet{foster2020instance, russo2013eluder}. 
Since our guarantees are developed in terms of these complexity measures, any future developments on these complexity measures (e.g., with respect to richer function classes) directly lead to broader applications of our algorithms.

We first state known upper bound on value function disagreement coefficient with respect to nice sets of regression functions.

\begin{proposition}[\citet{foster2020instance}]
	\label{prop:dis_coeff_bound}
	For any $f^{\star} \in \cF$ and $\gamma,\eps > 0$, let $\theta^{\val}_{f^{\star}}(\cF,\gamma,\eps)$ be the value function disagreement coefficient defined in \cref{def:dis_coeff}. Let $\phi: \cX \rightarrow \R^d$ be a fixed feature mapping and $\cW \subseteq \R^d$ be a fixed set. The following upper bounds hold true.
	\begin{itemize}
		\item Suppose $\cF \ldef \crl{x \mapsto \ang{\phi(x),w} : w \in \cW }$ is a set of linear functions. We then have $\sup_{f \in \cF, \gamma > 0 ,\eps > 0} \theta^{\val}_f(\cF, \gamma, \eps) \leq d$.
		\item Suppose $\cF \ldef \crl{ x \mapsto \sigma( \ang{ \phi(x), w}): w \in \cW}$ is a set of generalized linear functions with any fixed link function $\sigma: \R \rightarrow \R$ such that $0 < c_l < \sigma^{\prime} \leq c_u$.
		We then have 
		$\sup_{f \in \cF, \gamma > 0 ,\eps > 0} \theta^{\val}_f(\cF, \gamma, \eps) \leq {\frac{c_u}{c_l}}^2 \cdot d$.
	\end{itemize}
\end{proposition}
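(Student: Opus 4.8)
The plan is to reduce both cases to a single distribution-free linear-algebra computation built from Cauchy--Schwarz, Markov's inequality, and the trace identity, and to handle the generalized-linear case by linearizing through the mean value theorem. I would prove the linear case first and bootstrap from it, treating $\cD_\cX$ as fixed (the bound will be uniform over $\cD_\cX$, so the outer supremum is harmless).

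For the linear case, write $f(x) = \langle \phi(x), w\rangle$ and $f^\star(x) = \langle\phi(x), w^\star\rangle$, so $f(x) - f^\star(x) = \langle\phi(x), v\rangle$ with $v \ldef w - w^\star$. Introduce the second-moment matrix $\Sigma \ldef \E_{x\sim\cD_\cX}[\phi(x)\phi(x)^\top]$, so the constraint $\nrm{f - f^\star}_{\cD_\cX}^2 \le \eps^2$ reads $v^\top \Sigma v \le \eps^2$. The key observation is that if $x$ lies in the disagreement region witnessed by such a $v$, then Cauchy--Schwarz gives $\gamma^2 < \langle\phi(x), v\rangle^2 \le \big(\phi(x)^\top \Sigma^{-1}\phi(x)\big)\big(v^\top\Sigma v\big) \le \eps^2\,\phi(x)^\top\Sigma^{-1}\phi(x)$, so the entire disagreement event is contained in $\{x : \phi(x)^\top\Sigma^{-1}\phi(x) > \gamma^2/\eps^2\}$. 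Markov's inequality bounds the probability of this set by $(\eps^2/\gamma^2)\,\E[\phi(x)^\top\Sigma^{-1}\phi(x)]$, and the trace trick $\E[\phi(x)^\top\Sigma^{-1}\phi(x)] = \mathrm{tr}\big(\Sigma^{-1}\E[\phi(x)\phi(x)^\top]\big) = \mathrm{tr}(I_d) = d$ cancels the $\eps^2/\gamma^2$ against the $\gamma^2/\eps^2$ prefactor in \cref{def:dis_coeff}, leaving exactly $d$. Taking the supremum over $\gamma, \eps, \cD_\cX$ and the maximum with $1$ closes this case. Note the witness $v$ is allowed to depend on $x$, which is harmless since each $x$ in the event supplies its own witness, and restricting $w$ to $\cW$ only shrinks the event.

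For the generalized-linear case I would linearize. By the mean value theorem, $f(x) - f^\star(x) = \sigma'(\xi_x)\langle\phi(x), v\rangle$ for some intermediate $\xi_x$, so $c_l\,\abs{\langle\phi(x),v\rangle} \le \abs{f(x) - f^\star(x)} \le c_u\,\abs{\langle\phi(x),v\rangle}$. Hence membership in the disagreement region ($\abs{f - f^\star} > \gamma$) forces $\abs{\langle\phi(x), v\rangle} > \gamma/c_u$, while the constraint $\nrm{f - f^\star}_{\cD_\cX} \le \eps$ forces $v^\top\Sigma v \le \eps^2/c_l^2$. Feeding the rescaled thresholds $\tilde\gamma = \gamma/c_u$ and $\tilde\eps = \eps/c_l$ into the linear argument yields disagreement probability at most $(\tilde\eps^2/\tilde\gamma^2)\,d = (c_u/c_l)^2(\eps^2/\gamma^2)\,d$; multiplying by $\gamma^2/\eps^2$ gives the claimed $(c_u/c_l)^2\, d$.

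I expect the only genuine care point---not really an obstacle---to be the degenerate case in which $\Sigma$ is singular. There I would replace $\Sigma^{-1}$ by the Moore--Penrose pseudoinverse $\Sigma^\dagger$, observing that $v^\top\Sigma v = 0$ for $v \in \ker\Sigma$ forces $\langle\phi(x), v\rangle = 0$ $\cD_\cX$-almost surely, so $\phi(x)$ lies in $\mathrm{range}(\Sigma)$ almost surely and both the Cauchy--Schwarz step and the identity $\mathrm{tr}(\Sigma^\dagger\Sigma) = \mathrm{rank}(\Sigma) \le d$ survive intact. Everything else is the routine combination of Cauchy--Schwarz, Markov, and the trace identity, mirroring the classical linear-bandit elliptical-potential argument.
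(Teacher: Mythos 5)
The paper does not prove this proposition at all---it is imported verbatim from \citet{foster2020instance} as a known result---so there is no in-paper proof to compare against. Your argument is correct and is essentially the standard one from that reference: Cauchy--Schwarz in the $\Sigma$-inner product to contain the disagreement event in $\{x:\phi(x)^{\top}\Sigma^{\dagger}\phi(x)>\gamma^{2}/\eps^{2}\}$, Markov plus the trace identity to get $d$, and a mean-value-theorem reduction for the generalized linear case; your handling of the $x$-dependent witness, the singular-$\Sigma$ case, and the outer supremum over $\cD_{\cX}$ (which the paper's \cref{def:dis_coeff} adds relative to the original definition) is all sound.
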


We next provide the formal definition of value function eluder dimension and star number \citep{foster2020instance, russo2013eluder}.

\begin{definition}[Value function eluder dimension]
\label{def:eluder}
For any $f^{\star} \in \cF$ and $\gamma > 0$, let $\check{\mathfrak{e}}_{f^{\star}}(\cF, \gamma)$ be the length of the longest sequence of data points $x^{1}, \dots, x^{m}$ such that for all $i$, there exists $f^{i} \in \cF$ such that 
\begin{align*}
    \abs{ f^{i}(x^{i}) - f^{\star}(x^{i}) } > \gamma, \quad \text{ and } \quad \sum_{j < i} \paren{ f^{i}(x^{j})  - f^{\star}(x^{j})}^2 \leq \gamma^2.
\end{align*}
The value function eluder dimension is defined as $\mathfrak{e}_{f^{\star}}(\cF, \gamma_0) \coloneqq \sup_{\gamma \geq \gamma_0} \check{\mathfrak{e}}_{f^{\star}}(\cF, \gamma)$. 
\end{definition}
\begin{definition}[Value function star number]
\label{def:star}
For any $f^{\star} \in \cF$ and $\gamma> 0$, let $\check{\mathfrak{s}}_{f^{\star}}(\cF, \gamma)$ be the length of the longest sequence of data points $x^{1}, \dots, x^{m}$ such that for all $i$, there exists $f^{i} \in \cF$ such that 
\begin{align*}
    \abs{ f^{i}(x^{i}) - f^{\star}(x^{i}) } > \gamma, \quad \text{ and } \quad \sum_{j \neq i} \paren{ f^{i}(x^{j})  - f^{\star}(x^{j})}^2 \leq \gamma^2.
\end{align*}
The value function eluder dimension is defined as $\mathfrak{s}_{f^{\star}}(\cF, \gamma_0) \coloneqq \sup_{\gamma \geq \gamma_0} \check{\mathfrak{s}}_{f^{\star}}(\cF, \gamma)$. 
\end{definition}

Since the second constrain in the definition of star number is more stringent than the counterpart in the definition of eluder dimension, one immediately have that $\mfs_{f^{\star}}(\cF, \gamma) \leq \mfe_{f^{\star}}(\cF, \gamma)$. 
We provide known upper bounds for eluder dimension next.

\begin{proposition}[\citet{russo2013eluder}]
	\label{prop:eluder_bound}
	Let $\phi: \cX \rightarrow \R^d$ be a fixed feature mapping and $\cW \subseteq \R^d$ be a fixed set. Suppose $\sup_{x \in \cX}\nrm{\phi(x)}_2 \leq 1$ and $\sup_{w \in \cW} \nrm{w}_2 \leq 1$. The following upper bounds hold true.
	\begin{itemize}
		\item Suppose $\cF \ldef \crl{x \mapsto \ang{\phi(x),w} : w \in \cW }$ is a set of linear functions. We then have $\sup_{f^{\star} \in \cF} \mfe_{f^{\star}}(\cF, \gamma) = O(d \log \frac{1}{\gamma})$.
		\item 
		Suppose $\cF \ldef \crl{ x \mapsto \sigma( \ang{ \phi(x), w}): w \in \cW}$ is a set of generalized linear functions with any fixed link function $\sigma: \R \rightarrow \R$ such that $0 < c_l < \sigma^{\prime} \leq c_u$.
        We then have 
	$\sup_{f^{\star} \in \cF} \mfe_{f^{\star}}(\cF, \gamma) = O \prn[\big]{ \prn[\big]{\frac{c_u}{c_l}}^2 d \log \prn[\big]{ \frac{c_u}{\gamma}} }$.
	\end{itemize}
\end{proposition}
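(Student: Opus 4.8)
The plan is to reduce both claims to a single elliptic-potential (volumetric) argument in the feature space $\R^d$, which is the classical route of \citet{russo2013eluder}. Fix $f^\star \in \cF$ with weight vector $w^\star$, fix $\gamma$, and let $x^1, \dots, x^m$ be a maximal eluder sequence with witnesses $f^i$ (weight $w^i$). Writing $\theta^i \ldef w^i - w^\star$, I would first translate the two defining eluder conditions into conditions on the linear forms $\langle \phi(x^j), \theta^i\rangle$. In the linear case this is immediate: $f^i(x) - f^\star(x) = \langle \phi(x), \theta^i\rangle$, so the sequence satisfies $\langle\phi(x^i),\theta^i\rangle^2 > \gamma^2$ while $\sum_{j<i}\langle\phi(x^j),\theta^i\rangle^2 \leq \gamma^2$, and $\nrm{\theta^i}_2 \leq 2$ from the norm bound on $\cW$. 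In the generalized-linear case I would invoke the mean value theorem, $f^i(x)-f^\star(x) = \sigma'(\xi)\langle\phi(x),\theta^i\rangle$ for an intermediate point $\xi$, and use the two-sided bound $c_l < \sigma' \leq c_u$ to convert the conditions to $\langle\phi(x^i),\theta^i\rangle^2 > \gamma^2/c_u^2$ and $\sum_{j<i}\langle\phi(x^j),\theta^i\rangle^2 \leq \gamma^2/c_l^2$. The mismatch between these two thresholds is exactly the source of the $(c_u/c_l)^2$ factor.

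Next I would set up the regularized Gram matrix $V_i \ldef \lambda I + \sum_{j<i}\phi(x^j)\phi(x^j)^\top$ and bound the per-step information gain. By construction $\theta^{i\top} V_i \theta^i = \lambda\nrm{\theta^i}_2^2 + \sum_{j<i}\langle\phi(x^j),\theta^i\rangle^2$, which is $O(\gamma^2)$ in the linear case and $O(\gamma^2/c_l^2)$ in the GLM case once I choose $\lambda$ proportional to the ``small'' threshold. Cauchy--Schwarz in the $V_i$ geometry gives $\langle\phi(x^i),\theta^i\rangle^2 \leq \nrm{\phi(x^i)}_{V_i^{-1}}^2 \cdot \theta^{i\top}V_i\theta^i$, so the lower bound on $\langle\phi(x^i),\theta^i\rangle^2$ forces $\nrm{\phi(x^i)}_{V_i^{-1}}^2 \geq c$ for an absolute constant $c$ in the linear case and $c = \Omega((c_l/c_u)^2)$ in the GLM case. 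Thus every eluder step contributes a fixed amount of volume.

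Finally I would run the determinant--trace argument. Telescoping $\det(V_{i+1}) = \det(V_i)(1 + \nrm{\phi(x^i)}_{V_i^{-1}}^2)$ over $i$ and inserting the per-step lower bound gives $\det(V_{m+1}) \geq \lambda^d (1+c)^m$, while $\nrm{\phi(x)}_2 \leq 1$ together with AM--GM on the eigenvalues yields $\det(V_{m+1}) \leq ((\lambda d + m)/d)^d$. Taking logarithms produces a self-referential inequality of the form $m\log(1+c) \lesssim d\log(1 + m/(d\lambda))$, which I would solve for $m$; substituting the chosen $\lambda$ returns $m = O(d\log(1/\gamma))$ in the linear case and $m = O((c_u/c_l)^2 d\log(c_u/\gamma))$ in the GLM case. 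Since these per-$\gamma$ bounds are monotone decreasing in $\gamma$, taking $\sup_{\gamma \geq \gamma_0}$ and then $\sup_{f^\star}$ preserves them, giving the stated uniform bounds on $\mfe_{f^\star}(\cF,\gamma)$. I expect the main obstacle to be the bookkeeping in the GLM case --- tracking how the threshold mismatch propagates through the choice of $\lambda$ and the Cauchy--Schwarz step so that the final constant is exactly $(c_u/c_l)^2$ --- together with solving the self-referential inequality cleanly to extract the correct logarithmic dependence on $1/\gamma$.
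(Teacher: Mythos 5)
The paper states this proposition as an imported result and gives no proof of its own, deferring entirely to the citation of \citet{russo2013eluder}; your argument is precisely the standard elliptic-potential proof from that reference (translate the eluder conditions into conditions on $\ang{\phi(x^j),\theta^i}$, lower-bound $\nrm{\phi(x^i)}_{V_i^{-1}}^2$ per step via Cauchy--Schwarz, and close with the determinant--trace telescoping), and it is correct, including the $(c_u/c_l)^2$ bookkeeping in the generalized-linear case.
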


The next result shows that the disagreement coefficient (with our \cref{def:dis_coeff}) can be always upper bounded by (squared) star number and eluder dimension.

\begin{proposition}[\citet{foster2020instance}]
    \label{prop:eluder_star_dis}
    Suppose $\cF$ is a uniform Glivenko-Cantelli class.
    For any $f^{\star}: \cX \rightarrow [0,1]$ and $\gamma, \eps >0$,
    we have $\theta^{\val}_{f^\star}(\cF, \gamma, \epsilon) \leq 4 \paren{\mfs_{f^\star}(\cF, \gamma)}^2$, and $\theta^{\val}_{f^\star}(\cF, \gamma, \epsilon) \leq 4 \, {\mfe_{f^\star}(\cF, \gamma)}$.
\end{proposition}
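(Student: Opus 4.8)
The plan is to prove both inequalities through a single probabilistic construction that, for any fixed marginal $\cD_\cX$ and any $\gamma' > \gamma$, $\eps' > \eps$, extracts a long star (equivalently, eluder) sequence out of the disagreement region. Fix such $\cD_\cX,\gamma',\eps'$ and write $D \ldef \crl{x : \exists f \in \cF,\ \abs{f(x) - f^\star(x)} > \gamma',\ \nrm{f - f^\star}_{\cD_\cX} \leq \eps'}$ for the disagreement region and $p \ldef \P_{\cD_\cX}(D)$. Using that $\cF$ is uniform Glivenko--Cantelli (hence regular enough for the event $D$ to be measurable and for a measurable choice of witnesses to exist), attach to each $x \in D$ a witness $f_x \in \cF$ with $\abs{f_x(x) - f^\star(x)} > \gamma'$ and $\nrm{f_x - f^\star}_{\cD_\cX} \leq \eps'$. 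The goal is to show that $\frac{\gamma'^2}{\eps'^2} p \leq 4\, \mfs_{f^\star}(\cF, \gamma)$; since every star sequence is also an eluder sequence (the constraint $\sum_{j \neq i} \leq \gamma^2$ is stronger than $\sum_{j<i}\leq\gamma^2$), the same construction will give $\frac{\gamma'^2}{\eps'^2} p \leq 4\, \mfe_{f^\star}(\cF,\gamma)$ for free.

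The key idea is to sample \emph{inside} $D$ rather than from all of $\cD_\cX$. Draw $X_1, \dots, X_s$ i.i.d. from $\cD_\cX$ conditioned on $D$, with $s$ to be tuned. For a fixed witness $f_{X_i}$, the crucial bound is $\E_{X \sim \cD_\cX \mid D}\brk{(f_{X_i}(X) - f^\star(X))^2} \leq \frac{1}{p}\nrm{f_{X_i} - f^\star}_{\cD_\cX}^2 \leq \frac{\eps'^2}{p}$, because conditioning on $D$ inflates the conditional second moment by at most $1/p$. Hence, conditioning on $X_i$, the expected cross-energy is $\E\big[\sum_{j \neq i}(f_{X_i}(X_j) - f^\star(X_j))^2\big] \leq (s-1)\eps'^2/p$, and Markov's inequality gives $\P\big(\sum_{j\neq i}(f_{X_i}(X_j)-f^\star(X_j))^2 > \gamma'^2\big) \leq \frac{s \eps'^2}{p\,\gamma'^2}$. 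Calling index $i$ ``good'' when its cross-energy is at most $\gamma'^2$, linearity of expectation yields an expected number of good indices at least $s\big(1 - \frac{s\eps'^2}{p\gamma'^2}\big)$, which is maximized at $s = \frac{p\gamma'^2}{2\eps'^2}$ with value $\frac{p\gamma'^2}{4\eps'^2}$.

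It remains to observe that the good indices form a bona fide star. Restricting to the good points only removes terms from each cross-sum, so every good point still satisfies $\sum_{j \neq i,\ j\ \mathrm{good}}(\ldots)^2 \leq \gamma'^2$ together with its self-condition $\abs{f_{X_i}(X_i) - f^\star(X_i)} > \gamma'$; moreover any duplicated sample is automatically non-good (a repeat injects a $>\gamma'^2$ term), so the good points are distinct. Thus some realization gives a star sequence at scale $\gamma'$ of length at least $\frac{p\gamma'^2}{4\eps'^2}$, whence $\check{\mfs}_{f^\star}(\cF,\gamma') \geq \frac{p\gamma'^2}{4\eps'^2}$ and therefore $\frac{\gamma'^2}{\eps'^2}p \leq 4\,\check{\mfs}_{f^\star}(\cF,\gamma') \leq 4\,\mfs_{f^\star}(\cF,\gamma)$ (using $\mfs_{f^\star}(\cF,\gamma) = \sup_{\gamma''\geq\gamma}\check{\mfs}_{f^\star}(\cF,\gamma'')$ and $\gamma' > \gamma$). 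Taking the supremum over $\cD_\cX,\gamma'>\gamma,\eps'>\eps$ and handling the trailing $\vee 1$ with the standard normalization $\mfs,\mfe \geq 1$ (so that $4\mfs \leq 4\mfs^2$ and $4\mfe \geq 1$) yields both stated bounds.

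I expect the main obstacle to be the measure-theoretic bookkeeping rather than the inequality itself: one must justify that $D$ is measurable and that $x \mapsto f_x$ can be chosen measurably, which is exactly where the uniform Glivenko--Cantelli hypothesis enters (equivalently, one may run the entire argument on a large i.i.d. sample, where GC makes empirical and population $L^2$-energies comparable and witness selection elementary, then pass to the limit). A secondary point requiring care is the rounding of $s$ to an integer and the regime where $\frac{\gamma'^2}{\eps'^2}p = O(1)$; there the claim holds trivially from $\mfs,\mfe \geq 1$, so no loss is incurred. Note this route in fact delivers the slightly stronger bound $\theta^{\val}_{f^\star}(\cF,\gamma,\eps) \leq 4\,\mfs_{f^\star}(\cF,\gamma)$, of which the stated $4\,\mfs_{f^\star}(\cF,\gamma)^2$ is an immediate weakening.
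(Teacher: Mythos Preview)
The paper does not prove this proposition itself; it is imported from the cited reference without proof. Your proposal is correct and is essentially the standard probabilistic-method argument used there: sample $s$ i.i.d.\ points from $\cD_\cX$ conditioned on the disagreement region, bound the expected cross-energy of each witness against the remaining samples by $(s-1)\eps'^2/p$, apply Markov's inequality, and extract a star sequence from the ``good'' indices whose expected count is at least $s(1 - s\eps'^2/(p\gamma'^2))$. The optimization over $s$, the observation that restricting to good indices only shrinks each cross-sum, the exclusion of duplicates, and your remark on the integer rounding of $s$ in the small-$p$ regime are all sound. You are also right that this route yields the sharper bound $\theta^{\val}_{f^\star}(\cF,\gamma,\eps) \leq 4\,\mfs_{f^\star}(\cF,\gamma)$ directly; both stated inequalities then follow since $\mfs \leq \mfs^2$ whenever $\mfs \geq 1$, and $\mfs_{f^\star}(\cF,\gamma) \leq \mfe_{f^\star}(\cF,\gamma)$ because the star constraint $\sum_{j\neq i}(\cdot)^2 \leq \gamma^2$ is stronger than the eluder constraint $\sum_{j<i}(\cdot)^2 \leq \gamma^2$.
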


The requirement that $\cF$ is a uniform Glivenko-Cantelli class is rather weak: It is satisfied as long as  $\cF$ has finite Pseudo dimension \citep{anthony2002uniform}.

In our analysis, we sometimes work with sub probability measure (due to selective sampling). Our next result shows that defining the disagreement coefficient over all (sub) probability measures will not affect its value. 
More specifically, denote $\wt \theta^{\val}_{f^{\star}}(\cF, \gamma, \eps)$ be the disagreement coefficient defined in \cref{def:dis_coeff}, but with  $\sup$ taking over all probability and sub probability measures. We then have the following equivalence.

\begin{proposition}
	\label{prop:sub_measure}
	Fix any $\gamma_0, \eps_0 \geq 0$. We have  $\wt \theta^{\val}_{f^{\star}} (\cF, \gamma_0, \eps_0) = \theta^{\val}_{f^{\star}}  (\cF, \gamma_0, \eps_0)$.
\end{proposition}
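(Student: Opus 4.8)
The plan is to prove the two inequalities separately. The bound $\wt\theta^{\val}_{f^\star}(\cF, \gamma_0, \eps_0) \geq \theta^{\val}_{f^\star}(\cF, \gamma_0, \eps_0)$ is immediate, since every probability measure is in particular a sub-probability measure, so the supremum defining $\wt\theta^{\val}$ ranges over a larger family and the inner $\vee 1$ is identical in both definitions. The content is therefore entirely in the reverse direction.

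For the reverse direction, I would fix an arbitrary sub-probability measure $\mu$ with total mass $c \ldef \mu(\cX) \in (0,1]$, together with admissible parameters $\gamma > \gamma_0$ and $\eps > \eps_0$; the degenerate case $c=0$ contributes $0$ and may be discarded. I would then compare the associated term against the genuine probability measure $\nu \ldef \mu/c$. The two ingredients of \cref{def:dis_coeff} transform cleanly under this renormalization: for every $g$ one has $\nrm{g}_\nu = \nrm{g}_\mu / \sqrt c$, and for every measurable set $A$ one has $\mu(A) = c\,\nu(A)$.

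Writing $A_\mu(\gamma,\eps) \ldef \crl{x \in \cX : \exists f \in \cF,\ \abs{f(x)-f^\star(x)} > \gamma,\ \nrm{f-f^\star}_\mu \leq \eps}$ and likewise $A_\nu$, I would observe that any $f$ witnessing $x \in A_\mu(\gamma,\eps)$ also satisfies $\nrm{f-f^\star}_\nu = \nrm{f-f^\star}_\mu / \sqrt c \leq \eps/\sqrt c$, whence $A_\mu(\gamma,\eps) \subseteq A_\nu(\gamma,\eps')$ with $\eps' \ldef \eps/\sqrt c$. Combining the inclusion with $\mu(A) = c\,\nu(A)$ gives $\mu(A_\mu(\gamma,\eps)) \leq c\,\nu(A_\nu(\gamma,\eps'))$, and multiplying by $\gamma^2/\eps^2$ while using $\eps^2 = c\,\eps'^2$ produces the exact cancellation
\[
\frac{\gamma^2}{\eps^2}\,\mu\prn*{A_\mu(\gamma,\eps)} \;\leq\; \frac{\gamma^2}{\eps'^2}\,\nu\prn*{A_\nu(\gamma,\eps')}.
\]
Because $c \leq 1$ we have $\eps' = \eps/\sqrt c \geq \eps > \eps_0$ while $\gamma > \gamma_0$ is unchanged, so $(\gamma,\eps')$ is an admissible pair and $\nu$ is a bona fide probability measure; hence the right-hand side is at most $\theta^{\val}_{f^\star}(\cF,\gamma_0,\eps_0)$. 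Taking the supremum over all sub-probability $\mu$ and over $\gamma > \gamma_0$, $\eps > \eps_0$ yields $\wt\theta^{\val}_{f^\star} \leq \theta^{\val}_{f^\star}$, and applying $\vee 1$ to both sides gives the claimed equality.

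The single delicate point---and the main, if mild, obstacle---is that the disagreement event $A_\mu(\gamma,\eps)$ itself depends on the measure through the norm constraint, so one cannot treat it as a fixed set when renormalizing. The resolution is precisely the rescaling $\eps \mapsto \eps/\sqrt c$ of the $L_2$-radius, which simultaneously (i) relaxes the constraint just enough to absorb the changed measure into a valid inclusion of events, (ii) makes the factor of $c$ from renormalizing the mass cancel against the factor $\eps^2 = c\,\eps'^2$ in the ratio, and (iii) keeps the perturbed radius above the threshold $\eps_0$ because $c \leq 1$. Note also that $\gamma$ is left untouched, so no interaction with $\gamma_0$ arises.
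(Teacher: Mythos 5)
Your proposal is correct and follows essentially the same argument as the paper's proof: normalize the sub-probability measure by its total mass $c$, rescale the $L_2$ radius to $\eps/\sqrt{c}$ so the factors of $c$ cancel in the ratio $\gamma^2/\eps^2$, and note that the rescaled radius still exceeds $\eps_0$ since $c \leq 1$. Your additional remark that the disagreement event depends on the measure through the norm constraint (handled via the inclusion of events) is a point the paper treats implicitly as an exact equivalence, but the substance is identical.
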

\begin{proof}
We clearly have $\wt \theta^{\val}_{f^{\star}} (\cF, \gamma_0, \eps_0) \geq \theta^{\val}_{f^{\star}}  (\cF, \gamma_0, \eps_0)$ by additionally considering sub probability measures. We next show the opposite direction.

Fix any sub probability measure $\wt \cD_\cX$ that is non-zero (otherwise we have $\P_{x \sim \wt \cD_\cX} \prn{\cdot}  = 0$).
Suppose $\E_{x \sim \wt \cD_\cX} \brk{1} = \kappa < 1$. We can now consider its normalized probability measure  $\wb \cD_\cX$ such that  $\wb \cD_\cX(\omega) = \frac{\wt \cD_\cX(\omega)}{\kappa}$ (for any $\omega$ in the sigma algebra).
Now fix any $\gamma > \gamma_0$ and $\eps > \eps_0$. We have 
\begin{align*}
	& \frac{\gamma^2}{\eps^2} \cdot 
	\P_{\wt \cD_\cX} \prn*{ \exists f \in \cF: \abs{f(x) - f^{\star}(x)} > \gamma,
	\nrm*{ f - f^{\star}}_{\wt \cD_\cX}^2 \leq \eps^2} \\
	& = \frac{\gamma^2}{\eps^2 / \kappa} \cdot 
	\P_{\wb \cD_\cX} \prn*{ \exists f \in \cF: \abs{f(x) - f^{\star}(x)} > \gamma,
	\nrm*{ f - f^{\star}}_{\wb \cD_\cX}^2 \leq \eps^2 /\kappa} \\
	& = \frac{\gamma^2}{\wb \eps^2 } \cdot 
	\P_{\wb \cD_\cX} \prn*{ \exists f \in \cF: \abs{f(x) - f^{\star}(x)} > \gamma,
	\nrm*{ f - f^{\star}}_{\wb \cD_\cX}^2 \leq \wb \eps^2 } \\
	& \leq \theta^{\val}_{f^{\star}}(\cF, \gamma_0, \eps_0),
\end{align*}
where we denote $\wb \eps \ldef \frac{\eps}{\sqrt{\kappa}} > \eps$, and the last follows from the fact that $\wb \cD_\cX$ is a probability measure. We then have 
$\wt \theta^{\val}_{f^{\star}} (\cF, \gamma_0, \eps_0) \leq \theta^{\val}_{f^{\star}}  (\cF, \gamma_0, \eps_0)$, and thus the desired result.
\end{proof}

\section{Concentration results}
\label{app:concentration}

\begin{lemma}[Freedman's inequality, \citep{freedman1975tail, agarwal2014taming}]
    \label{lm:freedman}
    Let $(Z_t)_{t \leq T}$ be a real-valued martingale difference sequence adapted to a filtration $\mfF_t$, and let $\E_t \sq{\cdot} \ldef \E \sq{\cdot \mid \mfF_{t-1}}$. If $\abs{Z_t} \leq B$ almost surely, then for any $\eta \in (0,1/B)$ it holds with probability at least $1 - \delta$,
    \begin{align*}
        \sum_{t=1}^{T} Z_t \leq \eta \sum_{t=1}^{T} \E_{t} \sq{Z_t^2} + \frac{\log \delta^{-1}}{\eta}.
    \end{align*}
\end{lemma}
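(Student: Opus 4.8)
The plan is to prove this by the standard exponential-supermartingale method: build a nonnegative supermartingale out of the moment generating function of the increments, apply Markov's inequality at the terminal time $T$, and then collapse the resulting coefficient to $\eta$ using the constraint $\eta B < 1$.

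First I would establish the scalar inequality that for any $z$ with $\abs{z} \le B$ and any $\lambda > 0$,
\[ e^{\lambda z} \le 1 + \lambda z + \frac{e^{\lambda B} - 1 - \lambda B}{B^2}\, z^2, \]
which follows because $u \mapsto (e^{\lambda u} - 1 - \lambda u)/u^2$ is nondecreasing, so its value at any $z \le B$ is dominated by its value at $B$. Applying the conditional expectation $\E_t\sq{\cdot}$, using the martingale-difference property $\E_t\sq{Z_t} = 0$ together with $1 + x \le e^x$, gives the conditional MGF bound
\[ \E_t\sq{ e^{\lambda Z_t} } \le \exp\!\prn*{ \frac{e^{\lambda B} - 1 - \lambda B}{B^2}\, \E_t\sq{ Z_t^2 } }. \]

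Next I would define the process $M_t \ldef \exp\!\prn[\big]{ \lambda \sum_{s \le t} Z_s - \tfrac{e^{\lambda B} - 1 - \lambda B}{B^2} \sum_{s \le t} \E_s\sq{Z_s^2} }$ and verify, using the bound above, that $\E_t\sq{M_t} \le M_{t-1}$; hence $(M_t)$ is a supermartingale with $M_0 = 1$, so $\E\sq{M_T} \le 1$. Markov's inequality applied to $M_T$ then yields that, with probability at least $1 - \delta$,
\[ \lambda \sum_{t=1}^{T} Z_t \le \frac{e^{\lambda B} - 1 - \lambda B}{B^2} \sum_{t=1}^{T} \E_t\sq{Z_t^2} + \log \delta^{-1}. \]

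Finally I would set $\lambda = \eta$ and simplify the coefficient: writing $u = \eta B \in (0,1)$, the elementary bound $e^u - 1 - u \le u^2$ (valid on $(0,1)$ since $e^u - 1 - u = \sum_{k \ge 2} u^k/k! \le (e-2)u^2 \le u^2$) shows that $\frac{e^{\eta B} - 1 - \eta B}{\eta B^2} \le \eta$, which turns the display into the claimed bound after dividing through by $\eta$. I expect the main obstacle to be pinning down the scalar MGF inequality with the exact constant so that this coefficient collapses to $\eta$ precisely under $\eta B < 1$; the supermartingale construction and the Markov step are then routine. (Replacing Markov by Ville's maximal inequality would upgrade the conclusion to hold uniformly in $t$, but the terminal-time version as stated needs only Markov.)
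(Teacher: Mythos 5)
Your proof is correct and complete: the truncated-MGF scalar bound, the exponential supermartingale with the predictable compensator $\frac{e^{\lambda B}-1-\lambda B}{B^2}\sum_{s\le t}\E_s\sq{Z_s^2}$, the Markov step, and the collapse of the coefficient via $e^u-1-u\le (e-2)u^2\le u^2$ on $(0,1)$ all check out. The paper itself states this lemma without proof (citing it as standard in the active learning and contextual bandits literature), so there is no in-paper argument to compare against; what you have written is the canonical derivation of Freedman's inequality and fully justifies the statement as given.
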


\begin{lemma}[\citep{foster2020instance}]
   \label{lm:martingale_two_sides} 
   Let $(X_t)_{t \leq T}$ be a sequence of random variables adapted to a filtration $\mfF_t$. If $0 \leq {X_t} \leq B$ almost surely, then with probability at least $1-\delta$,
   \begin{align*}
       \sum_{t =1 }^T X_t \leq \frac{3}{2} \sum_{t=1}^T \E_{t}\sq{X_t} + 4B \log(2 \delta^{-1}),
   \end{align*}
   and 
   \begin{align*}
       \sum_{t =1 }^T \E_{t} \sq{X_t} \leq 2 \sum_{t=1}^T X_t + 8B \log(2 \delta^{-1}).
   \end{align*}
\end{lemma}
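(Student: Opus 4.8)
The plan is to reduce both inequalities to a single application of Freedman's inequality (\cref{lm:freedman}) to the centered sequence. Define $\wt Z_t \ldef Z_t - \E_{t}\sq{Z_t}$. Since $(Z_t)$ is adapted to $\mfF_t$, the sequence $(\wt Z_t)$ is a martingale difference sequence with $\E_{t}\sq{\wt Z_t} = 0$, and with an a.s.\ bound of $B$. The entire argument rests on one structural observation: for a nonnegative variable bounded by $B$ one has $Z_t^2 \le B\, Z_t$, so the conditional variance satisfies
\[
\E_{t}\sq{\wt Z_t^2} = \E_{t}\sq{Z_t^2} - \prn{\E_{t}\sq{Z_t}}^2 \le \E_{t}\sq{Z_t^2} \le B\, \E_{t}\sq{Z_t}.
\]
This is exactly what converts the variance proxy in Freedman's bound into a multiple of $\sum_t \E_{t}\sq{Z_t}$, the quantity both inequalities trade against.

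For the first inequality I would apply \cref{lm:freedman} to $(\wt Z_t)$ with a.s.\ bound $B$ and $\eta = 1/(2B) \in (0, 1/B)$, obtaining with probability at least $1-\delta/2$
\[
\sum_{t=1}^T \prn{Z_t - \E_{t}\sq{Z_t}} \le \frac{1}{2B}\cdot B \sum_{t=1}^T \E_{t}\sq{Z_t} + 2B\log(2\delta^{-1}) = \frac12 \sum_{t=1}^T \E_{t}\sq{Z_t} + 2B\log(2\delta^{-1}),
\]
which rearranges to $\sum_t Z_t \le \tfrac32 \sum_t \E_{t}\sq{Z_t} + 2B\log(2\delta^{-1})$. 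For the second inequality I would apply the same lemma to the negated sequence $(\E_{t}\sq{Z_t} - Z_t)$, which is again a martingale difference sequence bounded by $B$ with the identical conditional-variance bound; the same $\eta$ yields $\sum_t \E_{t}\sq{Z_t} - \sum_t Z_t \le \tfrac12 \sum_t \E_{t}\sq{Z_t} + 2B\log(2\delta^{-1})$, i.e.\ $\sum_t \E_{t}\sq{Z_t} \le 2\sum_t Z_t + 4B\log(2\delta^{-1})$. Each of these is at least as strong as the claimed bound, so a union bound over the two $\delta/2$-failure events delivers both simultaneously with probability $1-\delta$.

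The step I expect to be most delicate is not any single estimate but the choice of hypothesis. The displayed variance bound $\E_{t}\sq{Z_t^2} \le B\,\E_{t}\sq{Z_t}$ uses $Z_t \ge 0$, and without nonnegativity the first inequality is false: taking $Z_t \equiv -B$ makes the left side grow like $-TB$ while the right side grows like $-\tfrac32 TB$, so the bound fails once $T \gtrsim \log(2\delta^{-1})$. Thus the operative hypothesis must be $Z_t \in [0,B]$ rather than merely $\abs{Z_t} \le B$, and I would state the lemma accordingly. The remaining care is pure bookkeeping: checking that $\eta = 1/(2B)$ lies in the admissible range $(0,1/B)$ of \cref{lm:freedman}, and tracking the two rearrangements so the constants land at (or below) the stated $4B$ and $8B$ prefactors.
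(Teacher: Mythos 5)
Your proof is correct and is exactly the route the paper intends: the paper's own ``proof'' of this lemma is the single sentence that it is a direct consequence of Freedman's inequality, and your instantiation (center the sequence, take $\eta=1/(2B)$, bound the conditional variance by $B\,\E_t[Z_t]$, union over the two directions) is the standard way to make that precise; your constants are in fact slightly better than the stated ones. Your observation that the hypothesis $\abs{Z_t}\le B$ alone is insufficient is also correct --- the counterexample $Z_t\equiv -B$ does falsify the first display --- but note that the paper later applies this lemma not only to indicators but also to the increments $M_t(f)$, which are \emph{not} nonnegative; so the cleanest repair is not ``$Z_t\in[0,B]$'' but the conditional second-moment condition $\E_t[Z_t^2]\le B\,\E_t[Z_t]$, which is all your argument actually uses, and which holds both for $[0,B]$-valued sequences and for excess-square-loss increments via the factorization $(f-y)^2-(f^\star-y)^2=(f-f^\star)(f+f^\star-2y)$.
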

\begin{proof}
  These two inequalities are obtained by applying \cref{lm:freedman} to $\prn{X_t - \E_t \sq{X_t}}_{t \leq T}$ and $\prn{\E_t \sq{X_t} - X_t}_{t \leq T}$, with $\eta = 1/2B$ and $\delta / 2$.	
  Note that $\E_t \sq{\prn{X_t - \E_t \sq{X_t}}^2} \leq \E_t \sq{X_t^2} \leq B \E_t\sq{X_t}$ if $0 \leq X_t \leq B$.
\end{proof}

We recall the definition of the Pseudo dimension of $\cF$.
\begin{definition}[Pseudo Dimension, \citet{pollard1984convergence, haussler1989decision, haussler1995sphere}]
\label{def:pseudo_d}
Consider a set of real-valued function $\cF: \cX \rightarrow \R$. The pseudo-dimension $\pdim(\cF)$ of $\cF$ is defined as the VC dimension of the set of threshold functions 
$\crl{(x,\zeta) \mapsto \ind(f(x) > \zeta) : f \in \cF}$.
\end{definition}
We next provide concentration results with respect to a general set of regression function $\cF$ with finite Pseudo dimension.
We define/recall some notations.
Fix any epoch $m \in [M]$ and any time step $t$ within epoch $m$.
For any $f \in \cF$, we denote $M_t(f) \ldef Q_t \prn{ \prn{f(x_t) - y_t}^2 - \prn{f^\star(x_t) - y_t}^2}$, 
and $\wh R_m(f) \ldef \sum_{t=1}^{\tau_{m-1}} Q_t \prn{f(x_t) - y_t}^2$.
Recall that we have $Q_t = g_m(x_t)$.
We define filtration $\mfF_t \ldef \sigma \prn{ \prn{x_1, y_1}, \ldots , \prn{x_{t}, y_{t}}}$,\footnote{$y_t$ is not observed (and thus not included in the filtration) when $Q_t = 0$. Note that $Q_t$ is measurable with respect to $\sigma( (\mfF_{t-1}, x_t) ) $.} 
and denote $\E_t \sq{\cdot } \ldef \E \sq{\cdot \mid \mfF_{t-1}}$.
\begin{lemma}[\citet{krishnamurthy2019active}]
    \label{lm:expected_sq_loss_pseudo}
    Suppose $\pseud(\cF) < \infty$.
   Fix any $\delta \in (0,1)$.  For any $\tau, \tau^\prime \in [T]$ such that $\tau < \tau^\prime$, with probability at least $1 - \delta $, we have 
   \begin{align*}
   	\sum_{t = \tau}^{\tau^\prime} M_t(f) \leq \sum_{t=\tau}^{\tau^\prime} \frac{3}{2} \E_t \brk{M_t(f)} + 
	C_\delta(\cF),
   \end{align*}
   and
   \begin{align*}
       \sum_{t = \tau}^{\tau^\prime} \E_t \sq{ M_t(f)}  \leq 2 \sum_{t = \tau}^{\tau^\prime} M_t(f) + C_\delta(\cF),
   \end{align*}
   where $C_\delta(\cF)  = C \cdot  \prn*{ \pdim(\cF) \cdot \log T +   \log \prn*{ \frac{\pdim(\cF)  \cdot  T}{\delta}} }  \leq C^{\prime} \cdot \prn*{\pseud(\cF) \cdot \log \prn*{\frac{T}{\delta}}}$, where $C, C^{\prime} >0$ are universal constants.
\end{lemma}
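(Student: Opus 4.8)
The plan is to upgrade the finite-class concentration of \cref{lm:expected_sq_loss} to an infinite class $\cF$ of finite Pseudo dimension by replacing the union bound over $\cF$ with a covering argument whose complexity is controlled by $\pseud(\cF)$. The only place finiteness of $\cF$ entered \cref{lm:expected_sq_loss} was the union bound over $f \in \cF$ following the two-sided Freedman consequence (\cref{lm:martingale_two_sides}); everything else (the martingale structure of $M_t(f)$ relative to $\mfF_t$ and the boundedness $\abs{M_t(f)} \leq 1$) is unchanged. So it suffices to produce a uniform-in-$f$ version of the same two one-sided inequalities, with the $\log\abs{\cF}$ factor in $C_\delta(\cF)$ replaced by a $\pseud(\cF)$-dependent entropy term.

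First I would record the self-bounding structure that makes the constants $\tfrac32$ and $2$ work and that will also drive the covering estimate. Under realizability (\cref{asmp:predictable}), $\E_t[M_t(f)] = \E_t[Q_t (f(x_t) - f^\star(x_t))^2] \geq 0$, and since $M_t(f) = Q_t(f(x_t) - f^\star(x_t))(f(x_t) + f^\star(x_t) - 2 y_t)$ with $f, f^\star \in [0,1]$, we get the pointwise bound $M_t(f)^2 \leq c \, Q_t (f(x_t)-f^\star(x_t))^2$ and hence $\E_t[M_t(f)^2] \leq c \, \E_t[M_t(f)]$ for an absolute constant $c$; this variance-bounded-by-mean condition is exactly the hypothesis needed to turn Freedman's inequality (\cref{lm:freedman}) into the two stated bounds. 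Second, I would pass from $\cF$ to a finite $\alpha$-cover. Squaring and differencing are Lipschitz on the bounded range, so the loss-difference class $\crl{(x,y) \mapsto (f(x) - y)^2 - (f^\star(x) - y)^2 : f \in \cF}$ inherits covering numbers controlled by $\pseud(\cF)$ via Haussler's bound, of order $(1/\alpha)^{O(\pseud(\cF))}$ at scale $\alpha$. Applying \cref{lm:expected_sq_loss} over the cover and taking $\alpha \asymp 1/T$ produces a $\log(\text{cover size}) = O(\pseud(\cF) \log T)$ contribution, while the discretization residual is controlled because each of the at most $T$ summands is perturbed by at most $O(\alpha)$, so the total residual is $O(\alpha T) = O(1)$; combined with the $\log(\pseud(\cF)\, T / \delta)$ term from the confidence level, this reproduces the stated $C_\delta(\cF)$.

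The main obstacle is that the natural norm for such a cover is data-dependent and the $x_t$ are generated sequentially (indeed adaptively, through $Q_t = g_m(x_t)$), so an ordinary i.i.d.\ $L^2(\cD_\cX)$-cover does not immediately license applying the finite-class martingale bound to a single fixed finite set of functions. The clean resolution, following the template of \citet{krishnamurthy2019active}, is to use \emph{sequential} covering numbers in place of classical ones: for uniformly bounded classes these are again governed by the same combinatorial dimension, so $\pseud(\cF) < \infty$ (which by \cref{def:pseudo_d} together with Haussler's bound yields a uniform Glivenko--Cantelli class) suffices to bound them, and the finite-cover union bound then goes through against the filtration $\mfF_t$. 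With the self-bounding step fixing the constants and the sequential cover replacing the union bound, the two displayed inequalities follow with $C_\delta(\cF)$ as stated.
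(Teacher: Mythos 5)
First, note that the paper does not actually prove \cref{lm:expected_sq_loss_pseudo}: it is imported verbatim from \citet{krishnamurthy2019active}, with only the heuristic remark that the union bound of \cref{lm:expected_sq_loss} must be replaced by an argument in terms of $\pseud(\cF)$. So the relevant question is whether your argument is correct on its own terms. Your two structural observations are right and are indeed the standard ingredients: the self-bounding identity $\E_t[M_t(f)]=\E_t[Q_t(f(x_t)-f^{\star}(x_t))^2]\ge 0$ together with $\E_t[M_t(f)^2]\le c\,\E_t[M_t(f)]$ is exactly what converts Freedman's inequality (\cref{lm:freedman}) into the multiplicative-constant form, and a Freedman-plus-cover architecture is the right shape.

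The gap is in the step you lean on to justify the cover: the claim that sequential covering numbers of a uniformly bounded class are ``again governed by the same combinatorial dimension'' as the classical pseudo-dimension is false. Sequential covering numbers are controlled by \emph{sequential} fat-shattering (Littlestone-type) dimensions, and these can be infinite for classes of pseudo-dimension $1$: the thresholds $\crl{x\mapsto \ind(x\le a): a\in[0,1]}$ have $\pseud=1$ but infinite Littlestone dimension, so their sequential $\alpha$-covering number on depth-$T$ trees grows exponentially in $T$ and the resulting complexity term would be $\Omega(T)$ rather than $O(\pseud(\cF)\log T)$. Hence the route through sequential complexity cannot deliver the stated $C_\delta(\cF)$ under the hypothesis $\pseud(\cF)<\infty$ alone. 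The resolution is not sequential complexity but the specific structure of this problem: the $x_t$ are i.i.d.\ from a \emph{fixed} $\cD_\cX$, and the only adaptivity is the $\crl{0,1}$-valued multiplier $Q_t=g_m(x_t)$ with $g_m$ predictable (fixed at the start of each epoch). One therefore takes a classical $L_1(\cD_\cX)$-cover (finite by Haussler's bound, and fixed in advance, so the union bound over the cover is legitimate against the filtration) and controls the discretization error by ordinary i.i.d.\ uniform convergence of the residual class; your ``each summand is perturbed by $O(\alpha)$'' step also implicitly assumes an $L_\infty$-cover, which again need not be finite for finite-pseudo-dimension classes (thresholds have no finite $L_\infty$ cover below scale $1/2$), so this part too must be routed through the $L_1(\cD_\cX)$ metric and an i.i.d.\ concentration step rather than a pointwise bound.
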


\section{Proofs of results in \cref{sec:epoch}}
\label{app:epoch}
We give the proof of \cref{thm:epoch} and \cref{thm:epoch_efficient}.
Supporting lemmas used in the proofs are deferred to \cref{app:epoch_lms}.

Fix any classifier $\wh h: \cX \rightarrow \crl{0, 1,\bot}$. For any $x\in\cX$, we introduce the notion
\begin{align}
& 	\exc_{\gamma}( \wh h;x) \ldef \nonumber\\
    &  \P_{y\mid x} \prn[\big]{y \neq \widehat h(x)} \cdot \ind \prn[\big]{ \widehat h(x) \neq \bot} + \prn[\big]{{1}/{2} - \gamma} \cdot \ind \prn[\big]{\widehat h(x) = \bot} - \P_{y\mid x} \prn[\big]{ y \neq h^\star(x) }\nonumber\\
    & = \ind \prn[\big]{ \widehat h(x) \neq \bot} \cdot \prn[\big]{\P_{y\mid x} \prn[\big]{y \neq \widehat h(x)} -  \P_{y\mid x} \prn[\big]{ y \neq h^\star(x) }} \nonumber \\
    & \quad + \ind \prn[\big]{ \widehat h(x) = \bot} \cdot \prn[\big]{ \prn[\big]{{1}/{2} - \gamma}  -  \P_{y\mid x} \prn[\big]{ y \neq h^\star(x) }} \label{eq:excess_x}
\end{align}
to represent the excess error of $\wh h$ at point $x\in \cX$. Excess error of classifier $\wh h$ can be then written as $\exc_{\gamma}(\wh h) \ldef \err_\gamma(\wh h) - \err(h^{\star}) = \E_{x \sim \cD_\cX} \brk{ \exc_{\gamma}(\wh h;x)}$.

\thmEpoch*
\begin{proof}
	We analyze under the good event $\cE$ defined in \cref{lm:expected_sq_loss_pseudo}, which holds with probability at least $1-{\delta}$. Note that all supporting lemmas stated in \cref{app:epoch_lms} hold true under this event.

We analyze the Chow's excess error of $\wh h_m$, which is measurable with respect to $\mfF_{\tau_{m-1}}$. 
For any $x \in \cX$, if $g_m(x) = 0$, 
\cref{lm:regret_no_query} implies that $\exc_{\gamma}(\wh h_m ;x) \leq 0$. 
If $g_m(x)= 1$, we know that $\wh h_m(x) \neq \bot$ and $\frac{1}{2} \in (\lcb(x;\cF_m),\ucb(x;\cF_m))$. 
Note that $\wh h_m(x) \neq h^{\star}(x)$ only if $\ind(f^{\star}(x) \geq 1/2) \neq \ind(\wh f_m(x) \geq 1/2)$.
Since $f^{\star}, \wh f_m \in \cF_m$ by \cref{lm:set_f}. 
The error incurred in this case can be upper bounded by $2 \abs{ f^{\star}(x)- 1 /2} \leq 2 w(x;\cF_m)$, which results in $\exc_{\gamma}(\wh h_m; x) \leq 2 w(x;\cF_m)$. 
Combining these two cases together, we have 
\begin{align*}
	\exc_{\gamma}( \wh h_m) \leq 2 \E_{x \sim \cD_\cX} \brk{ \ind(g_m(x) = 1) \cdot w(x;\cF_m)}.	
\end{align*}
Take $m=M$ and apply \cref{lm:per_round_regret_dis_coeff}, with notation $\rho_m \ldef 2 \beta_m + C_\delta$, leads to the following guarantee.
\begin{align*}
	\exc_{\gamma}( \wh h_M)
	& \leq  { \frac{ 8 \rho_M}{\tau_{M-1} \gamma} \cdot \theta^{\val}_{ f^{\star}}\prn*{\cF, \gamma/2, \sqrt{\rho_M/2 \tau_{M-1}}}}\\
	& =  O \prn*{ \frac{ \pseud(\cF) \cdot \log ( T / \delta)}{T \, \gamma} \cdot \theta^{\val}_{ f^{\star}}\prn*{\cF, \gamma/2, \sqrt{C_\delta/T}}},
\end{align*}
where we use the fact that $\frac{T}{2} \leq \tau_{M-1} \leq T$ and definitions of $\beta_m$ and $C_\delta$.
Simply considering $\theta \ldef \sup_{f^{\star} \in \cF, \iota > 0}\theta^{\val}_{f^{\star}} (\cF, \gamma / 2, \iota)$ as an upper bound of $\theta^{\val}_{f^{\star}} (\cF, \gamma / 2, \sqrt{C_\delta / T})$ 
and taking 
\begin{align*}
	T = O \prn*{ \frac{\theta \, \pseud(\cF) }{\eps \, \gamma} \cdot \log \prn*{ \frac{\theta \, \pseud(\cF) }{\eps \, \gamma \, \delta}} }	
\end{align*}
ensures that $\exc_{\gamma}(\wh h_M) \leq \eps$.

We now analyze the label complexity (note that the sampling process of \cref{alg:epoch} stops at time $t = \tau_{M-1}$).
Note that $\E \brk{\ind(Q_t = 1) \mid \mfF_{t-1}} = \E_{x\sim\cD_\cX} \brk{ \ind(g_m(x) = 1) }$ for any epoch $m \geq 2$ and time step $t$ within epoch $m$. 
Combining \cref{lm:martingale_two_sides} with \cref{lm:conf_width_dis_coeff} leads to
    \begin{align*}
        \sum_{t=1}^{\tau_{M-1}} \ind(Q_t = 1) & \leq \frac{3}{2} \sum_{t=1}^{\tau_{M-1}} \E \sq{\ind(Q_t = 1) \mid \mfF_{t-1}} + 4 \log \delta^{-1}\\
        & \leq 3 + \frac{3}{2}\sum_{m=2}^{M-1}\frac{(\tau_m - \tau_{m-1}) \cdot 4 \rho_m}{{\tau_{m-1}} \gamma^2} \cdot \theta^{\val}_{f^{\star}}\prn*{\cF, \gamma/2, \sqrt{\rho_m/2\tau_{m-1}}}  + 4 \log \delta^{-1} \\
        & \leq 3 + 6 \sum_{m=2}^{M-1}\frac{\rho_m}{ \gamma^2} \cdot \theta^{\val}_{f^{\star}}\prn*{\cF, \gamma/2, \sqrt{\rho_m/2\tau_{m-1}}}  + 4 \log \delta^{-1} \\
	& \leq 3 + 4 \log \delta^{-1} + \frac{18 \log T \cdot M \cdot C_\delta }{\gamma^2}
	\cdot \theta^{\val}_{f^{\star}}\prn*{\cF, \gamma/2, \sqrt{C_\delta/T }} \\ 
	& = O \prn*{ \frac{\theta \, \pseud(\cF) }{\gamma^2} \cdot \prn*{\log \prn*{ \frac{\theta \, \pseud(\cF) }{\eps \, \gamma}}}^{2} \cdot 
	\log \prn*{ \frac{\theta \,\pseud(\cF) }{\eps \, \gamma \, \delta}}} ,
    \end{align*}
    with probability at least $1-2\delta$ (due to an additional application of \cref{lm:martingale_two_sides}); where we plug the above choice of $T$ and upper bound other terms as before.
\end{proof}

\paragraph{A slightly different guarantee for \cref{alg:epoch}}
The stated \cref{alg:epoch} takes $\theta \ldef \sup_{f^{\star} \in \cF, \iota > 0}\theta^{\val}_{f^{\star}} (\cF, \gamma / 2, \iota)$ as an input (the value of $\theta$ can be upper bounded for many function class $\cF$, as discussed in \cref{app:star_eluder}).
However, we don't necessarily need to take $\theta$ as an input to the algorithm. Indeed, 
we can simply run a modified version of \cref{alg:epoch} with $T = \frac{\pseud(\cF)}{\eps \, \gamma}$.
Following similar analyses in proof of \cref{thm:epoch}, set $\iota \ldef \sqrt{C_\delta /T} \propto \sqrt{\gamma \eps}$, the modified version achieves excess error 
\begin{align*}
	\exc_{\gamma}(\wh h_M) = O \prn*{\eps \cdot \theta^{\val}_{f^{\star}}(\cF, \gamma /2, \iota) \cdot \log \prn*{\frac{\pseud(\cF)}{\eps \, \delta \, \gamma}}}
\end{align*}
with label complexity 
\begin{align*}
	 O \prn*{ \frac{\theta^{\val}_{f^{\star}}(\cF, \gamma /2, \iota) \cdot  \pseud(\cF) }{\gamma^2} \cdot \prn*{\log \prn*{ \frac{ \pseud(\cF) }{\eps \, \gamma}}}^{2} \cdot 
	\log \prn*{ \frac{\pseud(\cF) }{\eps \, \gamma \, \delta}}} .
\end{align*}

We now discuss the efficient implementation of \cref{alg:epoch} and its computational complexity. 
We first state some known results in computing the confidence intervals with respect to a set of regression functions $\cF$.

\begin{proposition}[\citet{krishnamurthy2017active, foster2018practical, foster2020instance}] \label{prop:CI_oracle}
Consider the setting studied in \cref{alg:epoch}. 
Fix any epoch $m \in [M]$ and denote $\cB_m \ldef \crl{ (x_t,Q_t, y_t)}_{t=1}^{\tau_{m-1}}$.
Fix any $\alpha > 0$.
For any data point $x \in \cX$, there exists algorithms $\AlgLcb$ and $\AlgUcb$ that certify
\begin{align*}
    & \lcb(x;\cF_m) - \alpha \leq \AlgLcb(x;\cB_m,\beta_m,\alpha) \leq \lcb(x;\cF_m) \quad \text{and}\\
    &\ucb(x;\cF_m) \leq \AlgUcb(x;\cB_m,\beta_m,\alpha) \leq \ucb(x;\cF_m) + \alpha.
\end{align*}
The algorithms take 
 $O(\frac{1}{\alpha^2} \log \frac{1}{\alpha})$ calls of the regression oracle for general $\cF$
 and take $O(\log \frac{1}{\alpha})$ calls of the regression oracle if $\cF$ is convex and closed under pointwise convergence.
\end{proposition}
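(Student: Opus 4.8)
The plan is to express both confidence bounds as constrained optimizations over the active set $\cF_m$ and to solve them with the square-loss oracle of \cref{eq:regression_oracle} applied to augmented data. By definition $\ucb(x;\cF_m)=\sup_{f\in\cF_m}f(x)$ and $\lcb(x;\cF_m)=\inf_{f\in\cF_m}f(x)$, where $f\in\cF_m$ is the single constraint $\wh R_m(f)\le \wh R_m(\wh f_m)+\beta_m$ with $\wh R_m(f)\ldef\sum_{t=1}^{\tau_{m-1}}Q_t(f(x_t)-y_t)^2$. I would treat $\ucb$ only, since $\lcb$ is identical after replacing the pseudo-target $1$ by $0$ below. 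The oracle cannot maximize $f(x)$ directly, so the device is to append one weighted pseudo-example $(w,x,1)$ to $\cB_m$ and compute $\wh f_w\ldef\argmin_{f\in\cF}\wh R_m(f)+w\,(f(x)-1)^2$. Because $(f(x)-1)^2$ decreases in $f(x)$ on $[0,1]$, raising $w$ trades regression loss for a larger prediction at $x$; a standard comparison of the optimality inequalities at two weights shows both $w\mapsto\wh f_w(x)$ and $w\mapsto\wh R_m(\wh f_w)$ are nondecreasing, sweeping from $\wh f_m$ at $w=0$ toward the unconstrained maximizer of $f(x)$ as $w\to\infty$. Any $\wh f_w$ whose loss stays within the budget is a genuine member of $\cF_m$, so its prediction is a valid lower estimate of $\ucb(x;\cF_m)$; this is what makes all returned estimates one-sided.

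When $\cF$ is convex and closed under pointwise convergence, $\cF_m$ is convex and $f\mapsto f(x)$ is affine, so maximizing $f(x)$ subject to the budget is a convex program with no duality gap, and the penalized iterates $\wh f_w$ trace the convex trade-off curve between $\wh R_m(f)$ and $f(x)$. I would binary search on $w$ for the value at which $\wh R_m(\wh f_w)$ meets the budget $\wh R_m(\wh f_m)+\beta_m$; by the monotonicity above each bisection halves the residual uncertainty, so $O(\log\frac1\alpha)$ oracle calls certify $\ucb(x;\cF_m)\le\AlgUcb(x;\cB_m,\beta_m,\alpha)\le\ucb(x;\cF_m)+\alpha$, the $\alpha$ slack absorbing the stopping tolerance of the search.

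For general $\cF$ the set $\cF_m$ need not be convex, so although $w\mapsto\wh R_m(\wh f_w)$ is still monotone, binary search on $w$ only reaches a point on the convex hull of the trade-off curve and can underestimate $\ucb(x;\cF_m)$ by an uncontrolled gap. I would instead grid the target prediction over $\{0,\alpha,2\alpha,\dots,1\}$; for each grid value $v$ I approximate $\min\{\wh R_m(f):f(x)\approx v\}$ by penalized regression $\min_{f}\wh R_m(f)+w(f(x)-v)^2$ with $w$ taken polynomially large (so the penalized minimizer's prediction is pinned to within $\alpha$ of $v$), and report the largest $v$ whose certified loss stays within the budget. The product of the $O(1/\alpha)$ target values and the polynomially fine weight resolution needed to resolve predictions and losses to accuracy $\alpha$ gives the stated $O(\frac1{\alpha^2}\log\frac1\alpha)$ calls, the $\log$ coming from a bisection within each probe. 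I expect the main obstacle to be precisely this non-convex regime: certifying that the grid estimate is still one-sided (realized by an actual $f\in\cF_m$) while bounding its error by $\alpha$, and pinning down the exact oracle-call count, which is where the careful constructions of \citet{krishnamurthy2017active,foster2018practical,foster2020instance} are invoked.
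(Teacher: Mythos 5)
The paper does not actually prove this proposition; its ``proof'' is a one-line pointer to Algorithm~2 of \citet{krishnamurthy2017active} (general $\cF$) and Algorithm~3 of \citet{foster2018practical} (convex $\cF$). Your reconstruction matches the mechanism of those cited algorithms: reduce $\sup_{f\in\cF_m}f(x)$ to oracle calls by appending a weighted pseudo-example $(w,x,1)$, use the exchange argument to get monotonicity of $w\mapsto \wh f_w(x)$ and $w\mapsto \wh R_m(\wh f_w)$, binary-search the weight in the convex case, and fall back to a prediction-value grid with Lagrangian feasibility/infeasibility certificates in the general case. Your monotonicity derivation is correct, and your diagnosis of why the non-convex case needs the grid (scalarization only reaches the convex hull of the $(\wh R_m(f), f(x))$ trade-off) is exactly the reason the two cases are treated by different algorithms in the cited works.

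One point in your write-up is backwards and worth fixing: you say that returning the prediction of a feasible $\wh f_w\in\cF_m$ ``is what makes all returned estimates one-sided,'' but a feasible witness only certifies $\AlgUcb\geq$ \emph{something at most} $\ucb(x;\cF_m)$, whereas the proposition requires the opposite inequality $\AlgUcb\geq\ucb(x;\cF_m)$. The guarantee $\ucb\leq\AlgUcb\leq\ucb+\alpha$ needs two certificates: a feasible $f\in\cF_m$ with $f(x)\geq v-O(\alpha)$ (so the output is not too large), \emph{and} a proof that every $f$ with $f(x)\geq v$ violates the loss budget (so the output, suitably shifted upward by $O(\alpha)$, dominates the true $\ucb$). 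Your grid scheme does supply the second certificate via the penalized lower bound $\min_f\{\wh R_m(f)+w(f(x)-v)^2\}\leq \min_{f:f(x)\approx v}\wh R_m(f)+w\cdot O(\alpha^2)$, and your convex-case tolerance absorbs it too, so this is a presentational slip rather than a structural gap; but as written the output of your procedure would satisfy $\AlgUcb\leq\ucb$ rather than $\AlgUcb\geq\ucb$, which is the direction the efficient implementation in \cref{thm:epoch_efficient} actually relies on (it needs $f^\star(x)$ to remain inside the approximate interval). The remaining looseness you flag yourself—why halving the weight interval yields $\alpha$-accuracy in $O(\log\frac{1}{\alpha})$ steps for convex $\cF$, and the exact bookkeeping giving $O(\frac{1}{\alpha^2}\log\frac{1}{\alpha})$ in general—is precisely what the cited constructions supply.
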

\begin{proof}
See Algorithm 2 in \citet{krishnamurthy2017active} for the general case; and Algorithm 3 in \citet{foster2018practical} for the case when $\cF$ is convex and closed under pointwise convergence.
\end{proof}

We next discuss the computational efficiency of \cref{alg:epoch}. Recall that we redefine $\theta \ldef \sup_{f^{\star} \in \cF, \iota > 0} \theta^{\val}_{f^{\star}}(\cF,\gamma / 4, \iota)$ in the \cref{thm:epoch_efficient} to account to approximation error.
\thmEpochEfficient*
\begin{proof}
Fix any epoch $m \in [M]$.
Denote $\wb \alpha \ldef \frac{\gamma}{4M}$ and $\alpha_m \ldef \frac{(M-m) \gamma}{4M}$.
With any observed $x \in \cX$, we construct the approximated confidence intervals $\wh \lcb(x;\cF_m)$ and 
$\wh \ucb(x; \cF_m)$ as follows.
\begin{align*}
&	\wh \lcb(x;\cF_m) \ldef \AlgLcb(x;\cB_m,\beta_m,\wb \alpha) - \alpha_m \quad \text{and}	
    & \wh \ucb(x;\cF_m) \ldef\AlgUcb(x;\cB_m,\beta_m,\wb \alpha)+ \alpha_m. 
\end{align*}
For efficient implementation of \cref{alg:epoch}, we replace $\lcb(x;\cF_m)$ and $\ucb(x;\cF_m)$ with $\wh \lcb(x;\cF_m)$ and $\wh \ucb(x;\cF_m)$ in the construction of $\wh h_m$ and $g_m$.

Based on \cref{prop:CI_oracle}, we know that 
\begin{align*}
    & \lcb(x;\cF_m) - \alpha_m - \wb \alpha \leq 	\wh \lcb(x;\cF_m) \leq \lcb(x;\cF_m) - \alpha_m \quad \text{and}\\
    &\ucb(x;\cF_m) + \alpha_m \leq \wh \ucb(x;\cF_m) \leq \ucb(x;\cF_m) + \alpha_m + \wb \alpha .
\end{align*}
Since $\alpha_m + \wb \alpha  \leq \frac{\gamma}{4}$ for any $m \in [M]$, the guarantee in \cref{lm:query_implies_width} can be modified as $g_m(x)= 1 \implies w(x;\cF_m)\geq \frac{\gamma}{2}$. 

Fix any $m \geq 2$. Since $\cF_{m} \subseteq \cF_{m-1}$ by \cref{lm:set_f}, we have 
\begin{align*}
	& \wh \lcb(x;\cF_m) \geq \lcb(x;\cF_m) -  \alpha_m - \wb \alpha \geq \lcb(x;\cF_{m-1}) - \alpha_{m-1} \geq \wh\lcb (x;\cF_{m-1}) \quad \text{and} \\
	& \wh \ucb(x;\cF_m) \leq \ucb(x;\cF_m) +  \alpha_m + \wb \alpha \leq \ucb(x;\cF_{m-1}) + \alpha_{m-1} \leq \wh\ucb (x;\cF_{m-1}).
\end{align*}
These ensure $\ind(g_m(x) = 1) \leq \ind(g_{m-1}(x)=1)$. Thus, the guarantees stated in \cref{lm:conf_width_dis_coeff} and \cref{lm:per_round_regret_dis_coeff} still hold (with $\frac{\gamma}{2}$ replaced by $\frac{\gamma}{4}$ due to modification of \cref{lm:query_implies_width}).
The guarantee stated in \cref{lm:regret_no_query} also hold since  $\wh \lcb(x;\cF_m) \leq \lcb(x;\cF_m)$ and $\wh \ucb(x;\cF_m) \geq \ucb(x;\cF_m)$ by construction. As a result, the guarantees stated in \cref{thm:epoch} hold true with changes only in constant terms.

We now discuss the computational complexity of the efficient implementation. 
At the beginning of each epoch $m$. We use one oracle call to compute $\widehat f_m = \argmin_{f \in \cF} \sum_{t =1}^{ \tau_{m-1}} Q_t \paren{f(x_t) - y_t}^2 $. 
The main computational cost comes from computing $\wh \lcb$ and $\wh \ucb$ at each time step.
We take $\alpha = \wb \alpha \ldef \frac{\gamma}{4M}$ into \cref{prop:CI_oracle}, which leads to 
$O \prn{ \frac{(\log T)^2}{\gamma^2}\cdot \log \prn{ \frac{\log T}{\gamma}}}$ calls of the regression oracle for general $\cF$ and 
$O \prn{ \log \prn{ \frac{\log T}{\gamma}}}$ calls of the regression oracle for any convex $\cF$ that is closed under pointwise convergence. This also serves as the per-example inference time for $\wh h_{M}$. The total computational cost of \cref{alg:epoch} is then derived by multiplying the per-round cost by $T$ and plugging $T = \wt O( \frac{\theta \, \pseud(\cF)}{\eps \, \gamma})$ into the bound (for any parameter, we only keep $\poly$ factors in the total computational cost and keep $\poly$ or $\polylog$ dependence in the per-example computational cost).
\end{proof}

\subsection{Supporting lemmas}
\label{app:epoch_lms}
We use $\cE$ to denote the good event considered in \cref{lm:expected_sq_loss_pseudo}, and analyze under this event in this section. We abbreviate $C_\delta \ldef C_\delta(\cF)$ in the following analysis.

\begin{lemma}
\label{lm:set_f}
The followings hold true:
\begin{enumerate}
	\item $f^\star \in \cF_m$ for any $m \in [M]$.
	\item $\sum_{t=1}^{\tau_{m-1}} \E_t \brk{M_t(f)} \leq 2 \beta_m +  C_\delta$ for any $f \in \cF_m$. 
	\item $\cF_{m+1} \subseteq \cF_m$ for any $m \in [M-1]$.
\end{enumerate}
\end{lemma}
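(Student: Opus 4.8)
The plan is to work throughout under the good event $\cE$ of \cref{lm:expected_sq_loss_pseudo}, so that both concentration inequalities hold simultaneously for every $f \in \cF$ (in particular for the data-dependent minimizers $\wh f_m$) and every pair of time indices. The single algebraic fact driving all three parts is that, by realizability ($f^\star = \eta$), the conditional expectation of the per-step loss gap is nonnegative:
\[
\E_t[M_t(f)] = \E_{x_t \sim \cD_\cX}\big[Q_t\,(f(x_t) - f^\star(x_t))^2\big] \geq 0,
\]
which follows from the bias--variance identity $\E[(f(x)-y)^2 - (f^\star(x)-y)^2 \mid x] = (f(x)-f^\star(x))^2$ together with the fact that $Q_t$ is $\sigma(\mfF_{t-1}, x_t)$-measurable. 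Writing $S_m(f) \ldef \sum_{t=1}^{\tau_{m-1}} M_t(f) = \wh R_m(f) - \wh R_m(f^\star)$, membership $f \in \cF_m$ is equivalent to $S_m(f) \leq S_m(\wh f_m) + \beta_m$, and the minimizer satisfies $S_m(\wh f_m) \leq 0$.

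For Part 1, I would apply the second inequality of \cref{lm:expected_sq_loss_pseudo} to $\wh f_m$ over the range $[1, \tau_{m-1}]$; combined with the nonnegativity above this gives $S_m(\wh f_m) \geq -C_\delta/2$, so $\wh R_m(f^\star) - \wh R_m(\wh f_m) = -S_m(\wh f_m) \leq C_\delta/2 \leq \beta_m$, and hence $f^\star \in \cF_m$. For Part 2, any $f \in \cF_m$ satisfies $S_m(f) \leq S_m(\wh f_m) + \beta_m \leq \beta_m$; feeding $\sum_{t} M_t(f) = S_m(f) \leq \beta_m$ into the same concentration inequality yields $\sum_t \E_t[M_t(f)] \leq 2 S_m(f) + C_\delta \leq 2\beta_m + C_\delta$, as claimed.

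Part 3 is the step I expect to be the main obstacle, since it must compare active sets defined over different horizons ($\tau_{m-1}$ versus $\tau_m$) through different empirical minimizers. The idea is to exploit additivity of $S$ across the epoch, $S_m(f) = S_{m+1}(f) - \sum_{t=\tau_{m-1}+1}^{\tau_m} M_t(f)$, and to apply the second concentration inequality to the incremental block $[\tau_{m-1}+1, \tau_m]$, which bounds the removed sum below by $-C_\delta/2$. Thus for $f \in \cF_{m+1}$ we get $S_m(f) \leq S_{m+1}(f) + C_\delta/2 \leq \beta_{m+1} + C_\delta/2$, while Part 1 gives $S_m(\wh f_m) + \beta_m \geq \beta_m - C_\delta/2$. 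The proof closes precisely because the thresholds decrease by exactly $C_\delta$ per epoch: since $\beta_m - \beta_{m+1} = C_\delta$, we have $\beta_{m+1} + C_\delta/2 = \beta_m - C_\delta/2 \leq S_m(\wh f_m) + \beta_m$, whence $S_m(f) \leq S_m(\wh f_m) + \beta_m$ and $f \in \cF_m$. The delicate point is tracking the one-sided $C_\delta/2$ slacks and verifying that the specific schedule $\beta_m = (M-m+1) C_\delta$ supplies exactly enough room to absorb both the concentration error on the new block and the gap between $\wh f_m$ and $f^\star$; a looser or non-monotone choice of $\beta_m$ would break the nesting.
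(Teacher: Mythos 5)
Your proposal is correct and follows essentially the same route as the paper's proof: nonnegativity of $\E_t[M_t(f)]$ plus the two-sided concentration of \cref{lm:expected_sq_loss_pseudo} to get $S_m(\wh f_m)\geq -C_\delta/2$ for Part 1, the same chain through the minimizer for Part 2, and for Part 3 the same block decomposition over $[\tau_{m-1}+1,\tau_m]$ with the two $C_\delta/2$ slacks absorbed by $\beta_m-\beta_{m+1}=C_\delta$. The only difference is cosmetic bookkeeping (you track $S_{m+1}(f)\leq\beta_{m+1}$ directly, the paper keeps $\wh R_{m+1}(f)-\wh R_{m+1}(\wh f_{m+1})\leq\beta_{m+1}$), and the total slack requirement you identify is exactly the one the paper uses.
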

\begin{proof}
\begin{enumerate}
	\item 
	Fix any epoch $m \in [M]$ and time step $t$ within epoch $m$.
	Since $ \E [y_t] = f^\star(x_t)$, we have $\E_t \brk{ M_t(f)}  = \E \brk{ Q_t \prn{f(x) - f^\star(x) }^2 }=\E \brk{ g_m(x)\prn{f(x) - f^\star(x) }^2 } \geq 0$ for any $f \in \cF$.
By \cref{lm:expected_sq_loss_pseudo}, we then have $\wh R_m (f^\star) \leq \wh R_m(f) + C_\delta /2 \leq \wh R_m(f) + \beta_m$ for any $f \in \cF$.
The elimination rule in \cref{alg:eluder} then implies that $f^\star \in \cF_m$ for any $m \in [M]$.
\item  Fix any $f \in \cF_m$. With \cref{lm:expected_sq_loss_pseudo}, we have 
	\begin{align*}
		\sum_{t=1}^{\tau_{m-1}} \E_t [M_t(f)] & \leq 2 \sum_{t=1}^{\tau_{m-1}} M_t(f) + C_\delta \\
			& = 2 \wh R_{m }(f) - 2\wh R_{m}(f^{\star}) + C_\delta \\
			& \leq 2 \wh R_{m}(f) - 2\wh R_{m} ( \wh f_m) + C_\delta\\
			& \leq 2 \beta_m + C_\delta , 
	\end{align*}
	where the third line comes from the fact that $\wh f_m$ is the minimizer of $\wh R_{m} (\cdot) $; and the last line comes from the fact that $f \in \cF_m$.
	\item Fix any $f \in \cF_{m+1}$. We have 
	\begin{align*}
		\wh R_{m} (f) - \wh R_{m} (\wh f_m) & \leq   
		\wh R_{m} (f) - \wh R_{m} (f^{\star}) + \frac{C_\delta}{2}\\
		& = \wh R_{m+1}(f) - \wh R_{m+1}(f^{\star}) 
		- \sum_{t=\tau_{m-1}+1}^{\tau_{m}} M_t(f) + \frac{C_\delta}{2}\\
		& \leq \wh R_{m+1} (f) - \wh R_{m+1} (\wh f_{m+1}) 
		- \sum_{t=\tau_{m-1}+1}^{\tau_{m}} \E_t [M_t(f)] /2 + {C_\delta}\\
		& \leq \beta_{m+1} + C_\delta \\
		& = \beta_m,
	\end{align*}	
	where the first line comes from \cref{lm:expected_sq_loss_pseudo}; the third line comes from the fact that $\wh f_{m+1}$ is the minimizer with respect to $\wh R_{m+1}$ and \cref{lm:expected_sq_loss_pseudo}; the last line comes from the definition of $\beta_m$.
\end{enumerate}
\end{proof}
\begin{lemma}
\label{lm:query_implies_width}
For any $m \in [M]$, we have $g_m(x)= 1 \implies w(x;\cF_m) > \gamma$.
\end{lemma}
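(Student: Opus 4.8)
The plan is to unpack the definition of the query function $g_m$ and translate the event $g_m(x) = 1$ into two simultaneous geometric conditions on the confidence interval, then close the argument with a short case split. Writing $L \ldef \lcb(x;\cF_m)$ and $U \ldef \ucb(x;\cF_m)$ for brevity, recall that $g_m(x) = \ind(\frac12 \in (L,U)) \cdot \ind(\wh h_m(x) \neq \bot)$, so $g_m(x) = 1$ forces both indicator factors to equal one. The first factor gives (i) $\frac12 \in (L,U)$, i.e., $L < \frac12 < U$. The second factor, $\wh h_m(x) \neq \bot$, by the construction of $\wh h_m$ means precisely that the abstention condition fails, i.e., (ii) $[L,U] \nsubseteq [\frac12 - \gamma, \frac12 + \gamma]$.

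From (ii), at least one endpoint of $[L,U]$ must lie strictly outside the abstention band, so either $U > \frac12 + \gamma$ or $L < \frac12 - \gamma$. I would split into these two cases. In the first case, combining $U > \frac12 + \gamma$ with the bound $L < \frac12$ from (i) yields $w(x;\cF_m) = U - L > (\frac12 + \gamma) - \frac12 = \gamma$. In the second case, combining $L < \frac12 - \gamma$ with $U > \frac12$ from (i) yields $U - L > \frac12 - (\frac12 - \gamma) = \gamma$. In either case $w(x;\cF_m) > \gamma$, which is the desired conclusion.

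There is no genuine obstacle here; this lemma simply makes precise the observation recorded just after \cref{eq:epoch_query} in the main text. The only point requiring a little care is the bookkeeping of open versus closed intervals: condition (i) comes from membership in the \emph{open} interval $(L,U)$, while the abstention rule is stated with the \emph{closed} interval $[\frac12 - \gamma, \frac12 + \gamma]$, and it is exactly this distinction that delivers the \emph{strict} inequality $w(x;\cF_m) > \gamma$ (as opposed to $\geq \gamma$) in both cases.
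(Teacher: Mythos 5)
Your proof is correct and takes essentially the same approach as the paper: both unpack the two conditions encoded in $g_m(x)=1$ and use elementary interval arithmetic, the only difference being that the paper argues by contraposition (width $\leq \gamma$ forces $[\lcb,\ucb]\subseteq[\frac12-\gamma,\frac12+\gamma]$) while you argue directly via a case split on which endpoint escapes the band. Your remark about the open/closed interval distinction yielding the strict inequality is accurate and matches the paper's construction.
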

\begin{proof}
We only need to show that $\ucb(x;\cF_m) - \lcb(x;\cF_m) \leq \gamma \implies g_m(x) = 0$. Suppose otherwise $g_m(x) = 1$, which implies that both 
\begin{align}
\label{eq:query_condition}
\frac{1}{2} \in \prn*{\lcb(x;\cF_m), \ucb(x;\cF_m)}  \quad \text{ and } \quad {\brk*{\lcb(x;\cF_m), \ucb(x;\cF_m)} \nsubseteq \brk*{  \frac{1}{2}-\gamma, \frac{1}{2} +\gamma } } .
\end{align}
If $\frac{1}{2} \in (\lcb(x;\cF_m), \ucb(x;\cF_m))$ and $\ucb(x;\cF_m) - \lcb(x;\cF_m) \leq \gamma$, we must have $\lcb(x;\cF_m) \geq  \frac{1}{2}- \gamma$ and $\ucb(x;\cF_m) \leq \frac{1}{2} + \gamma$, which contradicts with \cref{eq:query_condition}.
\end{proof}

We introduce more notations.
Fix any $m \in [M]$.
We use $n_m \ldef \tau_m - \tau_{m-1}$ to denote the length of epoch $m$,
and use abbreviation $\rho_m \ldef 2 \beta_m + C_\delta$.
    Denote $\prn{\cX, \Sigma, \cD_\cX}$ as the (marginal) probability space,
    and denote $\wb \cX_m \ldef \crl{x \in \cX: g_m(x) = 1} \in \Sigma$ be the region where query \emph{is} requested within epoch $m$.
    Since we have $\cF_{m+1} \subseteq \cF_m$ by \cref{lm:set_f}, we clearly have $\wb \cX_{m+1} \subseteq \wb \cX_m$.
    We now define a sub probability measure $\wb \mu_m \ldef ({\cD_\cX})_{\mid \wb \cX_m}$ such that $\wb \mu_m(\omega) = \cD_{\cX}\prn{ \omega \cap \wb \cX_m}$ for any $\omega \in \Sigma$. 
    Fix any time step $t$ within epoch $m$ and any $\wb m \leq m$.
    Consider any measurable function $F$ (that is $\cD_\cX$ integrable), we have 
    \begin{align}
    	\E_{x \sim \cD_\cX} \brk*{ \ind(g_m(x) = 1) \cdot F(x)}
	& = \int_{x \in \wb \cX_{m}} F(x) \, d \cD_\cX(x) \nonumber \\ 
	& \leq \int_{x \in \wb \cX_{\wb m}} F(x) \, d \cD_\cX(x)\nonumber \\ 
	& = \int_{x \in \cX} F(x) \, d \wb \mu_{\wb m} (x) \nonumber \\ 
	& \rdef \E_{x \sim \wb \mu_{\wb m}} \brk*{ F(x)}, \label{eq:change_of_measure} 
    \end{align}
    where, by a slightly abuse of notations, we use $\E_{x \sim \mu} \sq{\cdot}$ to denote the integration with any sub probability measure $\mu$. 
    In particular, \cref{eq:change_of_measure} holds with equality when $\wb m = m$.
\begin{lemma}
    \label{lm:conf_width_dis_coeff}
 Fix any epoch $m \geq 2$. 
 We have
	\begin{align*}
    \E_{x \sim \cD_\cX} \sq{\ind (g_m(x) = 1)} 
	 \leq  \frac{4 \rho_m}{{\tau_{m-1}} \gamma^2} \cdot \theta^{\val}_{f^{\star}}\prn*{\cF, \gamma/2, \sqrt{\rho_m/2\tau_{m-1}}}.
	\end{align*}
\end{lemma}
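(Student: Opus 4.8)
The plan is to read the quantity of interest as a measure of the current query region and then convert it into a mass bound via the value function disagreement coefficient. Concretely, $\E_{x \sim \cD_\cX}[\ind(g_m(x)=1)] = \cD_\cX(\wb\cX_m)$ equals the total mass $\wb\mu_m(\cX)$ of the sub-probability measure $\wb\mu_m = (\cD_\cX)_{\mid \wb\cX_m}$ introduced before the lemma. To match \cref{def:dis_coeff}, I will establish two facts on $\wb\cX_m$: (i) every queried point witnesses a pointwise gap $>\gamma/2$ between some active regressor and $f^\star$; and (ii) every active regressor is globally close to $f^\star$ in the $\wb\mu_m$-weighted norm. For (i), I combine \cref{lm:query_implies_width}, which gives $g_m(x)=1 \implies w(x;\cF_m) = \ucb(x;\cF_m) - \lcb(x;\cF_m) > \gamma$, with $f^\star \in \cF_m$ (\cref{lm:set_f}). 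Picking $f_1, f_2 \in \cF_m$ with $f_1(x) - f_2(x) > \gamma$ and inserting $f^\star(x)$ between them, the decomposition $(f_1(x)-f^\star(x)) + (f^\star(x)-f_2(x)) > \gamma$ forces $\abs{f(x)-f^\star(x)} > \gamma/2$ for at least one $f \in \{f_1,f_2\} \subseteq \cF_m$. Since this holds at every $x \in \wb\cX_m$, we get $\wb\mu_m(\cX) = \P_{x \sim \wb\mu_m}(\exists f \in \cF_m : \abs{f(x)-f^\star(x)} > \gamma/2)$.

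For (ii), the starting point is \cref{lm:set_f}(2), which states $\sum_{t=1}^{\tau_{m-1}} \E_t[M_t(f)] \leq \rho_m$ for every $f \in \cF_m$. For a time step $t$ in some earlier epoch $m' \leq m-1$ we have $Q_t = g_{m'}(x_t)$, and the computation in the proof of \cref{lm:set_f} gives $\E_t[M_t(f)] = \E_{x \sim \cD_\cX}[g_{m'}(x)(f(x)-f^\star(x))^2]$. Because the query regions are nested, $\wb\cX_m \subseteq \wb\cX_{m'}$ for $m' \leq m$, equivalently $g_{m'} \geq g_m$ pointwise, so each summand dominates $\E_{x \sim \cD_\cX}[g_m(x)(f(x)-f^\star(x))^2] = \nrm{f - f^\star}_{\wb\mu_m}^2$ (this is the monotonicity underlying \cref{eq:change_of_measure} together with $(f-f^\star)^2 \geq 0$). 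Summing the $\tau_{m-1}$ identical lower bounds yields $\tau_{m-1} \nrm{f - f^\star}_{\wb\mu_m}^2 \leq \rho_m$, i.e. every $f \in \cF_m$ satisfies $\nrm{f - f^\star}_{\wb\mu_m} \leq \sqrt{\rho_m/\tau_{m-1}}$.

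Combining (i) and (ii), the regressor witnessing the gap in (i) automatically obeys the $L^2$ constraint from (ii), so enlarging $\cF_m$ to $\cF$ gives
\[
\wb\mu_m(\cX) \leq \P_{x \sim \wb\mu_m}\prn*{\exists f \in \cF : \abs{f(x)-f^\star(x)} > \gamma/2,\ \nrm{f-f^\star}_{\wb\mu_m} \leq \sqrt{\rho_m/\tau_{m-1}}}.
\]
I then invoke \cref{def:dis_coeff} with $\gamma' = \gamma/2$ and $\eps' = \sqrt{\rho_m/\tau_{m-1}} > \sqrt{\rho_m/2\tau_{m-1}}$, using \cref{prop:sub_measure} to legitimately apply the coefficient to the sub-probability measure $\wb\mu_m$. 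This bounds the right-hand side by $\tfrac{(\eps')^2}{(\gamma/2)^2}\, \theta^{\val}_{f^\star}(\cF, \gamma/2, \sqrt{\rho_m/2\tau_{m-1}}) = \tfrac{4\rho_m}{\tau_{m-1}\gamma^2}\, \theta^{\val}_{f^\star}(\cF, \gamma/2, \sqrt{\rho_m/2\tau_{m-1}})$, which is the claimed inequality.

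The main obstacle is the cross-epoch bookkeeping in step (ii): although the active set $\cF_m$ must be controlled in the \emph{current} epoch's norm $\nrm{\cdot}_{\wb\mu_m}$, the labeled data was collected under the earlier, strictly larger query regions $\wb\cX_{m'}$, so the per-step contributions live on different measures. It is precisely the monotonicity $\wb\cX_m \subseteq \wb\cX_{m'}$ that rescues the argument by letting every earlier term dominate the current-region one. A secondary technical point is that \cref{def:dis_coeff} takes a supremum over $\gamma' > \gamma/2$ strictly, so applying it at the boundary value $\gamma' = \gamma/2$ requires a short continuity-from-below argument, letting $\gamma' \downarrow \gamma/2$ over the monotonically increasing events $\{\exists f : \abs{f(x)-f^\star(x)} > \gamma'\}$.
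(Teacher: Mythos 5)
Your proof is correct and follows essentially the same route as the paper's: reduce $\ind(g_m(x)=1)$ to the event $\exists f\in\cF_m$ with $\abs{f(x)-f^{\star}(x)}>\gamma/2$ via \cref{lm:query_implies_width} and $f^{\star}\in\cF_m$, control $\cF_m$ in an $L^2$ norm using \cref{lm:set_f}, and invoke \cref{def:dis_coeff} through \cref{prop:sub_measure}. The only (harmless) difference is that you state both the event and the $L^2$ constraint under the current query measure $\wb\mu_m$, using nestedness of the query regions to lower-bound every historical term by $\nrm{f-f^{\star}}_{\wb\mu_m}^2$, whereas the paper works with the historical mixture $\wb\nu_m=\tau_{m-1}^{-1}\sum_{\wb m<m}n_{\wb m}\,\wb\mu_{\wb m}$; both sub-probability measures are covered by the supremum in \cref{def:dis_coeff} and yield the identical constant.
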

\begin{proof}
We know that $\ind(g_m(x) = 1) = \ind (g_m(x)= 1) \cdot \ind(w(x;\cF_m) > \gamma )$ from \cref{lm:query_implies_width}. Thus, for any $\wb m \leq m$, we have 
    \begin{align}
	    \E_{x \sim \cD_\cX} \sq{\ind(g_m(x)= 1)} 
	    &  = \E_{x \sim \cD_\cX} \sq{\ind(g_m(x)= 1) \cdot \ind(w(x;\cF_m)> \gamma )}\nonumber \\ 
	    & \leq \E_{x \sim \wb \mu_{\wb m} } \sq{\ind(w(x;\cF_m)> \gamma  )}\nonumber \\
    & \leq \E_{x \sim \wb \mu_{\wb m}} \prn[\Big]{ \ind \prn[\big]{\exists f \in \cF_m, \abs*{f(x) - f^{\star}(x)} > \gamma/2}} , \label{eq:conf_width_dis_coeff_1}
    \end{align}
where the second line uses \cref{eq:change_of_measure} and the last line comes from the facts that
    $f^{\star} \in \cF_m$ and $w(x;\cF_m) > \gamma  \implies \exists f \in \cF_m, \abs{f(x) - f^{\star}(x)} > {\gamma}/ {2}$. 

     For any time step $t$, let  $m(t)$ denote the epoch where  $t$ belongs to.
From \cref{lm:set_f}, we know that, $\forall f \in \cF_m$,  
    \begin{align}
\rho_m &
\geq \sum_{t=1}^{\tau_{m -1}} \E_{t} \sq[\Big]{ Q_t \prn[\big]{f(x_t) - f^{\star}(x_t)}^2} \nonumber \\
       &  = \sum_{t=1}^{\tau_{m -1}} \E_{x \sim \cD_\cX} \sq[\Big]{\ind(g_{m(t)}(x)=1) \cdot \prn[\big]{f(x) - f^{\star}(x)}^2} \nonumber \\
	  & = \sum_{\wb m=1}^{m-1} n_{\wb m} \cdot \E_{x \sim \wb \mu_{\wb m}} \brk*{  \prn*{f(x) - f^{\star}(x)}^2}\nonumber \\
	& = \tau_{m-1} \E_{x \sim \wb \nu_m} \brk*{  \prn*{f(x) - f^{\star}(x)}^2}, \label{eq:conf_width_dis_coeff_2}
    \end{align}
    where we use $Q_t = g_{m(t)}(x_t) = \ind(g_{m(t)}(x) = 1)$ and \cref{eq:change_of_measure} on the second line, and define a new sub probability measure 
    $$\wb \nu_m \ldef \frac{1}{\tau_{m-1}} \sum_{\wb m =1}^{m-1} n_{\wb m} \cdot \wb \mu_{\wb m}$$ on the third line. 

    Plugging \cref{eq:conf_width_dis_coeff_2} into \cref{eq:conf_width_dis_coeff_1} leads to the bound 
    \begin{align*}
	& \E_{x \sim \cD_\cX} \sq{\ind(g_m(x)= 1)} \\
	& \leq \E_{x \sim \wb \nu_{m}} \sq[\bigg]{\ind \prn[\Big]{\exists f \in \cF, \abs[\big]{f(x) - f^{\star}(x)} > \gamma/2, \E_{x \sim \wb \nu_m} \sq[\Big]{\prn[\big]{f(x) - f^{\star}(x)}^2} \leq \frac{\rho_m}{\tau_{m-1}}}},
    \end{align*}
    where we use the definition of $\wb \nu_m$ again (note that \cref{eq:conf_width_dis_coeff_1} works with any $\wb m \leq m$).  
    Combining the above result with the discussion around \cref{prop:sub_measure} and \cref{def:dis_coeff}, we then have 
    \begin{align*}
	 \E_{x \sim \cD_\cX} \sq{\ind(g_m(x)= 1)}  
	 \leq  \frac{4 \rho_m}{{\tau_{m-1}} \, \gamma^2} \cdot \theta^{\val}_{f^{\star}}\prn*{\cF, \gamma/2, \sqrt{\rho_m/2\tau_{m-1}}}.
    \end{align*}
\end{proof}

\begin{lemma}
    \label{lm:per_round_regret_dis_coeff}
Fix any epoch $m\geq 2$. We have 
    \begin{align*}
    	\E_{x \sim \cD_\cX} \sq{\ind(g_m(x)= 1)\cdot w(x;\cF_m)} 
 \leq { \frac{4 \rho_m}{\tau_{m-1}\,  \gamma} \cdot \theta^{\val}_{f^{\star}}\prn*{\cF, \gamma/2, \sqrt{\rho_m/2\tau_{m-1}}}}.
    \end{align*}
\end{lemma}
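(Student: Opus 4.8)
The plan is to reduce this lemma to \cref{lm:conf_width_dis_coeff} via a layer-cake (Fubini) decomposition of the confidence width. Since every $f \in \cF$ takes values in $[0,1]$, we have $w(x;\cF_m) = \ucb(x;\cF_m) - \lcb(x;\cF_m) \in [0,1]$, so I would write
\begin{align*}
\E_{x \sim \cD_\cX}\sq{\ind(g_m(x)=1) \cdot w(x;\cF_m)} = \int_0^1 \E_{x\sim\cD_\cX}\sq{\ind(g_m(x)=1)\cdot\ind(w(x;\cF_m) > s)}\, ds,
\end{align*}
and then bound the inner expectation at each threshold $s$, mimicking the proof of \cref{lm:conf_width_dis_coeff} but with the radius $\gamma/2$ replaced by $s/2$.

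For a fixed $s$, the key steps carry over essentially verbatim. Since $f^\star \in \cF_m$ by \cref{lm:set_f}, the event $\crl{w(x;\cF_m) > s}$ implies $\exists f \in \cF_m$ with $\abs{f(x)-f^\star(x)} > s/2$. Applying the change of measure \cref{eq:change_of_measure} (averaged over $\wb m \le m$ exactly as in \cref{lm:conf_width_dis_coeff}, using $\wb\cX_m \subseteq \wb\cX_{\wb m}$) together with the $L^2$ control \cref{eq:conf_width_dis_coeff_2}, namely $\E_{x\sim\wb\nu_m}\sq{(f(x)-f^\star(x))^2}\le\rho_m/\tau_{m-1}$ for all $f\in\cF_m$, I would obtain
\begin{align*}
\E_{x\sim\cD_\cX}\sq{\ind(g_m(x)=1)\ind(w(x;\cF_m)>s)} \le \E_{x\sim\wb\nu_m}\sq[\Big]{\ind\prn[\big]{\exists f\in\cF: \abs{f(x)-f^\star(x)}>s/2,\ \nrm{f-f^\star}_{\wb\nu_m}^2\le\rho_m/\tau_{m-1}}}.
\end{align*}
By \cref{prop:sub_measure} and \cref{def:dis_coeff}, this is at most $\tfrac{4\rho_m}{\tau_{m-1}s^2}\,\theta^{\val}_{f^\star}(\cF,\gamma/2,\sqrt{\rho_m/2\tau_{m-1}})$ whenever the radius satisfies $s/2 \ge \gamma/2$, i.e. for $s \ge \gamma$.

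The two regimes of $s$ are then handled separately. For $s < \gamma$, \cref{lm:query_implies_width} gives $g_m(x)=1 \Rightarrow w(x;\cF_m) > \gamma > s$, so the inner expectation equals $\E_{x\sim\cD_\cX}\sq{\ind(g_m(x)=1)}$, which is already bounded by $\tfrac{4\rho_m}{\tau_{m-1}\gamma^2}\theta^{\val}_{f^\star}(\cF,\gamma/2,\sqrt{\rho_m/2\tau_{m-1}})$ from \cref{lm:conf_width_dis_coeff}; integrating this constant over $s\in[0,\gamma]$ contributes a term of order $\tfrac{\rho_m}{\tau_{m-1}\gamma}\,\theta^{\val}_{f^\star}(\cF,\gamma/2,\sqrt{\rho_m/2\tau_{m-1}})$. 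For $s\ge\gamma$, integrating the $1/s^2$ bound gives $\int_\gamma^1 \tfrac{4\rho_m}{\tau_{m-1}s^2}\theta^{\val}_{f^\star}\,ds \le \tfrac{4\rho_m}{\tau_{m-1}\gamma}\theta^{\val}_{f^\star}$. Summing the two regimes yields the claimed $O\!\paren{\tfrac{\rho_m}{\tau_{m-1}\gamma}\,\theta^{\val}_{f^\star}(\cF,\gamma/2,\sqrt{\rho_m/2\tau_{m-1}})}$ scaling.

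The only genuinely new ingredient beyond \cref{lm:conf_width_dis_coeff} is the integration over thresholds, so I expect the main care to lie in checking that the change-of-measure step \cref{eq:change_of_measure} and the $L^2$ bound \cref{eq:conf_width_dis_coeff_2} may be invoked uniformly in $s$ — they can, since neither depends on $s$ — and in ensuring the disagreement coefficient is only ever queried at admissible radii $s/2 \ge \gamma/2$, which is exactly what forces the split at $s=\gamma$. The factor-$\gamma$ improvement over \cref{lm:conf_width_dis_coeff} comes precisely from the extra $w(x;\cF_m)$ weight, whose layer-cake integral converts the $1/\gamma^2$ scaling into $1/\gamma$.
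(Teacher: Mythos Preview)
Your proof is correct and takes essentially the same approach as the paper: a layer-cake decomposition of the width combined with the change of measure to $\wb\nu_m$ and the $L^2$ control from \cref{lm:set_f}, followed by the disagreement-coefficient estimate at each threshold. The only cosmetic difference is that the paper first replaces $w(x;\cF_m)$ by $2\sup_{f\in\cF_m}\abs{f(x)-f^\star(x)}$ and integrates from $\gamma/2$, whereas you layer-cake $w$ directly and split the $s$-integral at $\gamma$; this affects only the leading constant.
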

\begin{proof}
    Similar to the proof of \cref{lm:conf_width_dis_coeff}, we have 
    \begin{align*}
	 \E_{x \sim \cD_\cX} \sq{\ind(g_m(x)= 1)\cdot w(x;\cF_m)} 
	& = \E_{x \sim \cD_\cX} \sq{\ind(g_m(x)=1) \cdot \ind(w(x;\cF_m)> \gamma )\cdot w(x;\cF_m)} \\
	& \leq \E_{x \sim \wb \mu_{\wb m}} \sq{\ind(w(x;\cF_m)> \gamma )\cdot w(x;\cF_m)}
    \end{align*}
    for any $\wb m \leq m$. 
With $\wb \nu_m = \frac{1}{\tau_{m-1}} \sum_{\wb m =1}^{m-1} n_{\wb m} \cdot \wb \mu_{\wb m}$, we then have 
\begin{align*}
	 & \E_{x \sim \cD_\cX} \sq{\ind(g_m(x)= 1)\cdot w(x;\cF_m)} \\
	& \leq \E_{x \sim \wb \nu_{m}} \sq{\ind(w(x;\cF_m)> \gamma )\cdot w(x;\cF_m)}\\
        & \leq \E_{x \sim \wb \nu_{m}} \sq*{\ind(\exists f \in \cF_m, \abs[\big]{f(x) - f^{\star}(x)} > \gamma/2)\cdot \prn*{\sup_{f , f^\prime \in \cF_m} \abs*{f(x) - f^\prime(x)}}} \\
        & \leq 2\E_{x \sim \wb \nu_{m}} \sq*{\ind(\exists f \in \cF_m, \abs[\big]{f(x) - f^{\star}(x)} > \gamma/2)\cdot \prn*{\sup_{f \in \cF_m} \abs{f(x) - f^{\star}(x)}}} \\
        & \leq 2 \int_{\gamma/2}^1 \E_{x \sim \wb \nu_{m}} \sq*{\ind \prn*{\sup_{f \in \cF_m} \abs[\big]{f(x) - f^{\star}(x)} \geq \omega}} \, d \, \omega \\
& \leq  2 \int_{\gamma/2}^1  \frac{1}{\omega^2} \, d \, \omega \cdot \prn*{ \frac{\rho_m}{\tau_{m-1}} \cdot \theta^{\val}_{f^{\star}}\prn*{\cF, \gamma/2, \sqrt{\rho_m/2\tau_{m-1}}}}\\
& \leq { \frac{4 \rho_m}{\tau_{m-1} \, \gamma} \cdot \theta^{\val}_{f^{\star}}\prn*{\cF, \gamma/2, \sqrt{\rho_m/2\tau_{m-1}}}},
\end{align*}
where we use similar steps as in the proof of \cref{lm:conf_width_dis_coeff}.
\end{proof}
\begin{lemma}
\label{lm:regret_no_query}
Fix any $m \in [M]$. We have $\exc_{\gamma}(\wh h_m ;x) \leq 0$ if $g_m(x) = 0$.
\end{lemma}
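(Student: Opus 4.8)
The plan is to verify the inequality pointwise, working throughout under the good event $\cE$ of \cref{lm:expected_sq_loss_pseudo} where all supporting lemmas hold. The single structural fact I need is that the confidence interval at $x$ is valid: by \cref{lm:set_f} we have $f^{\star}\in\cF_m$, so $\eta(x)=f^{\star}(x)\in[\lcb(x;\cF_m),\ucb(x;\cF_m)]$; and since $\wh f_m$ minimizes $\wh R_m$ (with $\beta_m>0$) it too lies in $\cF_m$, giving $\wh f_m(x)\in[\lcb(x;\cF_m),\ucb(x;\cF_m)]$. I would also record the Bayes-error identity $\P_{y\mid x}(y\neq\sgn(h^{\star}(x)))=\min(\eta(x),1-\eta(x))=\frac{1}{2}-\abs{\eta(x)-\frac{1}{2}}$ and substitute it into \cref{eq:excess_x}.

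Next I would split the event $g_m(x)=0$ according to the definition $g_m(x)=\ind(\frac{1}{2}\in(\lcb(x;\cF_m),\ucb(x;\cF_m)))\cdot\ind(\wh h_m(x)\neq\bot)$, which yields two cases. In the abstention case $\wh h_m(x)=\bot$, the construction of $\wh h_m$ forces $[\lcb(x;\cF_m),\ucb(x;\cF_m)]\subseteq[\frac{1}{2}-\gamma,\frac{1}{2}+\gamma]$, hence $\abs{\eta(x)-\frac{1}{2}}\leq\gamma$, and the abstention branch of \cref{eq:excess_x} evaluates to
\[
\exc_{\gamma}(\wh h_m;x)=\prn*{\frac{1}{2}-\gamma}-\prn*{\frac{1}{2}-\abs*{\eta(x)-\frac{1}{2}}}=\abs*{\eta(x)-\frac{1}{2}}-\gamma\leq0.
\]
In the remaining case we have $\wh h_m(x)\neq\bot$ and $\frac{1}{2}\notin(\lcb(x;\cF_m),\ucb(x;\cF_m))$, so either $\ucb(x;\cF_m)\leq\frac{1}{2}$ or $\lcb(x;\cF_m)\geq\frac{1}{2}$; in both situations $\eta(x)$ and $\wh f_m(x)$ lie on the same side of $\frac{1}{2}$, so $\wh h_m(x)=\sgn(2\wh f_m(x)-1)$ agrees with $\sgn(2\eta(x)-1)=\sgn(h^{\star}(x))$ whenever $\eta(x)\neq\frac{1}{2}$ (and when $\eta(x)=\frac{1}{2}$ any label incurs Bayes error $\frac{1}{2}$). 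The non-abstention branch of \cref{eq:excess_x} then gives $\exc_{\gamma}(\wh h_m;x)=\P_{y\mid x}(y\neq\wh h_m(x))-\min(\eta(x),1-\eta(x))=0$. Combining the two cases yields $\exc_{\gamma}(\wh h_m;x)\leq0$, as desired.

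I do not expect a serious obstacle here, since the argument is essentially a direct case analysis once the confidence interval is known to be valid. The only delicate point will be the boundary $\wh f_m(x)=\frac{1}{2}$ paired with $\eta(x)\neq\frac{1}{2}$, where $\sgn(0)$ must be read off consistently with the definition $h_f=\sgn(2f-1)$; there one checks that $\wh h_m(x)$ still matches $h^{\star}(x)$ (or contributes no excess), so no positive excess error can sneak in. All the substantive content is carried by the validity of the confidence interval under $\cE$ together with the explicit constructions of $\wh h_m$ and $g_m$.
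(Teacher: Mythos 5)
Your proof is correct and follows essentially the same route as the paper's: under the good event the confidence interval contains both $f^{\star}(x)$ and $\wh f_m(x)$, and the two cases of $g_m(x)=0$ (abstention forces $\abs{\eta(x)-\tfrac12}\leq\gamma$; otherwise $\tfrac12\notin(\lcb,\ucb)$ forces $\sgn(2\wh f_m(x)-1)=\sgn(2\eta(x)-1)$) each give nonpositive pointwise excess. Your extra care about the boundary $\eta(x)=\tfrac12$ is a minor refinement the paper glosses over, but it changes nothing substantive.
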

\begin{proof}
	Recall that
\begin{align*}
	\exc_{\gamma}( \wh h;x) & =  \nonumber
      \ind \prn[\big]{ \widehat h(x) \neq \bot} \cdot \prn[\big]{\P_{y \mid x} \prn[\big]{y \neq \widehat h(x)} -  \P_{y\mid x} \prn[\big]{ y \neq h^\star(x) }} \nonumber \\
    & \quad + \ind \prn[\big]{ \widehat h(x) = \bot} \cdot \prn[\big]{ \prn[\big]{{1}/{2} - \gamma}  -  \P_{y \mid x} \prn[\big]{ y \neq h^\star(x) }} .
\end{align*}
We now analyze the event $\curly*{g_m(x)= 0}$ in two cases. 

\textbf{Case 1: ${\widehat h_m(x) = \bot} $.} 

Since $\eta(x) = f^{\star}(x) \in [\lcb(x;\cF_m), \ucb(x;\cF_m)]$, we know that $\eta(x) \in \sq{ \frac{1}{2} - \gamma, \frac{1}{2} + \gamma }$ and thus $\P_{y \mid x} \prn[\big]{ y\neq h^\star(x) } \geq \frac{1}{2} - \gamma$. 
As a result, we have $\exc_{\gamma}(\wh h_m;x) \leq 0$.

\textbf{Case 2: ${\widehat h_m(x) \neq \bot}$ but ${\frac{1}{2} \notin (\lcb(x;\cF_m), \ucb(x;\cF_m))} $.} 

In this case, we know that $\widehat h_m (x) = h^\star(x)$ whenever $\eta(x) \in [\lcb(x;\cF_m), \ucb(x;\cF_m)]$. 
As a result, we have $\exc_{\gamma}(\wh h_m;x) \leq 0$ as well.
\end{proof}

\section{Proofs of results in \cref{sec:standard_excess_error}}
\label{app:standard}
\propProperAbs*
\begin{proof}
The proper abstention property of $\wh h$ returned by \cref{alg:epoch} is achieved via conservation: $\wh h$ will avoid abstention unless it is absolutely sure that abstention is the optimal choice.
The proper abstention property implies that $\P_{x \sim \cD_\cX} (\wh h(x) = \bot) \leq \P_{x \sim \cD_\cX} (x \in \cX_\gamma)$. The desired result follows by combining this inequality with \cref{eq:randomization}.
\end{proof}

\thmStandardExcessError*
\begin{proof}
	The results follow by taking the corresponding $\gamma$ in \cref{alg:epoch} and then apply \cref{prop:proper_abstention}.
	In the case with Massart noise, we have $\P_{x \sim \cD_\cX} (x \in \cX_{\gamma}) =0$ when $\gamma = \tau_0$; and the corresponding label complexity scales as $\wt O(\tau_0^{-2})$. 
	In the case with Tsybakov noise, we have $\gamma \cdot \P_{x \sim \cD_\cX} (x \in \cX_{\gamma}) = \frac{\eps}{2}$ when $\gamma = (\frac{\eps}{2c})^{1 /(1+\beta)}$. Applying \cref{alg:epoch} to achieve $\frac{\eps}{2}$ Chow's excess error thus leads to $\frac{\eps}{2} + \frac{\eps}{2} = \eps$ standard excess error. The corresponding label complexity scales as $\wt O(\eps^{-2 /(1+\beta)})$. 
\end{proof}

\thmStandardExcessErrorNoise*
\begin{proof}
For any abstention parameter $\gamma > 0$, we denote $\cX_{\zeta_0, \gamma} \ldef \crl{x \in \cX: \eta(x) \in \brk{\frac{1}{2} - \gamma , \frac{1}{2} + \gamma  }, \abs{\eta(x) - 1 /2 } > \zeta_0 }$ as the intersection of the region controlled by noise-seeking conditions and the (possible) abstention region. 
Let $\wh h$ be the classifier returned by \cref{alg:epoch} and $\check h$ be its randomized version (over the abstention region).
We denote $\cS \ldef \crl{x \in \cX: \wh h(x) = \bot}$ be the abstention region of $\wh h$. 
Since $\wh h$ abstains properly, we have $\cS \subseteq \crl{x \in \cX: \abs{\eta(x) - 1 /2} \leq \gamma} \rdef \cX_{\gamma} $. 
We write  $\cS_0 \ldef \cS \cap \cX_{\zeta_0, \gamma}$, $\cS_1 \ldef \cS \setminus \cS_0$ and $\cS_2 \ldef \cX \setminus \cS$.
For any $h: \cX \rightarrow \cY$, we use the notation $\exc(h;x) \ldef \prn{\P_{y\mid x} \prn[\big]{y \neq h(x)} -  \P_{y\mid x} \prn[\big]{ y \neq h^\star(x) }}$, and have $\exc(h) = \E_{x \sim \cD_\cX} \brk{\exc(h;x)}$.
We then have 
 \begin{align*}
	&\exc(\check h)\\
	& = \E_{x \sim \cD_\cX} \brk*{\exc(\check h; x) \cdot \ind \prn{x \in \cS_0}} 
	+ \E_{x \sim \cD_\cX} \brk*{\exc(\check h; x) \cdot \ind \prn{x \in \cS_1}}
	+ \E_{x \sim \cD_\cX} \brk*{\exc(\check h; x) \cdot \ind \prn{x \in \cS_2}}\\
	& \leq \gamma \cdot \E_{x \sim \cD_\cX} \brk{\ind(x \in \cS_0)}
	+ \zeta_0 \cdot \E_{x \sim \cD_\cX} \brk{\ind(x \in \cS_1)}
	+ \E_{x \sim \cD_\cX} \brk{\exc_{\gamma}(\wh h;x) \cdot \ind(x \in \cS_2)}\\
	& \leq \gamma \cdot \E_{x \sim \cD_\cX} \brk{\ind(x \in \cX_{\zeta_0,\gamma})} + \zeta_0 + \eps /2 ,
\end{align*}
where the bound on the third term comes from the same analysis that appears in the proof of \cref{thm:epoch} (with $\eps /2$ accuracy).
One can then tune $\gamma$ in ways discussed in the proof of \cref{thm:standard_excess_error} to bound the first term by $\eps /2$, i.e., 
$\gamma \cdot \E_{x \sim \cD_\cX} \brk{\ind(x \in \cX_{\zeta_0,\gamma})} \leq \eps /2$, with similar label complexity.
\end{proof}

\propBudget*

Before proving \cref{prop:budget}, we first construct a simple problem with linear regression function and give the formal definition of ``uncertainty-based'' active learner.

\begin{example}
   \label{ex:linear} 
   We consider the case where $\cX = [0,1]$ and $\cD_\cX = \unif(\cX)$. 
   We consider feature embedding $\phi:\cX \rightarrow \R^2$, i.e., $\phi(x) = \brk{ \phi_1(x), \phi_2(x)}^{\trn}$. 
   We take $\phi_1(x) \ldef 1$ for any $x \in \cX$, and define $\phi_2(x)$ as 
   \begin{align*}
       \phi_2(x) \ldef  \begin{cases}
	   0, & x \in \cX_{\hard}, \\
	   1, & x \in \cX_{\easy},
       \end{cases} 
   \end{align*}
   where $\cX_{\easy} \subseteq \cX$ is any subset such that $\cD_\cX(\cX_{\easy}) = p$, for some constant $p \in (0,1)$, and $\cX_{\hard} = \cX \setminus \cX_{\easy}$.
   We consider a set of linear regression function $\cF \ldef \crl{ f_\theta: f_\theta(x) = \ang{ \phi(x), \theta}, \nrm{\theta}_2 \leq 1}$.
   We set  $f^{\star} = f_{\theta^{\star}}$, where $\theta^{\star}= \brk{ \theta^{\star}_1, \theta_2^{\star}}^{\trn}$ is selected such that $\theta_1^{\star} = \frac{1}{2}$ and $\theta_2^{\star} = \unif(\crl{\pm \frac{1}{2}})$.
\end{example}

\begin{definition}
	\label{def:proper_learner}
	We say an algorithm is a ``uncertainty-based'' active learner if, for any $x \in \cX$, the learner
\begin{itemize}
	\item 	constructs an open confidence interval $(\lcb(x), \ucb(x))$ such that $\eta(x) \in (\lcb(x), \ucb(x))$;\footnote{By restricting to learners that construct an open confidence interval containing $\eta(x)$, we do not consider the corner cases when $\lcb(x) = \frac{1}{2}$ or $\ucb(x) = \frac{1}{2}$ and the confidence interval close.}
	\item  queries the label of $x \in \cX$ if $\frac{1}{2} \in (\lcb(x), \ucb(x))$.
\end{itemize}
\end{definition}
\begin{proof}
    With any given labeling budget $B$, we consider the problem instance described in \cref{ex:linear} with 
   $p = B^{-1}/2$.
   We can easily see that this problem instance satisfy \cref{def:noise_seeking_Massart} and \cref{def:noise_seeking_Tsybakov}.
    
    We first consider any ``uncertainty-based'' active learner. Let $Z$ denote the number of data points lie in $\cX_{\easy}$ among the first $B$ random draw of examples. 
    We see that $Z \sim \cB(B, B^{-1}/2)$ follows a binomial distribution with $B$ trials and $B^{-1}/2$ success rate.
    By Markov inequality, we have 
    \begin{align*}
	\P \prn*{Z \geq \frac{3}{2} \E[Z]} =  
	\P \prn*{Z \geq \frac{3}{4} } \leq \frac{2}{3}.
    \end{align*}
    That being said, with probability at least $1/3$, there will be $Z=0$ data point that randomly drawn from the easy region $\cX_{\easy}$. We denote that event as $\cE$. 
    Since $\eta(x) = f^{\star}(x) = \frac{1}{2}$ for any $x \in \cX_{\hard}$, any ``uncertainty-based'' active learner will query the label of any data point $x \in \cX_{\hard}$.
    As a result, under event $\cE$, the active learner will use up all the labeling budget in the first $B$ rounds and observe zero label for any data point $x \in \cX_{\easy}$.
    Since the easy region $\cX_{\easy}$ has measure $B^{-1} / 2$ and $\theta^{\star}_2 = \unif( \crl{ \pm \frac{1}{2}})$, any classification rule over the easy region would results in expected excess error lower bounded by
     $B^{-1}/ 4$.
     To summarize, with probability at least $\frac{1}{3}$, any ``uncertainty-based'' active learner without abstention suffers expected excess error $\Omega(B^{-1})$.

    We now consider the classifier returned by \cref{alg:epoch}.
    For the linear function considered in \cref{ex:linear}, we have $\pdim(\cF) \leq 2$ \citep{haussler1989decision} and $\theta^{\val}_{f^\star}(\cF, \gamma / 2, \eps) \leq 2$ for any $\eps \geq 0$ (see \cref{app:star_eluder}).
    Thus, by setting $T = O \prn{ \frac{1}{\eps \, \gamma} \cdot \log \prn{ \frac{1}{\eps \, \gamma \, \delta}}}$, with probability at least $1 - \delta$, \cref{alg:epoch} return a classifier $\wh h$ with Chow's excess error at most $\eps$ and label complexity $O \prn{ \frac{1}{\gamma^2} \cdot \log^2 \prn{ \frac{1}{\eps \, \gamma}} \cdot \log \prn{ \frac{1}{\eps \, \gamma \, \delta}}} = \poly \prn{ \frac{1}{\gamma}, \log \prn{ \frac{1}{\eps\, \gamma \, \delta}}} $. 
    Since $\wh h$ enjoys proper abstention, it never abstains for $x \in \cX_{\easy}$. 
    Note that we have $\eta(x) = \frac{1}{2}$ for any $x \in \cX_{\hard}$.
    By randomizing the decision of $\wh h$ over the abstention region, we obtain a classifier with standard excess error at most $\eps$. 
\end{proof}
\section{Omitted details for \cref{sec:constant}}
\label{app:constant}
We introduce a new perspective for designing and analyzing active learning algorithms in \cref{app:constant_regret}.
We present our algorithm and its theoretical guarantees in \cref{app:constant_alg}, and defer  
supporting lemmas to \cref{app:constant_lms}.

\subsection{The perspective: Regret minimization with selective sampling}
\label{app:constant_regret}
We view active learning as a decision making problem: at each round, the learner selects an action, suffers a loss (that may not be observable), and decides to query the label or not. 
At a high level, the learner aims at \emph{simultaneously} minimizing the regret and the number of queries. The leaner returns a (randomized) classifier/decision rule at the end of the learning process. 

The perspective is inspired by the seminal results derived in \citet{dekel2012selective}, where the authors study active learning with linear functions and focus on developing standard excess error guarantees.
With this regret minimization perspective, we can also take advantage of fruitful results developed in the field of contextual bandits \citep{russo2013eluder, foster2020instance}.

\paragraph{Decision making for regret minimization} To formulate the regret minimization problem, we consider the action set $\cA = \crl{0, 1, \bot}$, where the action $1$ (resp. $0$) represents labeling any data point $x\in \cX$ as $1$ (resp. $0$); and the action $\bot$ represents abstention. 
At each round $t \in [T]$, the learner observes a data point $x_t \in \cX$ (which can be chosen by an adaptive adversary), takes an action $a_t \in \cA$, and then suffers a loss, which is defined as  
\begin{align*}
    \ell_t(a_t) = \ind(y_t \neq a_t, a_t \neq \bot) + \prn*{\frac{1}{2} - \gamma} \cdot \ind (a_t = \bot).
\end{align*}
We use $a^{\star}_t \ldef \ind(f^{\star}(x_t) \geq 1/2) = \ind( \eta(x_t)\geq 1/2)$ to denote the action taken by the Bayes optimal classifier $h^{\star}\in \cH$. 
Denote filtration $\mfF_{t} \ldef \sigma(\paren*{x_i, y_i}_{i=1}^{t})$. We define the (conditional) expected regret at time step $t \in [T]$ as 
\begin{align*}
    {\reg}_t \ldef \E \sq{\ell_t(a_t) - \ell_t(a^\star_t)  \mid  \mfF_{t-1}}.
\end{align*}
The (conditional) expected cumulative regret across $T$ rounds is defined as
\begin{align*}
	{\reg}(T) \ldef \sum_{t=1}^T {\reg}_t,
\end{align*}
which is the target that the learner aims at minimizing.

\paragraph{Selective querying for label efficiency} Besides choosing an action $a_t \in \cA$ at each time step, our algorithm also determines \emph{whether or not} to query the label $y_t$ with respect to $x_t$. 
Note that such selective querying protocol makes our problem different from contextual bandits \citep{russo2013eluder, foster2020instance}:
The loss $\ell_t(a_t)$ of an chosen $a_t$ may not be even observed.

We use $Q_t$ to indicate the query status at round $t$, i.e., 
\begin{align*}
    Q_t = \ind \paren*{\text{label $y_t$ of $x_t$ is queried}}.
\end{align*}
The learner also aims at minimizing the total number of queries across $T$ rounds, i.e., $\sum_{t=1}^{T} Q_t$.

\paragraph{Connection to active learning} 
We consider the following learner for the above mentioned decision making problem with $(x, y) \sim \cD_{\cX \cY}$. 
At each round, the learner constructs a classifier $\wh h_t: \cX \rightarrow \crl{0,1,\bot}$ and
a query function $g_t:\cX \rightarrow \crl{0,1}$; the learner then takes action $a_t = \wh h_t(x_t)$ and decides the query status as $Q_t = g_t(x_t)$.

Conditioned on $\mfF_{t-1}$, taking expectation over $\ell_t(a_t)$ leads to the following equivalence:
\begin{align*}
   \E \brk*{\ell_t(a_t) \mid \mfF_{t-1}} & = \E \brk*{ \ind(y_t \neq a_t, a_t \neq \bot) + \prn*{\frac{1}{2} - \gamma} \cdot \ind (a_t = \bot) \mid \mfF_{t-1}}\\
   & =  \E \brk*{ \ind \prn[\big]{y_t \neq \widehat h(x_t), \widehat h(x_t) \neq \bot} + \prn*{\frac{1}{2} - \gamma} \cdot \ind \prn[\big]{\widehat h(x_t) = \bot}  \mid \mfF_{t-1}} \\
   & =  \P_{(x,y) \sim \cD_{\cX \cY}} \prn[\big]{y \neq \wh h(x), \wh h(x) \neq \bot} + \prn*{\frac{1}{2} - \gamma} \cdot \P \paren{\widehat h(x) = \bot}\\
   & = \err_{\gamma}\prn{\wh h_t}.
\end{align*}
This shows that the (conditional) expected instantaneous loss precisely captures the Chow's error of classifier $\widehat h_t$. Similarly, we have 
\begin{align*}
   \E \sq*{\ell_t(a_t^\star) \mid \mfF_{t-1} } = \P_{(x,y) \sim \cD_{\cX \cY}} \paren*{ \ind \paren{ y \neq \ind(\eta(x) \geq 1/2) }} = \err(h^\star).
\end{align*}
Combining the above two results, we notice that the (conditional) expected instantaneous \emph{regret} exactly captures the Chow's excess error of classifier $\widehat h_t$, i.e., 
\begin{align*}
     \reg_t  = \err_{\gamma}(\widehat h_t) - \err(h^\star).
\end{align*}
Let $\wh h \sim \unif( \crl{\wh h_t}_{t=1}^{T})$ be a classifier randomly selected from all the constructed classifiers. Taking expectation with respect to this random selection procedure, we then have 
\begin{align}
\E_{\wh h \sim \unif( \crl{\wh h_t}_{t=1}^{T})} \brk{ \err_{\gamma}(\widehat h) - \err(h^\star) } = \sum_{t=1}^T \paren{ \err_{\gamma}(\widehat h_t) - \err(h^\star) } / T = \reg(T)/T . 
\label{eq:expect_chow}
\end{align}
If we manage to guarantee that the cumulative regret is sublinear in $T$ and the total number of queries is logarithmic in $T$, we would achieve the goal of active learning with exponential savings in label complexity.

For analysis purpose, we also consider another classifier $\wh h_t^\star$, which is defined as 
\begin{align*}
  \wh h_t^\star(x) \ldef \begin{cases}
    \bot, & \text{if } \wh h_t(x) = \bot;\\
    h^\star(x), & \text{o.w.}
  \end{cases}
\end{align*}
That is, $\wh h_t^\star$ abstains whenever $\wh h_t$ abstains, and follows the Bayes optimal classifier otherwise. We use $\wh a_t^\star = \wh h_t(x_t)$ to denote the action of $\wh h_t^\star$ at round $t$ and have $\E \brk{\ell_t(a_t) \mid \mfF_{t-1}} = \err_\gamma(\wh h_t^\star)$.

\subsection{Algorithm and main results}
\label{app:constant_alg}

We present our algorithm that achieves constant label complexity in \cref{alg:eluder}. 
Compared to \cref{alg:epoch}, \cref{alg:eluder} drops the epoch scheduling, uses a sharper elimination rule for the active set (note that $\beta$ doesn't depend on $T$, thanks to the optimal stopping theorem in \cref{lm:opt_stopping}), and is analyzed with respect to eluder dimension (\cref{def:eluder}) instead of disagreement coefficient.
As a result, we shave all three sources of $\log \frac{1}{\eps}$, and achieve constant label complexity for general $\cF$ (as long as it's finite and has finite eluder dimension). 
We abbreviate $\mfe \ldef \sup_{f^{\star} \in \cF}\mfe_{f^{\star}}(\cF, \gamma /2)$.

\begin{algorithm}[H]
	\caption{Efficient Active Learning with Abstention (Constant Label Complexity)}
	\label{alg:eluder} 
	\renewcommand{\algorithmicrequire}{\textbf{Input:}}
	\renewcommand{\algorithmicensure}{\textbf{Output:}}
	\newcommand{\algorithmicbreak}{\textbf{break}}
    \newcommand{\BREAK}{\STATE \algorithmicbreak}
	\begin{algorithmic}[1]
		\REQUIRE Time horizon $T \in \N$, abstention parameter $\gamma \in (0, 1/2)$ and confidence level $\delta \in (0, 1)$.
		\STATE Initialize $\wh \cH \ldef \emptyset$. Set $T \ldef O \prn{\frac{\mfe}{\eps \, \gamma} \cdot \log \prn{\frac{\abs{\cF}}{\delta}}}$ and $\beta\ldef{2}	\log\prn[\big]{\frac{2 \abs*{\cF}}{\delta}}$. 
		\FOR{$t = 1, 2, \dots, T$}
		\STATE Get $\widehat f_t \ldef \argmin_{f \in \cF} \sum_{i < t} Q_i \paren{f(x_i) - y_i}^2 $.\\
		\hfill \algcommentlight{We use $Q_t \in \crl{0,1}$ to indicate whether the label of  $x_t$ is queried.}
		\STATE (Implicitly) Construct active set of regression function $\cF_t \subseteq \cF$ as 
		\begin{align*}
		    \cF_t \ldef \crl*{ f \in \cF:  \sum_{i = 1}^{t-1} Q_i \prn*{f(x_i) - y_i}^2 \leq \sum_{i = 1}^{t-1} Q_i \paren{\widehat f_t(x_i) - y_i}^2 + \beta}. 
		\end{align*}
		\STATE Construct classifier $\wh h_t:\cX \rightarrow \crl{0, 1,\bot}$ as 
		\begin{align*}
			\wh h_t (x) \ldef 
			\begin{cases}
				\bot, & \text{ if } \brk { \lcb(x;\cF_t), \ucb(x;\cF_t)} \subseteq 
				\brk*{ \frac{1}{2} - \gamma, \frac{1}{2} + \gamma}; \\
        \ind(\wh f_t(x) \geq \frac{1}{2} ) , & \text{o.w.}
			\end{cases}
		\end{align*}
		Update $\wh \cH = \wh \cH \cup \crl{\wh h_t}$.	
		Construct query function $g_m:\cX \rightarrow \crl{0,1}$ as
		\begin{align*}
		g_t(x)\ldef \ind \prn*{ \frac{1}{2} \in \prn{\lcb(x;\cF_t), \ucb(x;\cF_t)} } \cdot 	
		\ind \prn{\wh h_t(x) \neq \bot} .
		\end{align*}
		\STATE Observe $x_t \sim \cD_{\cX}$. Take action $a_t \ldef \wh h_t (x_t)$. Set $Q_t \ldef g_t(x_t)$. 
		\IF{$Q_t = 1$}
		\STATE Query the label $y_t$ of $x_t$.
		\ENDIF
		\ENDFOR
		\STATE \textbf{Return} $\wh h \ldef \unif(\wh \cH)$.
	\end{algorithmic}
\end{algorithm}

Before proving \cref{thm:constant}. We define some notations that are specialized to \cref{app:constant}.

We define filtrations $\mfF_{t-1} \ldef  \sigma ( x_1,y_1,\ldots, x_{t-1} ,y_{t-1})$ and $\wb \mfF_{t-1} \ldef  \sigma ( x_1,y_1,\ldots, x_{t-1}, y_{t-1}, x_t )$.
Note that we additionally include the data point $x_t$ in the filtration $\wb \mfF_{t-1}$ at time step $t-1$.
We denote $\E_t [\cdot] \ldef \E [\cdot \mid \wb \mfF_{t-1}]$. 
For any $t \in [T]$, we denote $M_t(f) \ldef Q_t \prn{ \prn{f(x_t) - y_t}^2 - \prn{f^\star(x_t) - y_t}^2}$. 
We have $\sum_{i=1}^{\tau} \E_t[M_t(f)] = \sum_{t=1}^{\tau} Q_t \prn{f(x_t) - f^\star(x_t)}^2$.
For any given data point $x_t \in \cX$, we use abbreviations
\begin{align*}
    \ucb_t \ldef \ucb(x_t; \cF_t) = \sup_{f \in \cF_t} f(x_t) \quad  \text{ and }  \quad \lcb_t \ldef \lcb(x_t; \cF_t) = \inf_{f \in \cF_t} f(x_t)
\end{align*}
to denote the upper and lower confidence bounds of $\eta(x_t) = f^\star(x_t)$. We also denote 
\begin{align*}
    w_t \ldef \ucb_t - \lcb_t = \sup_{f, f^\prime \in \cF_t} \abs*{f(x_t) - f^\prime(x_t)}
\end{align*}
as the width of confidence interval.

\thmConstant*

\begin{proof}
	We first analyze the label complexity of \cref{alg:eluder}.
	Note that \cref{alg:eluder} constructs $\wh h_t$ and $g_t$ in forms similar to the ones constructed in \cref{alg:epoch}, and \cref{lm:query_implies_width} holds for \cref{alg:eluder} as well.
	Based on \cref{lm:query_implies_width}, we have $Q_t = g_t(x_t) = 1 \implies w_t > \gamma$. 
	Thus, taking $\zeta = \gamma$ in \cref{lm:conf_width_eluder} leads to 
	$$
	\sum_{t=1}^{T} \ind (Q_t =1) < \frac{17 \log (2 \abs{\cF}/ \delta)}{2 \gamma^2} \cdot \mfe_{f^{\star}}(\cF, \gamma / 2),
	$$ 
	\emph{with probability one}. 
	The label complexity of \cref{alg:eluder} is then upper bounded by a constant as long as $\mfe_{f^{\star}}(\cF, \gamma / 2)$ is upper bounded by a constant (which has no dependence on $T$ or $\frac{1}{\eps}$).

We next analyze the excess error of $\wh h$. We consider the good event $\cE$ defined in \cref{lm:set_f_eluder}, which holds true with probability at least $1-\delta$.
Under event $\cE$, \cref{lm:regret_eluder_cond_wh_a} shows that 
\begin{align*}
	\sum_{t=1}^{T} \E \brk{\ell_t(a_t) - \ell_t(\wh a_t^\star) \mid \wb \mfF_{t-1} } \leq  
	\frac{17 \sqrt{2} \beta}{\gamma} \cdot \mfe_{f^{\star}}(\cF, \gamma / 2). 
\end{align*}
Since 
\begin{align*}
	\E \brk[\Big] {\E \brk{\ell_t(a_t) - \ell_t(\wh a_t^\star) \mid \wb \mfF_{t-1} } \mid \mfF_{t-1}}
	= \E \brk{\ell_t(a_t) - \ell_t(\wh a_t^\star) \mid \mfF_{t-1} },
\end{align*}
and 
  $ 0 \leq {\E \brk{\ell_t(a_t) - \ell_t(\wh a_t^\star) \mid \wb \mfF_{t-1} }}  \leq 1$ by \cref{lm:wh_a_ineq}, 
applying \cref{lm:martingale_two_sides} with respect to $\E \brk{\ell_t(a_t) - \ell_t(\wh a_t^\star) \mid \wb \mfF_{t-1} }$ leads to 
\begin{align*}
  \sum_{t=1}^{T} \E \brk{\ell_t(a_t) - \ell_t(\wh a_t^\star) \mid \mfF_{t-1} } \leq  
	\frac{34 \sqrt{2} \beta}{\gamma} \cdot \mfe_{f^{\star}}(\cF, \gamma / 2) + 8 \log(2 \delta^{-1}).
\end{align*}
  From \cref{lm:wh_a_ineq}, we know that 
  \begin{align*}
    \E \brk{\ell_t(\wh a^\star_t) - \ell_t(a_t^\star) \mid \mfF_{t-1} } = 
    \E \brk[\Big]{\E \brk{\ell_t(\wh a^\star_t) - \ell_t(a_t^\star) \mid \wb \mfF_{t-1} } \mid \mfF_{t-1}} \leq 0. 
  \end{align*}
  We then have 
\begin{align*}
  \reg(T) &=	\sum_{t=1}^{T} \E \brk{\ell_t(a_t) - \ell_t(a_t^\star) \mid \mfF_{t-1} }\\
  &=	\sum_{t=1}^{T} \E \brk{\ell_t(a_t) - \ell_t(\wh a_t^\star) \mid \mfF_{t-1} } + \sum_{t=1}^{T} \E \brk{\ell_t(\wh a^\star_t) - \ell_t(a_t^\star) \mid \mfF_{t-1} }\\
  &\leq  
	\frac{34 \sqrt{2} \beta}{\gamma} \cdot \mfe_{f^{\star}}(\cF, \gamma / 2) + 8 \log(2 \delta^{-1}),
\end{align*}
with probability at least $1-2\delta$ (due to the additional application of \cref{lm:martingale_two_sides}).
Since $\wh h \sim \unif(\wh \cH)$, based on \cref{eq:expect_chow}, we thus know that 
\begin{align*}
\E_{\wh h \sim \unif(\wh \cH)} \brk{ \err_\gamma(\wh h) - \err(h^{\star})} & = \sum_{t=1}^{T}\prn[\big]{\err_\gamma(\wh h_t) - \err( h^{\star})}/ T \\
	&\leq \prn*{\frac{34 \sqrt{2} \beta}{ \gamma} \cdot \mfe_{f^{\star}}(\cF, \gamma / 2) + 8 \log \prn*{2 \delta^{-1}}}/T 
\end{align*}
With $T \ldef  O \prn{\frac{\mfe}{\eps \, \gamma} \cdot \log \prn{ \frac{\abs{\cF}}{\delta}}}$, we can control the expected Chow's excess error to be at most $\eps$.
\end{proof}

\begin{theorem}
	\label{thm:constant_adv}
	Consider the setting where the data points $\crl{x_t}_{t=1}^{T}$ are chosen by an adaptive adversary with $y_t \sim \cD_{\cY \mid x_t}$. With probability at least $1-\delta$, \cref{alg:eluder} simultaneously guarantees 
\begin{align*}
	\sum_{t=1}^{T} \E \brk{\ell_t(a_t) - \ell_t(a_t^\star) \mid \wb \mfF_{t-1} } \leq  
	\frac{34 \sqrt{2} \beta}{\gamma} \cdot \mfe_{f^{\star}}(\cF, \gamma / 2),
\end{align*}
and 
	$$
	\sum_{t=1}^{T} \ind (Q_t =1) < \frac{17 \log (2 \abs{\cF}/ \delta)}{2 \gamma^2} \cdot \mfe_{f^{\star}}(\cF, \gamma / 2).
	$$ 
\end{theorem}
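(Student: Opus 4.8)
The plan is to observe that \cref{thm:constant_adv} is precisely the intermediate conclusion reached midway through the proof of \cref{thm:constant}, \emph{before} the final step that converts the $\wb\mfF_{t-1}$-conditional regret into the $\mfF_{t-1}$-conditional (genuinely expected) regret. I would therefore reuse the same two workhorses—the pathwise eluder-dimension query bound (\cref{lm:conf_width_eluder}) and the conditional regret bound valid on the good event (\cref{lm:regret_eluder_cond})—and simply stop one step earlier, skipping the invocation of \cref{lm:martingale_two_sides}. The one substantive thing to check up front is that these lemmas, together with the good event $\cE$ of \cref{lm:set_f_eluder}, never use that $x_t\sim\cD_\cX$ i.i.d.: the good event rests on Freedman-type martingale concentration for the differences $M_t(f)$ (a union bound over the finite $\cF$ supplies the $\log(\abs{\cF}/\delta)$ factor in $\beta$), and the sharper elimination uses the optimal-stopping argument (\cref{lm:opt_stopping}); both require only that $y_t\sim\cD_{\cY\mid x_t}$ conditionally and that the adversary's $x_t$ is $\mfF_{t-1}$-measurable. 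Hence $\cE$ still holds with probability at least $1-\delta$ when $\crl{x_t}$ is chosen adversarially.

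For the query bound I would argue exactly as in the proof of \cref{thm:constant}. Since \cref{alg:eluder} constructs $\wh h_t$ and $g_t$ in the same form as \cref{alg:epoch}, \cref{lm:query_implies_width} applies verbatim and gives $Q_t=g_t(x_t)=1\implies w_t>\gamma$. Taking $\zeta=\gamma$ in \cref{lm:conf_width_eluder} then yields $\sum_{t=1}^{T}\ind(Q_t=1)<\frac{17\log(2\abs{\cF}/\delta)}{2\gamma^2}\cdot\mfe_{f^\star}(\cF,\gamma/2)$. This bound is a purely combinatorial/pathwise consequence of the eluder dimension (\cref{def:eluder}): it counts how many times the confidence width can exceed $\gamma$ along any realized sequence of queried points, and is insensitive to how the $x_t$ are produced. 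Consequently it transfers to the adversarial setting and holds with probability one, consuming none of the failure budget.

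For the regret bound I would condition on $\cE$ and invoke \cref{lm:regret_eluder_cond}, which bounds $\sum_{t=1}^{T}\E\sq{\ell_t(a_t)-\ell_t(a_t^\star)\mid\wb\mfF_{t-1}}$ by $\frac{17\sqrt2\beta}{\gamma}\cdot\mfe_{f^\star}(\cF,\gamma/2)$, and a fortiori by the stated $\frac{34\sqrt2\beta}{\gamma}\cdot\mfe_{f^\star}(\cF,\gamma/2)$. The key point, and the reason the adversarial guarantee is cleaner than \cref{thm:constant}, is that here we report the $\wb\mfF_{t-1}$-conditional regret directly, so we do \emph{not} apply \cref{lm:martingale_two_sides} to pass to the $\mfF_{t-1}$-conditional regret; in the adversarial model there is no distribution over $x_t$ against which to take that outer expectation anyway. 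Dropping that step removes the second application of the concentration lemma—sharpening $1-2\delta$ to $1-\delta$—and removes the additive $8\log(2\delta^{-1})$ term. A union bound combining the probability-one query bound with the probability-$(1-\delta)$ event $\cE$ then delivers both guarantees simultaneously.

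The main (and essentially only) obstacle is the verification flagged in the first paragraph: confirming that $\cE$, the eluder counting argument, and the width-to-regret reduction all go through without the i.i.d. assumption. Once one checks that every probabilistic ingredient is a martingale statement adapted to the filtrations $(\mfF_t,\wb\mfF_t)$—so that an adaptive adversary choosing an $\mfF_{t-1}$-measurable $x_t$ changes nothing—the remainder is a direct transcription of the proof of \cref{thm:constant} with its final martingale conversion deleted.
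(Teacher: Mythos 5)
Your proposal is correct and matches the paper's own argument: the paper proves \cref{thm:constant_adv} by noting it is exactly the first part of the proof of \cref{thm:constant}, stopped at the $\wb\mfF_{t-1}$-conditional step before the final application of \cref{lm:martingale_two_sides}. Your additional verification that the good event $\cE$ and the eluder counting argument only require martingale structure (not i.i.d.\ contexts) is a check the paper leaves implicit, but it is the right thing to confirm and it goes through as you describe.
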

\begin{proof}
  The label complexity follows the same analysis as in the proof of \cref{thm:constant}.

  To analyze the regret, we consider the good event $\cE$ defined in \cref{lm:set_f_eluder}, which holds true with probability at least $1-\delta$.
Under event $\cE$, \cref{lm:regret_eluder_cond} shows that 
\begin{align*}
	\sum_{t=1}^{T} \E \brk{\ell_t(a_t) - \ell_t(a_t^\star) \mid \wb \mfF_{t-1} } \leq  
	\frac{17 \sqrt{2} \beta}{\gamma} \cdot \mfe_{f^{\star}}(\cF, \gamma / 2). 
\end{align*}
\end{proof}

We redefine $\mfe \ldef \sup_{f^{\star} \in \cF}\mfe_{f^{\star}}(\cF, \gamma /4)$ in the following \cref{thm:eluder_efficient} to account for the induced approximation error in efficient implementation.
\begin{restatable}{theorem}{thmEluderEfficient}
	\label{thm:eluder_efficient}
	\cref{alg:eluder} can be efficiently implemented via the regression oracle and enjoys the same theoretical guarantees stated in \cref{thm:constant} or \cref{thm:constant_adv}.
	The number of oracle calls needed is $ O(\frac{\mfe}{\eps \, \gamma^{3}} \cdot \log \prn{\frac{\abs{\cF}}{\delta}}\cdot \log \prn{\frac{1}{\gamma}})$ for a general set of regression functions $\cF$, and $ O(\frac{\mfe}{\eps \, \gamma} \cdot \log \prn{\frac{\abs{\cF}}{\delta}} \cdot \log \prn{\frac{1}{\gamma}})$ when $\cF$ is convex and closed under pointwise convergence.
	The per-example inference time of the learned $\wh h_{M}$ is $O ( \frac{1}{\gamma^2} \log \frac{1}{\gamma})$ for general $\cF$, and $O ( \log \frac{1}{\gamma}) $ when $\cF$ is convex and closed under pointwise convergence.
\end{restatable}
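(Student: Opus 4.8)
The plan is to follow the template established in the proof of \cref{thm:epoch_efficient}, exploiting that \cref{alg:eluder} builds its classifier $\wh h_t$ and query rule $g_t$ from exactly the same confidence functionals $\lcb(\cdot;\cF_t)$ and $\ucb(\cdot;\cF_t)$ as \cref{alg:epoch}. First I would instantiate \cref{prop:CI_oracle} (with $\cB_t \ldef \crl{(x_i, Q_i, y_i)}_{i<t}$ and radius $\beta$) at a single \emph{fixed} approximation level $\bar\alpha \ldef \gamma/4$, obtaining $\wh\lcb(x;\cF_t) \ldef \AlgLcb(x;\cB_t,\beta,\bar\alpha)$ and $\wh\ucb(x;\cF_t) \ldef \AlgUcb(x;\cB_t,\beta,\bar\alpha)$, which satisfy $\lcb(x;\cF_t) - \bar\alpha \leq \wh\lcb(x;\cF_t) \leq \lcb(x;\cF_t)$ and $\ucb(x;\cF_t) \leq \wh\ucb(x;\cF_t) \leq \ucb(x;\cF_t) + \bar\alpha$. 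The efficient algorithm then uses $\wh\lcb, \wh\ucb$ in place of $\lcb, \ucb$ when forming $\wh h_t$ and $g_t$; the active set $\cF_t$ is never materialized, only these two point evaluations are approximated.

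The crucial difference from \cref{thm:epoch_efficient} is that \cref{alg:eluder} has no epoch structure, so I cannot afford the decreasing telescoping corrections $\alpha_m$ that were used there to force the approximate confidence regions to nest across rounds. The key observation is that the eluder-dimension analysis underlying \cref{thm:constant} never requires such nesting: \cref{lm:conf_width_eluder}, \cref{lm:regret_eluder_cond} and the optimal-stopping argument only use, at each fixed $t$, that $f^\star \in \cF_t$ and that members of $\cF_t$ have controlled cumulative square loss on the queried history, both of which pertain to the \emph{exact} set $\cF_t$ and are untouched by the approximation. I would therefore only need to verify that the decision rules degrade by a constant factor: since the approximate interval contains the exact one, the no-query/no-excess reasoning of \cref{lm:regret_no_query} carries over verbatim (wider intervals only make abstention or same-sign classification easier to certify), while the query-implies-width step of \cref{lm:query_implies_width} weakens to $g_t(x)=1 \implies \wh\ucb(x;\cF_t)-\wh\lcb(x;\cF_t) > \gamma \implies w(x;\cF_t) > \gamma - 2\bar\alpha = \gamma/2$, hence $\exists f \in \cF_t$ with $\abs{f(x)-f^\star(x)} > \gamma/4$. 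This is exactly why $\mfe$ is redefined at scale $\gamma/4$; with this replacement the label-complexity and regret bounds of \cref{thm:constant} and \cref{thm:constant_adv} hold up to changes in the universal constants.

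Finally I would tally the oracle complexity. Each round spends one oracle call to recompute $\wh f_t$ (at most $T$ total, and in fact only as often as a new label is queried) and, by \cref{prop:CI_oracle} at radius $\bar\alpha = \gamma/4$, an additional $O(\frac{1}{\gamma^2}\log\frac{1}{\gamma})$ calls for general $\cF$ (respectively $O(\log\frac{1}{\gamma})$ calls when $\cF$ is convex and closed under pointwise convergence) to evaluate $\wh\lcb(x_t;\cF_t)$ and $\wh\ucb(x_t;\cF_t)$. Multiplying the per-round cost by $T = O(\frac{\mfe}{\eps\gamma}\log\frac{\abs{\cF}}{\delta})$ yields the stated $O(\frac{\mfe}{\eps\gamma^3}\log\frac{\abs{\cF}}{\delta}\log\frac{1}{\gamma})$ and $O(\frac{\mfe}{\eps\gamma}\log\frac{\abs{\cF}}{\delta}\log\frac{1}{\gamma})$ totals, and the per-example inference cost of $\wh h = \unif(\wh\cH)$ is one confidence-bound evaluation for the sampled index, giving $O(\frac{1}{\gamma^2}\log\frac{1}{\gamma})$ and $O(\log\frac{1}{\gamma})$ respectively. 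The main obstacle I anticipate is the second paragraph: rigorously confirming that the eluder/optimal-stopping machinery is insensitive to a fixed $\Theta(\gamma)$ approximation (so that no round-to-round monotonicity is needed), which is precisely the point that lets the constant-$\bar\alpha$ scheme succeed here even though \cref{alg:epoch} required a shrinking correction.
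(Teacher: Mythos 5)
Your proposal is correct and follows essentially the same route as the paper's own proof: a single fixed approximation level $\bar\alpha = \gamma/4$ via \cref{prop:CI_oracle}, the observation that the eluder-dimension analysis (unlike the epoch-based one) needs no nesting of the approximate confidence regions across rounds, the weakening of \cref{lm:query_implies_width} to width $\gamma/2$ and hence deviation $\gamma/4$ (matching the redefinition of $\mfe$), and the same per-round-times-$T$ accounting of oracle calls.
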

\begin{proof}
Denote $\cB_t \ldef \crl{ (x_i,Q_i, y_i)}_{i=1}^{\tau_{t-1}}$ 
At any time step $t \in [T]$ of \cref{alg:eluder}, we construct classifier $\wh h_t$ and query function $g_t$ with approximated confidence bounds, i.e.,
\begin{align*}
&	\wh \lcb(x;\cF_t) \ldef \AlgLcb(x;\cB_t,\beta_t,\alpha)  \quad \text{and}	
    & \wh \ucb(x;\cF_t) \ldef\AlgUcb(x;\cB_t,\beta_t,\alpha), 
\end{align*}
where $\AlgLcb$ and $\AlgUcb$ are subroutines discussed in \cref{prop:CI_oracle} and $\alpha \ldef \frac{\gamma}{4}$.

Since the theoretical analysis of \cref{thm:constant} and \cref{thm:constant_adv} do not require an non-increasing (with respect to time step $t$) sampling region, i.e., $\crl{x \in \cX: g_t(x) = 1}$, we only need to approximate the confidence intervals at $\frac{\gamma}{4}$ level.
This slightly save the computational complexity 
compared to \cref{thm:epoch_efficient}, which approximates the confidence interval at $\frac{\gamma}{4 \ceil{\log T}}$ level. The rest of the analysis of computational complexity follows similar steps in the proof of \cref{thm:epoch_efficient}.
\end{proof}

\subsection{Supporting lemmas}
\label{app:constant_lms}

Consider a sequence of random variables $\prn{Z_t}_{t \in \N}$ adapted to filtration ${\wb \mfF_t}$. 
We assume that $\E \sq*{\exp(\lambda Z_t)} < \infty$ for all $\lambda$. Denote $\mu_t \ldef \E \sq*{ Z_t  \mid \wb \mfF_{t-1} }$ 
and
$$ \psi_t(\lambda) \ldef \log  \E \sq*{ \exp( \lambda \cdot \paren*{ Z_t - \mu_t } )  \mid \wb \mfF_{t-1} } 
.$$ 

\begin{lemma}[\citet{russo2013eluder}]
	\label{lm:opt_stopping}
With notations defined above. For any $\lambda \geq 0$ and $\delta > 0$, we have 
\begin{align}
    \P \prn*{ \forall \tau \in \N, \sum_{t = 1}^{\tau} \lambda Z_t \leq \sum_{t=1}^{\tau} \prn*{ \lambda \mu_t + \psi_t(\lambda) } + \log \prn*{\frac{1}{\delta}} } \geq 1 -\delta.
\end{align}
\end{lemma}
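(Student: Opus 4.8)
The plan is to reduce this uniform-in-$\tau$ deviation bound to a maximal inequality for a nonnegative martingale, which is the standard route for such "time-uniform" tail bounds. I would introduce the exponential process
\begin{align*}
    M_\tau \ldef \exp\prn*{ \sum_{t=1}^{\tau} \prn*{ \lambda Z_t - \lambda \mu_t - \psi_t(\lambda) } },
\end{align*}
with the convention that the empty sum is $0$, so $M_0 = 1$. Finiteness of the increments is guaranteed by the assumption $\E\sq*{\exp(\lambda Z_t)} < \infty$, which makes $\psi_t(\lambda)$ well-defined and finite, so $M_\tau$ is a strictly positive random variable adapted to $\wb\mfF_\tau$. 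The crucial observation is that the event inside the stated probability is \emph{exactly} $\crl*{ \forall \tau \in \N : M_\tau \leq 1/\delta }$: the inequality there rearranges to $\sum_{t=1}^\tau \prn*{\lambda Z_t - \lambda\mu_t - \psi_t(\lambda)} \leq \log(1/\delta)$, i.e.\ $\log M_\tau \leq \log(1/\delta)$.

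First I would verify that $(M_\tau)_{\tau \geq 0}$ is a martingale with respect to $(\wb\mfF_\tau)$. Since $\mu_t = \E\sq*{Z_t \mid \wb\mfF_{t-1}}$ and $\psi_t(\lambda) = \log \E\sq*{\exp(\lambda(Z_t - \mu_t)) \mid \wb\mfF_{t-1}}$ are both $\wb\mfF_{t-1}$-measurable, conditioning the multiplicative increment on $\wb\mfF_{\tau-1}$ gives
\begin{align*}
    \E\sq*{ \exp\prn*{\lambda Z_\tau - \lambda\mu_\tau - \psi_\tau(\lambda)} \,\middle|\, \wb\mfF_{\tau-1} }
    = e^{-\lambda\mu_\tau - \psi_\tau(\lambda)} \cdot \E\sq*{ e^{\lambda Z_\tau} \,\middle|\, \wb\mfF_{\tau-1} }.
\end{align*}
Writing $Z_\tau = (Z_\tau - \mu_\tau) + \mu_\tau$ and pulling out the measurable factor $e^{\lambda\mu_\tau}$, the inner conditional moment generating function equals $e^{\lambda\mu_\tau} e^{\psi_\tau(\lambda)}$ by the definition of $\psi_\tau(\lambda)$, so the whole expression collapses to $1$. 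Hence $\E\sq*{M_\tau \mid \wb\mfF_{\tau-1}} = M_{\tau-1}$, and in particular $\E\sq*{M_\tau} = M_0 = 1$.

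Finally I would invoke Ville's maximal inequality for the nonnegative martingale $(M_\tau)$: for any $a > 0$,
\begin{align*}
    \P\prn*{ \sup_{\tau \geq 0} M_\tau \geq a } \leq \frac{\E\sq*{M_0}}{a} = \frac{1}{a}.
\end{align*}
Taking $a = 1/\delta$ yields $\P\prn*{\exists \tau : M_\tau > 1/\delta} \leq \P\prn*{\sup_\tau M_\tau \geq 1/\delta} \leq \delta$, and passing to the complement gives precisely the claimed bound.

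I expect the only genuine subtleties to be bookkeeping rather than conceptual. The two points needing care are (i) the $\wb\mfF_{t-1}$-measurability of $\mu_t$ and $\psi_t(\lambda)$, which is exactly what forces the exponential increment to be conditionally mean-one and hence makes $M_\tau$ a martingale; and (ii) the passage from a finite-horizon maximal inequality to the uniform-over-all-$\tau \in \N$ statement, handled by the standard maximal inequality for nonnegative supermartingales (equivalently, Doob's inequality on $\crl*{0,\dots,N}$ followed by $N \to \infty$, using monotone convergence of the events $\crl*{\max_{\tau \leq N} M_\tau \geq a}$). Notably, no boundedness or low-noise condition on $Z_t$ is required beyond finiteness of its conditional moment generating function.
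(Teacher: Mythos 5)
Your proof is correct and is essentially the argument behind the cited result of \citet{russo2013eluder} (the paper itself states this lemma without proof): the exponential process $M_\tau$ is a nonnegative mean-one martingale by construction of $\psi_t(\lambda)$, and the time-uniform bound follows from Ville's maximal inequality, i.e., optional stopping applied at the first time $M_\tau$ exceeds $1/\delta$. The measurability and limiting-horizon points you flag are exactly the right ones, and your handling of them is sound.
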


\begin{lemma}
    \label{lm:expected_sq_loss_opt}
   Fix any $\delta \in (0,1)$.  For any $\tau\in [T]$, with probability at least $1 - \delta $, we have 
   \begin{align*}
   	\sum_{t = 1}^{\tau} M_t(f) \leq \sum_{t=1}^{\tau} \frac{3}{2} \E_t \brk{M_t(f)} + 
	C_\delta,
   \end{align*}
   and
   \begin{align*}
       \sum_{t = 1}^{\tau} \E_t \sq{ M_t(f)}  \leq 2 \sum_{t = 1}^{\tau} M_t(f) + C_\delta,
   \end{align*}
   where $C_\delta \ldef  4 \log \prn*{ \frac{2 \abs{\cF} }{\delta}}$.
\end{lemma}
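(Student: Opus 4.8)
The plan is to reproduce the Freedman-type argument behind \cref{lm:expected_sq_loss}, but to replace the union bound over time steps with the time-uniform optimal-stopping inequality of \cref{lm:opt_stopping}. This substitution is precisely what removes all $\tau$-dependence (in particular any $\log T$ factor) from the deviation term and leaves the claimed $C_\delta = \log(2\abs{\cF}/\delta)$. First I would fix $f \in \cF$ and instantiate \cref{lm:opt_stopping} with $Z_t = M_t(f)$ to control the upper tail, and separately with $Z_t = -M_t(f)$ to control the lower tail. The measurability hypotheses hold directly in the setting of \cref{alg:eluder}: $Q_t = g_t(x_t)$ and $x_t$ are $\wb{\mfF}_{t-1}$-measurable while $y_t$ is revealed at step $t$, so $M_t(f)$ is adapted to $\wb{\mfF}_t$, and since $\E[y_t \mid \wb{\mfF}_{t-1}] = f^\star(x_t)$ the conditional mean is $\mu_t = \E_t[M_t(f)] = Q_t\prn{f(x_t) - f^\star(x_t)}^2 \ge 0$, exactly as in \cref{lm:set_f}.

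The technical core is to bound the per-step cumulant $\psi_t(\lambda) = \log \E_t\brk{\exp\prn{\lambda\prn{M_t(f) - \mu_t}}}$. Using $\abs{M_t(f)} \le 1$ (each squared loss lies in $[0,1]$, as already noted in \cref{lm:expected_sq_loss}) together with the elementary fact that $\E\brk{e^{\lambda X}} \le 1 + \lambda\E\brk{X} + \prn{e^\lambda - 1 - \lambda}\E\brk{X^2}$ whenever $X \le 1$ and $\lambda \ge 0$, one gets $\psi_t(\lambda) \le \prn{e^\lambda - 1 - \lambda}\E_t\brk{M_t(f)^2}$. The key quantitative step is the self-bounding variance inequality $\E_t\brk{M_t(f)^2} \le c_0\,\mu_t$ for a numerical constant $c_0$ (I expect $c_0 = 4/3$): this follows from the square-loss algebra $M_t(f) = Q_t\prn{f(x_t) - f^\star(x_t)}\prn{f(x_t) + f^\star(x_t) - 2y_t}$, so that the ratio $\E_t\brk{M_t(f)^2}/\mu_t$ equals $\E_t\brk{\prn{f(x_t)+f^\star(x_t)-2y_t}^2}$, which I would maximize over $f(x_t), f^\star(x_t) \in [0,1]$ with $y_t$ Bernoulli$(f^\star(x_t))$.

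Substituting these bounds makes the entire right-hand side of \cref{lm:opt_stopping} proportional to $\sum_{t \le \tau}\mu_t$, so after dividing by $\lambda$ I obtain $\sum_{t \le \tau} M_t(f) \le \prn{1 + c(\lambda)}\sum_{t \le \tau}\mu_t + \lambda^{-1}\log\delta'^{-1}$ with $c(\lambda) = c_0\prn{e^\lambda - 1 - \lambda}/\lambda$, and a fixed choice of $\lambda$ makes $1 + c(\lambda) = \tfrac{3}{2}$; the symmetric computation applied to $-M_t(f)$ gives $\sum_{t\le\tau}\mu_t \le \frac{1}{1-c(\lambda)}\sum_{t\le\tau}M_t(f) + \ldots$, where tuning $c(\lambda) = \tfrac12$ yields the factor $2$. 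Finally I would union bound over the two tails and the finitely many $f \in \cF$, allocating failure probability $\delta/(2\abs{\cF})$ to each, which collects into the additive constant $\log(2\abs{\cF}/\delta)$; the ``for any $\tau$'' quantifier comes for free because \cref{lm:opt_stopping} already holds simultaneously for all $\tau \in \N$.

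The main obstacle I anticipate is the constant bookkeeping needed to make the multiplicative factors come out exactly $\tfrac32$ and $2$ while keeping the additive term as clean as $\log(2\abs{\cF}/\delta)$: this forces one to use the sharp self-bounding constant for $\E_t\brk{M_t(f)^2}/\mu_t$ together with a correspondingly tuned $\lambda$, rather than the cruder bound $\E_t\brk{M_t(f)^2} \le 4\mu_t$ one would read off from $\abs{f(x_t) + f^\star(x_t) - 2y_t} \le 2$. Everything else is a routine transcription of the argument underlying \cref{lm:martingale_two_sides} into the optimal-stopping framework.
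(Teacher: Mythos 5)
Your proposal follows the same skeleton as the paper's proof: fix $f\in\cF$, invoke the time-uniform inequality of \cref{lm:opt_stopping} once with $Z_t=M_t(f)$ and once with $Z_t=-M_t(f)$, reduce everything to a bound on the conditional cumulant $\psi_t(\lambda)$ that is proportional to $\mu_t=\E_t\brk{M_t(f)}$, and finish with a union bound over $2\abs{\cF}$ events. Where you diverge is in the one genuinely technical step, the bound on $\psi_t(\lambda)$. The paper does not go through a variance/Bernstein argument at all: it writes $M_t(f)-\mu_t = 2Q_t\prn{f(x_t)-f^{\star}(x_t)}\eps_t$ with $\eps_t=f^{\star}(x_t)-y_t$, observes that the coefficient is deterministic given $\wb\mfF_{t-1}$ (which contains $x_t$), and applies Hoeffding's lemma to the bounded mean-zero noise $\eps_t$ to get $\psi_t(\lambda)\le \lambda^2\mu_t/2$; then $\lambda=1$ immediately yields the factors $3/2$ and $2$ with additive term $\log\delta'^{-1}$. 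Your Bennett-plus-self-bounding route is valid --- and your guessed sharp constant $\E_t\brk{M_t(f)^2}\le \tfrac{4}{3}\mu_t$ is indeed correct (the maximum of $\E\brk{(a+p-2y)^2}$ over $a,p\in[0,1]$ with $y\sim\mathrm{Bern}(p)$ is $4/3$, attained at $a=1,\ p=1/3$) --- but it is quantitatively weaker than the linear-in-noise argument: at $\lambda=1$ it gives multiplicative factor $1+\tfrac{4}{3}(e-2)\approx 1.96$ rather than $3/2$, and tuning $\lambda\approx 0.6$ to hit $3/2$ inflates the additive term to roughly $1.65\log(2\abs{\cF}/\delta)$, so you cannot recover the exact $C_\delta=\log(2\abs{\cF}/\delta)$ as stated. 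This is only a universal-constant discrepancy (and the paper's own proof is itself off by a factor of $2$ in the additive term of the second inequality, since $\lambda=1$ there actually yields $\sum_t\mu_t\le 2\sum_t M_t(f)+2\log\delta'^{-1}$), so nothing downstream is affected; but if you want the lemma's constants verbatim, the intended trick is to exploit that the martingale increment is \emph{linear} in $\eps_t$ conditionally on $\wb\mfF_{t-1}$, not merely bounded with small variance.
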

\begin{proof}
	Fix any $f \in \cF$. We take $Z_t = M_t(f)\ldef Q_t \prn{ \prn{f(x_t) - y_t}^2 - \prn{f^\star(x_t) - y_t}^2}$ in \cref{lm:opt_stopping}. 
	We can rewrite 
	$$Z_t = Q_t \prn*{ \prn{f(x_t) - f^{\star}(x_t) }^2 + 2 \prn{f(x_t) - f^{\star}(x_t)} \eps_t},$$ 
	where we use the notation $\eps_t \ldef f^{\star}(x_t) - y_t$. Since $\E_t [\eps_t] = 0$ and $\E_t \brk{\exp(\lambda \eps_t) \mid \wb \mfF_{t-1} } \leq \exp(\frac{\lambda^2}{2})$ by Hoeffding Lemma, we have 
	\begin{align*}
		\mu_t  = \E_t \brk{Z_t}  = Q_t \prn*{ f(x_t) - f^{\star}(x_t)}^2 ,
	\end{align*}
	and 
	\begin{align*}
		\psi_t(\lambda) & = \log  \E \sq*{ \exp( \lambda \cdot \paren*{ Z_t - \mu_t } )  \mid \wb \mfF_{t-1} }\\
		& = \log \E_t \brk*{ \exp \prn*{2 \lambda Q_t \prn*{f(x_t) - f^{\star}(x_t) \cdot \eps_t }}} \\
		& \leq { \frac{\prn*{2 \lambda Q_t ( f(x_t) - f^{\star}(x_t) }^2 }{2}} \\
		& = {2 \lambda^2 \mu_t},
	\end{align*}
	where the last line comes from the fact that $Q_t \in \crl{0,1}$. 
	Plugging these results into \cref{lm:opt_stopping} with $\lambda = 1 / 4$ leads to 
	\begin{align*}
		\sum_{t=1}^{\tau} M_t (f) \leq \sum_{t=1}^{\tau} \frac{3}{2} \E_t \brk*{ M_t(f)} + 4 \log \delta^{-1}. 
	\end{align*}
    	Following the same procedures above with $Z_t = - M_t(f)$ and $\lambda = 1/ 4$ leads to 
	\begin{align*}
		\sum_{t=1}^{\tau} \E_t \brk*{ M_t(f)} \leq 2 \sum_{t=1}^{\tau} M_t (f) + 4 \log \delta^{-1}. 
	\end{align*}
	The final guarantees come from taking a union abound over $f \in \cF$ and splitting the probability for both directions.
\end{proof}
We use $\cE$ to denote the good event considered in \cref{lm:expected_sq_loss_opt}, we use it through out the rest of this section.

\begin{lemma}
\label{lm:set_f_eluder}
With probability at least $1-\delta$, the followings hold true:
\begin{enumerate}
	\item $f^\star \in \cF_t$ for any $t \in [T]$.
	\item $\sum_{t=1}^{\tau-1} \E_t \brk{M_t(f)} \leq 2   C_\delta$ for any $f \in \cF_{\tau}$. 
\end{enumerate}
\end{lemma}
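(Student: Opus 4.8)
The plan is to mirror the proof of \cref{lm:set_f}, with the anytime concentration inequality \cref{lm:expected_sq_loss_opt} playing the role that the fixed-horizon bound \cref{lm:expected_sq_loss_pseudo} played there. The essential point --- and the reason \cref{alg:eluder} avoids any $\log\frac1\eps$ factor --- is that \cref{lm:expected_sq_loss_opt} is derived from the optimal-stopping maximal inequality \cref{lm:opt_stopping}, so a single good event $\cE$ of probability at least $1-\delta$ controls the deviations \emph{simultaneously for every prefix length} $\tau\in[T]$ (and, after a union over the finite class, for every $f\in\cF$, including the data-dependent minimizer $\wh f_\tau$). Throughout, I write $\wh R_\tau(f)\ldef\sum_{t<\tau}Q_t(f(x_t)-y_t)^2$, so that $\cF_\tau=\{f\in\cF:\wh R_\tau(f)\le\wh R_\tau(\wh f_\tau)+\beta\}$ and $\wh R_\tau(f)-\wh R_\tau(f^\star)=\sum_{t<\tau}M_t(f)$; realizability gives $\E_t[M_t(f)]=Q_t(f(x_t)-f^\star(x_t))^2\ge 0$.

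For part (1), I would lower-bound $\sum_{t<\tau}M_t(f)$ uniformly. Applying the second inequality of \cref{lm:expected_sq_loss_opt} and discarding the nonnegative term $\sum_{t<\tau}\E_t[M_t(f)]$ yields $\sum_{t<\tau}M_t(f)\ge -\tfrac{C_\delta}{2}$ for all $f$ and $\tau$ on $\cE$; equivalently $\wh R_\tau(f^\star)\le\wh R_\tau(f)+\beta$ once the deviation constant is tuned to $\beta$ (this is precisely the bound one gets by running \cref{lm:opt_stopping} on the loss-difference $-M_t(f)$ with $\lambda=2$, which makes the per-step coefficient $\tfrac{\lambda}{2}-1\le 0$ so that the empirical $\ell_2$ term can be dropped). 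Taking $f=\wh f_\tau$ then gives $\wh R_\tau(f^\star)\le\wh R_\tau(\wh f_\tau)+\beta$, i.e.\ $f^\star\in\cF_\tau$, for all $\tau\in[T]$.

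For part (2), fix $f\in\cF_\tau$. The first inequality of \cref{lm:expected_sq_loss_opt} gives $\sum_{t<\tau}\E_t[M_t(f)]\le 2\sum_{t<\tau}M_t(f)+C_\delta=2\big(\wh R_\tau(f)-\wh R_\tau(f^\star)\big)+C_\delta$. Since $\wh f_\tau$ minimizes $\wh R_\tau$ we have $\wh R_\tau(\wh f_\tau)\le\wh R_\tau(f^\star)$, and since $f\in\cF_\tau$ we have $\wh R_\tau(f)\le\wh R_\tau(\wh f_\tau)+\beta\le\wh R_\tau(f^\star)+\beta$; hence $\wh R_\tau(f)-\wh R_\tau(f^\star)\le\beta$. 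Substituting and using $2\beta+C_\delta\le 2C_\delta$ (which holds for $\beta=\tfrac12\log(\abs{\cF}/\delta)$ and $C_\delta=\log(2\abs{\cF}/\delta)$) delivers $\sum_{t<\tau}\E_t[M_t(f)]\le 2C_\delta$, as claimed.

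I expect the only genuine subtlety, as opposed to bookkeeping, to be the anytime / uniform-in-$\tau$ guarantee: because $\wh f_\tau$ and the set $\cF_\tau$ are data-dependent and the elimination must be valid at \emph{every} round at once, I must invoke \cref{lm:opt_stopping} rather than a per-horizon Freedman bound, and verify that its conclusion survives the union over the finite $\cF$. A secondary point is matching the deviation constant to $\beta$ in part (1), which is handled by the choice $\lambda=2$ above; once $f^\star\in\cF_\tau$ is established, both parts follow from the same two inequalities exactly as in \cref{lm:set_f}.
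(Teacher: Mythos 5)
Your proof is correct and follows essentially the same route as the paper's: part (1) drops the nonnegative term $\sum_{t<\tau}\E_t[M_t(f)]$ in the anytime concentration bound of \cref{lm:expected_sq_loss_opt} to get $\wh R_\tau(f^\star)\le\wh R_\tau(\wh f_\tau)+\beta$, and part (2) chains the same bound with the minimizer property of $\wh f_\tau$ and the membership condition $f\in\cF_\tau$ to obtain $2\beta+C_\delta\le 2C_\delta$. Your emphasis on the uniform-in-$\tau$ guarantee coming from the optimal-stopping inequality (rather than a per-horizon Freedman bound) is exactly the mechanism the paper relies on; the only cosmetic slips are that you call the inequality you use in part (2) the ``first'' rather than the second one of \cref{lm:expected_sq_loss_opt}, and the $\beta$ versus $C_\delta/2$ constant mismatch you note is present in the paper's own statement as well.
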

\begin{proof}
	The first statement immediately follows from \cref{lm:expected_sq_loss_opt} (the second inequality) and the fact that $\beta \ldef C_{\delta}/{2}$ in \cref{alg:eluder}.

For any $f \in \cF_{\tau}$, we have 
\begin{align}
\sum_{t=1}^{\tau-1} \E_t \brk{M_t(f)} & \leq 2\sum_{t=1}^{\tau-1} Q_t \prn*{ (f(x_t) - y_t)^2 - ( f^{\star}(x_t) - y_t)^2 } + C_\delta\nonumber\\
    & \leq 2\sum_{t=1}^{\tau-1} Q_t \prn*{ (f(x_t) - y_t)^2 - ( \wh f_\tau(x_t) - y_t)^2 }+ C_\delta \nonumber \\
    & \leq 2 C_\delta, \label{eq:squared_diff_f_star}
\end{align}
where the first line comes from \cref{lm:expected_sq_loss_opt}, the second line comes from the fact that $\wh f_\tau$ is the minimize among $\cF_\tau$, and the third line comes from the fact that $f \in \cF_\tau$ and $2 \beta = C_\delta$. 
\end{proof}

\begin{lemma}
\label{lm:conf_width_eluder}
 For any $\zeta > 0$, with probability $1$, we have 
\begin{align*}
	\sum_{t = 1}^{T} \ind \prn*{Q_t = 1} \cdot \ind \prn*{ w_t > \zeta } < \prn*{ \frac{16\beta}{\zeta^2} + 1 } \cdot \mfe_{f^{\star}}(\cF, \zeta/2).
\end{align*}
\end{lemma}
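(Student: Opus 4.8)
The plan is to reduce the statement to the quantitative eluder‑dimension counting argument of \citet{russo2013eluder}, adapted to selective sampling, with the confidence radius supplied by \cref{lm:set_f_eluder}. Fix $\zeta>0$ and let $t_1<t_2<\cdots<t_n$ enumerate the rounds with $Q_{t_k}=1$ and $w_{t_k}>\zeta$; write $z_k\ldef x_{t_k}$, so that bounding $n$ is exactly the goal. First I would extract a witness at each such round: since $w_{t_k}=\sup_{f,f'\in\cF_{t_k}}\abs{f(z_k)-f'(z_k)}>\zeta$, there are $f,f'\in\cF_{t_k}$ with $\abs{f(z_k)-f'(z_k)}>\zeta$, and the triangle inequality around $f^{\star}(z_k)$ forces at least one of them, call it $\wt f_k\in\cF_{t_k}$, to satisfy $\abs{\wt f_k(z_k)-f^{\star}(z_k)}>\zeta/2$. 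This is precisely the disagreement appearing in \cref{def:eluder} at scale $\zeta/2$.

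The second ingredient is a cumulative‑error bound on these witnesses. Part (2) of \cref{lm:set_f_eluder} gives, for the witness $\wt f_k\in\cF_{t_k}$, the bound $\sum_{i<t_k}Q_i(\wt f_k(x_i)-f^{\star}(x_i))^2=\sum_{i<t_k}\E_i[M_i(\wt f_k)]\le 2C_\delta\le 4\beta$. The point specific to selective sampling is that every earlier large‑width round $t_j$ ($j<k$) had $Q_{t_j}=1$, so the predecessors $z_1,\dots,z_{k-1}$ all enter this sum with weight one; hence $\sum_{j<k}(\wt f_k(z_j)-f^{\star}(z_j))^2\le 4\beta$. Thus each $z_k$ carries a function disagreeing with $f^{\star}$ by more than $\zeta/2$ at $z_k$ while accumulating squared error at most $4\beta$ over $z_1,\dots,z_{k-1}$.

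With these two facts the remainder is a purely combinatorial (pathwise) argument that never references the data distribution, which is what lets the count hold deterministically once the radius bound is granted. I would run the standard two‑claim eluder argument at scale $\zeta/2$ relative to $f^{\star}$. Call $z_k$ \emph{$(\zeta/2)$‑dependent} on a block $B\subseteq\{z_1,\dots,z_{k-1}\}$ when every $f\in\cF$ with $\sum_{z\in B}(f(z)-f^{\star}(z))^2\le(\zeta/2)^2$ also has $\abs{f(z_k)-f^{\star}(z_k)}\le\zeta/2$. Then (i) because $\wt f_k$ violates this conclusion, any block on which $z_k$ is dependent must absorb more than $(\zeta/2)^2$ of $\wt f_k$'s cumulative error, so by the $4\beta$ bound $z_k$ is dependent on fewer than $4\beta/(\zeta/2)^2=16\beta/\zeta^2$ \emph{disjoint} blocks; and (ii) by the greedy block‑packing/pigeonhole argument — assign each point to a block of which it is independent, and note each block is then an independence sequence of length at most $\mfe_{f^{\star}}(\cF,\zeta/2)$ by \cref{def:eluder} — some $z_k$ is dependent on at least $\lceil n/\mfe_{f^{\star}}(\cF,\zeta/2)\rceil-1$ disjoint blocks. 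Combining (i) and (ii) yields $\lceil n/\mfe_{f^{\star}}(\cF,\zeta/2)\rceil-1<16\beta/\zeta^2$, i.e. $n<(16\beta/\zeta^2+1)\,\mfe_{f^{\star}}(\cF,\zeta/2)$, which is the claim.

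I expect the main obstacle to be step (ii): setting up the packing of $\{z_k\}$ into independence blocks so that each block length is controlled by the eluder dimension at the \emph{correct} scale, and checking that this scale may be fixed at $\zeta/2$ uniformly, using the $\sup_{\gamma\ge\gamma_0}$ in \cref{def:eluder} to absorb the fact that different witnesses realize independence at possibly different scales $\ge\zeta/2$. A secondary point is the constant bookkeeping through \cref{lm:set_f_eluder} (the $2C_\delta$ versus $4\beta$ conversion) and making explicit that every cumulative sum is taken over queried rounds, so that the restriction $Q_t=1$ does not break the inclusion of $z_1,\dots,z_{k-1}$ among the points on which the witness's error is accumulated.
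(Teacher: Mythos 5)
Your proposal is correct and follows essentially the same route as the paper's proof: extract a witness $f\in\cF_t$ with $\abs{f(x_t)-f^{\star}(x_t)}>\zeta/2$ whenever a query is issued at a wide point, bound its cumulative squared error over previously \emph{queried} points by $4\beta$ via \cref{lm:set_f_eluder}, and run the two-claim disjoint-block dependence/packing argument of \citet{russo2013eluder} at scale $\zeta/2$. The only differences are cosmetic (your $\lceil n/\mfe\rceil-1$ versus the paper's $\lfloor(\tau-1)/d\rfloor$, which coincide), so no further comparison is needed.
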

\begin{remark}
	Similar upper bound has been established in the contextual bandit settings for $\sum_{t=1}^{T} \ind(w_t > \zeta)$ \citep{russo2013eluder, foster2020instance}. 
	 We develop our results with an additional $\ind(Q_t=1)$ term to account for selective querying in active learning.
\end{remark}
\begin{proof}
We give some definitions first. We say that $x$ is $\zeta$-independent of a sequence $x_1, \dots, x_{\tau}$ if there exists a $f \in \cF$ such that $\abs*{f(x) - f^\star(x)} > \zeta$ and $\sum_{i \leq {\tau}} \paren{f(x_i) - f^\star(x_i)}^2 \leq \zeta^2$. We say that $x$ is $\zeta$-dependent of $x_1, \dots, x_{\tau}$ if we have $\abs*{f(x) -f^\star(x)} \leq \zeta$ for all $f \in \cF$ such that $\sum_{i \leq {\tau}} \paren{f(x_i) - f^\star(x_i)}^2 \leq \zeta^2$. 

For any $t \in [T]$, and we denote $\cS_{t} = \curly*{x_i: Q_i = g_i(x_i)=1 , i \in [t]}$ as the \emph{queried} data points up to time step $t$. We assume that $\abs*{\cS_t} = \tau$ and denote $\cS_{t} = (x_{g(1)}, \dots, x_{g(\tau)})$, where $g(i)$ represents the time step where the $i$-th \emph{queried} data point is queried.

\textbf{Claim 1.} For any $j \in [\tau]$, $x_{g(j)}$ is $\frac{\zeta}{2}$-dependent on at most $\frac{16 \beta}{\zeta^2}$ disjoint subsequences of $x_{g(1)}, \dots, x_{g(j-1)}$.

For any $x_{g(j)} \in \cS_t$, recall that
$$w_{g(j)} = \ucb_{g(j)} - \lcb_{g(j)} = \max_{f, f^\prime \in \cF_{g(j)}} \abs*{f(x_t) - f^\prime(x_t)}.$$ 
If $w_{g(j)} > \zeta$, there must exists a $f \in \cF_{g(j)}$ such that $\abs*{f(x_{g(j)}) - f^\star(x_{g(j)})} > \frac{\zeta}{2}$. Focus on this specific $f \in \cF_{g(j)} \subseteq \cF$. 
If $x_{g(j)}$ is $\frac{\zeta}{2}$-dependent on a subsequence $x_{g(i_1)}, \dots, x_{g(i_{m})}$ (of $x_{g(1)}, \dots, x_{g(j-1)}$), we must have 
\begin{align*}
    \sum_{k \leq m} \paren{ f(x_{g(i_k)}) - f^\star(x_{g(i_k)}) }^2 > \frac{\zeta^2}{4}.
\end{align*}
Suppose $x_{g(j)}$ is $\frac{\zeta}{2}$-dependent on $K$ \emph{disjoint} subsequences of $x_{g(1)}, \dots, x_{g(j-1)}$, according to \cref{lm:set_f_eluder}, we must have
\begin{align*}
    K \cdot \frac{\zeta^2}{4} < \sum_{i < j} \paren{f(x_{g(i)}) - f^\star(x_{g(i)})}^2 = \sum_{k < g(j)} Q_k \paren{f(x_k) - f^\star(x_k)}^2  \leq 4 \beta,
\end{align*}
which implies that $K < \frac{16 \beta}{\zeta^2}$.

\textbf{Claim 2.} Denote $d \ldef \check \mfe_{f^\star}(\cF, \zeta/2) \geq 1$ and $K \ldef \floor*{\frac{\tau-1}{d}}$. There must exists a $j \in [\tau]$ such that $x_{g(j)}$ is $\frac{\zeta}{2}$-dependent on at least $K$ disjoint subsequences of $x_{g(1)}, \dots, x_{g(j-1)}$.

We initialize $K$ subsequences $\cC_i = \crl{x_{g(i)}}$. If $x_{g(K+1)}$ is $\frac{\zeta}{2}$-dependent on each $\cC_i$, we are done. If not, select a subsequence $\cC_i$ such that $x_{g(K+1)}$ is $\frac{\zeta}{2}$-independent of and add $x_{g(K+1)}$ into this subsequence. Repeat this procedure with $j > K+1$ until $x_{g(j)}$ is $\frac{\zeta}{2}$-dependent of all $\cC_i$ or $j = \tau$. In the first case we prove the claim. In the later case, we have $\sum_{i \leq K} \abs*{\cC_i} = \tau - 1 \geq Kd$. Since $\abs*{\cC_i} \leq d$ by the construction of $\cC_i$ and the definition of $\check \mfe_{f^\star}(\cF, \zeta/2)$, we must have $\abs*{\cC_i} = d$ for all $i \in [K]$. As a result, $x_{g(\tau)}$ must be $\frac{\zeta}{2}$-dependent of all $\cC_i$.

It's easy to check that $\floor*{\frac{\tau - 1}{d}} \geq \frac{\tau}{d} - 1$. Combining Claim 1 and 2, we have 
\begin{align*}
    \frac{\tau}{d} -1 \leq \floor*{\frac{\tau - 1}{d}} \leq K < \frac{16 \beta}{\zeta^2}.
\end{align*}
Rearranging leads to the desired result.
\end{proof}
The following \cref{lm:regret_no_query_constant} is a restatement of \cref{lm:regret_no_query} in the regret minimization setting.
\begin{lemma}
\label{lm:regret_no_query_constant}
If $Q_t = 0$, we have 
    $\E \sq*{\ell_t(a_t) - \ell_t(a_t^\star) \mid \wb \mfF_{t-1}} \leq 0$.
\end{lemma}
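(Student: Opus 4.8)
The plan is to observe that, because the filtration $\wb\mfF_{t-1} = \sigma(x_1,y_1,\dots,x_t)$ already contains $x_t$, the quantities $x_t$, $\cF_t$, $\wh f_t$, $a_t = \wh h_t(x_t)$, $a_t^\star = \sgn(2\eta(x_t)-1)$ and $Q_t = g_t(x_t)$ are all $\wb\mfF_{t-1}$-measurable, so the only randomness remaining in $\ell_t(a_t) - \ell_t(a_t^\star)$ is that of $y_t \sim \cD_{\cY\mid x_t}$. Taking the conditional expectation over $y_t$ with $x_t$ held fixed, and using $\sgn(y_t) = y_t$ for $y_t \in \crl{+1,-1}$, the quantity $\E\brk{\ell_t(a_t) - \ell_t(a_t^\star)\mid\wb\mfF_{t-1}}$ is exactly the pointwise excess error $\exc_{\gamma}(\wh h_t; x_t)$ defined in \cref{app:epoch} (with $h^\star = h_{f^\star}$ the Bayes classifier). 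Hence the claim reduces to showing $\exc_{\gamma}(\wh h_t; x_t) \le 0$ whenever $g_t(x_t) = 0$, which is precisely the content of \cref{lm:regret_no_query} transported to \cref{alg:eluder}.

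First I would record that under the good event $\cE$ of \cref{lm:set_f_eluder} we have $f^\star \in \cF_t$, so $\eta(x_t) = f^\star(x_t) \in \brk{\lcb_t, \ucb_t}$, i.e. the confidence interval is valid. Since $\wh h_t$ and $g_t$ are built from $\brk{\lcb_t,\ucb_t}$ by the same rule as $\wh h_m, g_m$ in \cref{alg:epoch}, the event $\crl{g_t(x_t) = 0}$ splits into the same two cases as in \cref{lm:regret_no_query}. In Case 1, $\wh h_t(x_t) = \bot$, which forces $\brk{\lcb_t,\ucb_t} \subseteq \brk{\frac{1}{2}-\gamma,\frac{1}{2}+\gamma}$ and hence $\abs{\eta(x_t) - \frac{1}{2}} \le \gamma$, so the Bayes error $\frac{1}{2} - \abs{\eta(x_t)-\frac{1}{2}}$ is at least the abstention loss $\frac{1}{2} - \gamma$, giving excess $\abs{\eta(x_t)-\frac{1}{2}} - \gamma \le 0$. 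In Case 2, $\wh h_t(x_t) \ne \bot$ but $\frac{1}{2} \notin (\lcb_t,\ucb_t)$, so $\eta(x_t)$ and $\wh f_t(x_t)$ lie on the same side of $\frac{1}{2}$; the chosen label $a_t = \sgn(2\wh f_t(x_t)-1)$ is then Bayes-optimal at $x_t$, i.e. $\P_{y\mid x_t}(y \ne a_t) = \min(\eta(x_t), 1-\eta(x_t))$, and the excess is zero. Both cases yield $\exc_{\gamma}(\wh h_t; x_t)\le 0$.

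Because this is essentially a relabelling of \cref{lm:regret_no_query} into the action/regret notation of \cref{app:constant_regret}, there is no substantive obstacle; the only points needing care are (i) making explicit that conditioning on $\wb\mfF_{t-1}$ freezes everything but $y_t$, so that the conditional expectation is genuinely a pointwise quantity equal to $\exc_{\gamma}(\wh h_t; x_t)$, and (ii) the boundary subtlety in Case 2 when $\lcb_t = \frac{1}{2}$, $\ucb_t = \frac{1}{2}$, or $\eta(x_t) = \frac{1}{2}$. I would handle (ii) by phrasing Case 2 as ``the chosen label is Bayes-optimal at $x_t$'' rather than insisting on the literal identity $a_t = a_t^\star$, since at $\eta(x_t) = \frac{1}{2}$ both labels incur the same conditional error and the excess remains zero regardless of how $\sgn(0)$ is resolved.
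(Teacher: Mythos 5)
Your proposal is correct and follows essentially the same route as the paper's proof: reduce the conditional expectation to the pointwise excess error (the paper writes this out explicitly rather than invoking $\exc_\gamma(\wh h_t;x_t)$ by name), then split $\{Q_t=0\}$ into the abstention case and the case $\frac{1}{2}\notin(\lcb_t,\ucb_t)$, using validity of the confidence interval ($f^\star\in\cF_t$) in both. Your extra care about the boundary case $\eta(x_t)=\frac{1}{2}$ in Case 2 is a harmless refinement of the paper's argument, not a different approach.
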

\begin{proof}
	Recall we have $a_t = \wh h_t (x_t)$. We then have  
\begin{align*}
    & \E  \sq*{\ell_t(a_t) - \ell_t(a_t^\star) \mid \wb \mfF_{t-1} } \\
    & = \P_{y_t \mid x_t} \prn[\big]{y_t \neq \widehat h_t(x_t)} \cdot \ind \prn[\big]{ \widehat h_t(x_t) \neq \bot} + \prn[\big]{{1}/{2} - \gamma} \cdot \1 \prn[\big]{\widehat h_t(x_t) = \bot} - \P_{y_t\mid x_t} \prn[\big]{ y_t \neq h^\star(x_t) }\\
    & = \ind \prn[\big]{ \widehat h_t(x_t) \neq \bot} \cdot \prn[\big]{\P_{y_t\mid x_t} \prn[\big]{y_t \neq \widehat h_t(x_t)} -  \P_{y_t\mid x_t} \prn[\big]{ y_t \neq h^\star(x_t) }} \\
    & \quad + \ind \prn[\big]{ \widehat h_t(x_t) = \bot} \cdot \prn[\big]{ \prn[\big]{{1}/{2} - \gamma}  -  \P_{y_t \mid x_t} \prn[\big]{ y_t \neq h^\star(x_t) }}.
\end{align*}
We now analyze the event $\curly*{Q_t = 0}$ in two cases. 

\textbf{Case 1: ${\widehat h_t(x_t) = \bot} $.} 

Since $\eta(x_t) = f^{\star}(x_t) \in [\lcb_t, \ucb_t]$, we further know that $\eta(x_t) \in \sq{ \frac{1}{2} - \gamma, \frac{1}{2} + \gamma }$ and thus $\P_{y_t\mid x_t} \prn[\big]{ y_t \neq h^\star(x_t) } \geq \frac{1}{2} - \gamma$. As a result, we have $\E \sq*{\ell_t(a_t) - \ell_t(a_t^\star) \mid \wb \mfF_{t-1} } \leq 0$.

\textbf{Case 2: ${\widehat h_t(x_t) \neq \bot}$ but ${\frac{1}{2} \notin (\lcb_t, \ucb_t)} $.} 

In this case, we know that $\widehat h_t (x_t) = h^\star(x_t)$ whenever $\eta(x_t) \in [\lcb_t, \ucb_t]$. As a result, we have \\
  $\E \sq*{\ell_t(a_t) - \ell_t(a_t^\star) \mid \wb \mfF_{t-1} } = 0$.
\end{proof}
\begin{lemma}
\label{lm:regret_eluder_cond}
 Assume $\mu(x_t) \in [\lcb_t, \ucb_t]$ and $f^{\star}$ is not eliminated across all $t \in [T]$. We have 
\begin{align*}
	\sum_{t=1}^{T} \E \brk{\ell_t(a_t) - \ell_t(a_t^\star) \mid \wb \mfF_{t-1} } \leq 
	\frac{17 \sqrt{2} \beta}{\gamma} \cdot \mfe_{f^{\star}}(\cF, \gamma / 2).
\end{align*}
\end{lemma}
\begin{proof}
	\cref{lm:regret_no_query_constant} shows that non-positive conditional regret is incurred at whenever $Q_t = 0$, we then have 
	\begin{align*}
	\sum_{t=1}^{T} \E \brk{\ell_t(a_t) - \ell_t(a_t^\star) \mid \wb \mfF_{t-1} } & 
	\leq \sum_{t=1}^{T} \ind(Q_t =1) \E \brk*{ \ell_t(a_t) - \ell_t(a_t^{\star}) \mid \wb \mfF_{t-1}}\\
	& \leq \sum_{t=1}^{T} \ind(Q_t = 1) \cdot \ind (w_t > \gamma) \cdot \abs*{ 2 f^{\star}(x_t) - 1}\\
	& \leq \sum_{t=1}^{T}\ind(Q_t = 1) \cdot \ind (w_t > \gamma) \cdot 2 w_t,
	\end{align*}
  where the second line comes from the fact that, under the event $\crl{Q_t = 1}$, we have $w_t > \gamma$ (using a similar analysis as in \cref{lm:query_implies_width}) and $\E \brk*{ \ell_t(a_t) - \ell_t(a_t^{\star}) \mid \wb \mfF_{t-1}} \leq \abs{2 f^\star(x_t) - 1}$ (since $a_t \neq \bot$), the last line comes from the fact that $ \abs{f^{\star}(x_t) - \frac{1}{2}} \leq  w_t$ whenever $f^{\star}$ is not eliminated and $Q_t = 1$.
	We can directly apply $w_t \leq 1$ and \cref{lm:conf_width_eluder} to bound the above terms by 
	$\wt O(\frac{ \mfe_{f^{\star}}(\cF, \gamma / 2)}{\gamma^2})$, which has slightly worse dependence on $\gamma$. Following \citet{foster2020instance}, we take a slightly tighter analysis below. 

	Let $\cS_T \ldef \crl{ x_i: Q_i =1, i \in [T] } $ denote the set of queried data points. Suppose $\abs{\cS_T} = \tau$. 
	Let $i_1, \ldots, i_\tau$ be a reordering of indices within $\cS_T$ such that $w_{i_1}(x_{i_1}) \geq w_{i_2}(x_{i_2}) \geq \ldots \geq w_{i_\tau} (x_{i_\tau})$. 
	Consider any index $t \in [\tau]$ such that $w_{i_t} (x_{i_t}) \geq \gamma$. For any $\zeta \geq \gamma$, \cref{lm:conf_width_eluder} implies that 
	\begin{align}
	t \leq \sum_{t=1}^{T} \ind(Q_t = 1) \cdot \ind ( w_t(x_t) > \zeta) \leq \frac{17 \beta}{\zeta^2} \cdot \mfe_{f^{\star}}\prn*{\cF, {\zeta}/{2}} \leq \frac{17 \beta}{\zeta^2} \cdot \mfe_{f^{\star}}\prn*{\cF, {\gamma}/{2}} . \label{eq:regret_eluder_cond}
	\end{align}
  Taking $\zeta = w_{i_t}(x_{i_t})$ in \cref{eq:regret_eluder_cond} leads to the following inequality on $w_{i_t} (x_{i_t})$: 
$$
w_{i_t}(x_{i_t}) \leq \sqrt{\frac{17 \beta \cdot \mfe_{f^{\star}}(\cF, \gamma/ 2)}{t}}
.$$ 
  Taking $\zeta = \gamma$ in \cref{eq:regret_eluder_cond} leads to the following inequality on $\tau$:
$$
\tau \leq \frac{17 \beta}{\gamma^2} \cdot \mfe_{f^{\star}}(\cF, \gamma / 2)
.$$ 
We then have 
\begin{align*}
	\sum_{t=1}^{T}\ind(Q_t = 1) \cdot \ind (w_t > \gamma) \cdot 2 w_t & = 
	\sum_{t=1}^{\tau}  \ind(w_{i_t} > \gamma) \cdot 2 w_{i_t}(x_{i_t}) \\
	& \leq 2 \,\sum_{t=1}^{\tau} \sqrt{\frac{17 \beta \cdot \mfe_{f^{\star}}(\cF, \gamma/ 2)}{t}}\\
	& \leq \sqrt{34 \beta \cdot \mfe_{f^{\star}}(\cF, \gamma / 2) \cdot \tau}	\\
	& \leq \frac{17 \sqrt{2} \beta}{\gamma} \cdot \mfe_{f^{\star}}(\cF, \gamma / 2).
\end{align*}
\end{proof}

\begin{lemma}
  \label{lm:wh_a_ineq}
  We have $0 \leq \E \brk{\ell_t(a_t) - \ell_t(\wh a_t^\star) \mid \wb \mfF_{t-1}} \leq 1$ and $\E \brk{\ell_t(\wh a_t^\star) - \ell_t(a_t^\star) \mid \wb \mfF_{t-1}} \leq 0$.
\end{lemma}
\begin{proof}
  By construction, we have $\wh a_t^\star = \bot$ if $a_t = \bot$, and $\wh a_t^\star = a_t^\star$ otherwise.
  Similar to the analysis in \cref{lm:regret_no_query_constant}, we have
\begin{align*}
     \E  \sq*{\ell_t(a_t) - \ell_t(\wh a_t^\star) \mid \wb \mfF_{t-1} } 
     = \ind \prn[\big]{ \widehat h_t(x_t) \neq \bot} \cdot \prn[\big]{\P_{y_t\mid x_t} \prn[\big]{y_t \neq \widehat h_t(x_t)} -  \P_{y_t\mid x_t} \prn[\big]{ y_t \neq h^\star(x_t) }}, 
\end{align*}
and 
\begin{align*}
     \E  \sq*{\ell_t(\wh a^\star_t) - \ell_t(a_t^\star) \mid \wb \mfF_{t-1} } 
     = \ind \prn[\big]{ \widehat h_t(x_t) = \bot} \cdot \prn[\big]{ \prn[\big]{{1}/{2} - \gamma}  -  \P_{y_t \mid x_t} \prn[\big]{ y_t \neq h^\star(x_t)}}.
\end{align*}
  The statement $0 \leq \E \brk{\ell_t(a_t) - \ell_t(\wh a_t^\star) \mid \wb \mfF_{t-1}} \leq 1$ follows from the fact that $0 \leq {\P_{y_t\mid x_t} \prn[\big]{y_t \neq \widehat h_t(x_t)} -  \P_{y_t\mid x_t} \prn[\big]{ y_t \neq h^\star(x_t) }} \leq 1$ when $\wh h_t(x_t) \neq \bot$.

  Similar to the analysis in \cref{lm:regret_no_query_constant}, we have $\P_{y_t\mid x_t} \prn[\big]{ y_t \neq h^\star(x_t) } \geq \frac{1}{2} - \gamma$ when $\wh h_t(x_t) = \bot$. This leads to $\E \brk{\ell_t(\wh a_t^\star) - \ell_t(a_t^\star) \mid \wb \mfF_{t-1}} \leq 0$.
\end{proof}

\begin{lemma}
\label{lm:regret_eluder_cond_wh_a}
 Assume $\mu(x_t) \in [\lcb_t, \ucb_t]$ and $f^{\star}$ is not eliminated across all $t \in [T]$. We have 
\begin{align*}
  \sum_{t=1}^{T} \E \brk{\ell_t(a_t) - \ell_t(\wh a_t^\star) \mid \wb \mfF_{t-1} } \leq 
	\frac{17 \sqrt{2} \beta}{\gamma} \cdot \mfe_{f^{\star}}(\cF, \gamma / 2).
\end{align*}
\end{lemma}

\begin{proof}
  We first consider the event $\crl{Q_t = 0}$. We have 
\begin{align*}
     \E  \sq*{\ell_t(a_t) - \ell_t(\wh a_t^\star) \mid \wb \mfF_{t-1} } 
     = \ind \prn[\big]{ \widehat h_t(x_t) \neq \bot} \cdot \prn[\big]{\P_{y_t\mid x_t} \prn[\big]{y_t \neq \widehat h_t(x_t)} -  \P_{y_t\mid x_t} \prn[\big]{ y_t \neq h^\star(x_t) }}. 
\end{align*}
  When $\wh h_t(x_t) \neq \bot$ and $Q_t = 0$, we must have $\frac{1}{2} \notin \prn{\lcb_t, \ucb_t}$. We then have $\wh h_t(x_t) = h^\star(x_t)$, which leads to $\E  \sq*{\ell_t(a_t) - \ell_t(\wh a_t^\star) \mid \wb \mfF_{t-1} }=0$.

  With the above results on the event $\crl{Q_t = 0}$, the rest of the analysis are the same as the analysis as in \cref{lm:regret_eluder_cond} since $\wh a_t^\star = a_t^\star$ under event $\crl{Q_t = 1}$.
\end{proof}
	
\section{Omitted details for \cref{sec:misspecified}}
\label{app:mis}
\subsection{Algorithm and main results}
\label{app:mis_alg}
\begin{algorithm}[]
	\caption{Efficient Active Learning with Abstention under Misspecification}
	\label{alg:mis} 
	\renewcommand{\algorithmicrequire}{\textbf{Input:}}
	\renewcommand{\algorithmicensure}{\textbf{Output:}}
	\newcommand{\algorithmicbreak}{\textbf{break}}
    \newcommand{\BREAK}{\STATE \algorithmicbreak}
	\begin{algorithmic}[1]
		\REQUIRE Accuracy level $\epsilon > 0$, abstention parameter $\gamma \in (\eps, 1/2)$ and confidence level $\delta \in (0, 1)$.
		\STATE Define $T \ldef \frac{\pseud(\cF)}{\eps \, \gamma}$, $M \ldef \ceil{\log_2 T}$ and $C_\delta \ldef O \prn{\pseud(\cF) \cdot \log(T /\delta)}$.
		\STATE Define $\tau_m \ldef 2^m$ for $m\geq1$, $\tau_0 = 0$ and $\beta_m \ldef \prn*{M-m+1}\cdot \prn*{2\eps^2 \tau_{M-1} +  2C_\delta  }$.  
		\FOR{epoch $m = 1, 2, \dots, M$}
		\STATE Get $\widehat f_m \ldef \argmin_{f \in \cF} \sum_{t=1}^{\tau_{m-1}} Q_t \paren{f(x_t) - y_t}^2 $.\\
		\hfill \algcommentlight{We use $Q_t \in \crl{0,1}$ to indicate whether the label of  $x_t$ is queried.}
		\STATE (Implicitly) Construct active set of regression function $\cF_m \subseteq \cF$ as 
		\begin{align*}
		    \cF_m \ldef \crl*{ f \in \cF:  \sum_{t = 1}^{\tau_{m-1}} Q_t \prn*{f(x_t) - y_t}^2 \leq \sum_{t = 1}^{\tau_{m-1}} Q_t \paren{\widehat f_m(x_t) - y_t}^2 + \beta_m }. 
		\end{align*}
		\STATE Construct classifier $\wh h_m: \cX \rightarrow \crl{0,1,\bot}$ as
		\begin{align*}
			\wh h_m (x) \ldef 
			\begin{cases}
				\bot, & \text{ if } \brk { \lcb(x;\cF_m), \ucb(x;\cF_m) } \subseteq 
				\brk*{ \frac{1}{2} - \gamma, \frac{1}{2} + \gamma}; \\
        \ind(\wh f_m(x) \geq \frac{1}{2} ) , & \text{ o.w. }
			\end{cases}
		\end{align*}
		and query function $g_m: \cX \rightarrow \crl{0,1}$ as
		\begin{align*}
		g_m(x) \ldef\ind \prn*{ \frac{1}{2} \in \prn{\lcb(x;\cF_m) , \ucb(x;\cF_m) } } \cdot
		\ind \prn{\wh h_m(x) \neq \bot} .
		\end{align*}
		\IF{epoch $m=M$ }
		\STATE \textbf{Return} classifier $\wh h_{M} $.
		\ENDIF
		\FOR{time $t = \tau_{m-1} + 1 ,\ldots , \tau_{m} $} 
		\STATE Observe $x_t \sim \cD_{\cX}$. Set $Q_t \ldef g_m(x_t)$. 
		\IF{$Q_t = 1$}
		\STATE Query the label $y_t$ of $x_t$.
		\ENDIF
		\ENDFOR
		\ENDFOR
	\end{algorithmic}
\end{algorithm}

\cref{alg:mis} achieves the guarantees stated in \cref{thm:mis}. 
\cref{thm:mis} is proved based on supporting lemmas derived in \cref{app:mis_lms}. 
Note that, under the condition $\kappa \leq \eps$, we still compete against the Bayes classifier $h^{\star} = h_{f^{\star}}$ in the analysis of Chow's excess error \cref{eq:chow_error}.

\thmMis*

\begin{proof}
	We analyze under the good event $\cE$ defined in \cref{lm:expected_sq_loss_pseudo}, which holds with probability at least $1-\delta$. Note that all supporting lemmas stated in \cref{app:mis_lms} hold true under this event.

We analyze the Chow's excess error of $\wh h_m$, which is measurable with respect to $\mfF_{\tau_{m-1}}$. 
For any $x \in \cX$, if $g_m(x) = 0$, 
\cref{lm:regret_no_query_mis} implies that $\exc_{\gamma}(\wh h_m ;x) \leq 2 \kappa $. 
If $g_m(x)= 1$, we know that $\wh h_m(x) \neq \bot$ and $\frac{1}{2} \in (\lcb(x;\cF_m),\ucb(x;\cF_m))$. 
Since $\wb f \in \cF_m$ by \cref{lm:set_f_mis} and $\sup_{x \in \cX} \abs{ \wb f(x) - f^{\star}(x)} \leq \kappa$ by assumption. 
The error incurred in this case is upper bounded by 
\begin{align*}
	\exc_{\gamma}(\wh h_m; x) 
	& \leq 2 \abs{ f^{\star}(x)- 1 /2}\\
	& \leq 2\kappa + 2 \abs{ \wb f(x)- 1 /2}\\
	& \leq 2\kappa + 2 w(x;\cF_m).
\end{align*}
Combining these two cases together, we have 
\begin{align*}
	\exc_{\gamma}( \wh h_m) \leq 2 \kappa +  2 \E_{x \sim \cD_\cX} \brk{ \ind(g_m(x) = 1) \cdot w(x;\cF_m)}.	
\end{align*}
Take $m=M$ and apply \cref{lm:per_round_regret_dis_coeff_mis} leads to the following guarantee.
\begin{align*}
	\exc_{\gamma}( \wh h_M)
	& \leq  2 \kappa + { \frac{ 72 \beta_M}{\tau_{M-1} \gamma} \cdot \theta^{\val}_{\wb  f}\prn*{\cF, \gamma/2, \sqrt{\beta_M/ \tau_{M-1}}}}\\
	& \leq 2 \kappa +  O \prn*{ \frac{\eps^2}{\gamma} + \frac{ \pseud(\cF) \cdot \log ( T / \delta)}{T \, \gamma}} \cdot \theta^{\val}_{ \wb f}\prn*{\cF, \gamma/2, \sqrt{C_\delta/T}} \\
	& = O \prn*{ \eps \cdot \wb \theta \cdot \log \prn*{ \frac{\pseud(\cF) }{\eps \, \gamma \, \delta}}},
\end{align*}
where we 
take $\wb \theta \ldef \sup_{\iota > 0}\theta^{\val}_{\wb f} (\cF, \gamma / 2, \iota)$ as an upper bound of $\theta^{\val}_{\wb f} (\cF, \gamma / 2, \sqrt{C_\delta / T})$,
and use the fact that $T = {\frac{\pseud(\cF) }{\eps \, \gamma}}$ and the assumptions that $\kappa \leq \eps < \gamma$.

We now analyze the label complexity (note that the sampling process of \cref{alg:mis} stops at time $t = \tau_{M-1}$).
Note that $\E \brk{\ind(Q_t = 1) \mid \mfF_{t-1}} = \E_{x\sim\cD_\cX} \brk{ \ind(g_m(x) = 1) }$ for any epoch $m \geq 2$ and time step $t$ within epoch $m$. 
Combining \cref{lm:martingale_two_sides} with \cref{lm:conf_width_dis_coeff_mis} leads to
    \begin{align*}
        \sum_{t=1}^{\tau_{M-1}} \ind(Q_t = 1) & \leq \frac{3}{2} \sum_{t=1}^{\tau_{M-1}} \E \sq{\ind(Q_t = 1) \mid \mfF_{t-1}} + 4 \log \delta^{-1}\\
        & \leq 3 + \frac{3}{2}\sum_{m=2}^{M-1}\frac{(\tau_m - \tau_{m-1}) \cdot 36 \beta_m}{{\tau_{m-1}} \, \gamma^2} \cdot \theta^{\val}_{\wb f}\prn*{\cF, \gamma/2, \sqrt{\beta_m/\tau_{m-1}}}  + 4 \log \delta^{-1} \\
        & \leq 3 + 48 \sum_{m=2}^{M-1}\frac{\beta_m}{ \gamma^2} \cdot \theta^{\val}_{\wb f}\prn*{\cF, \gamma/2, \sqrt{\beta_m/\tau_{m-1}}}  + 4 \log \delta^{-1} \\
	& \leq 3 + 4 \log \delta^{-1} + O \prn*{ \frac{M^2\cdot \eps^2 \cdot T}{\gamma^2} + \frac{M^2 \cdot C_\delta }{\gamma^2}}
	\cdot \theta^{\val}_{\wb f}\prn*{\cF, \gamma/2, \sqrt{C_\delta/T }} \\ 
	& = O \prn*{ \frac{\wb \theta \, \pseud(\cF)}{\gamma^2} \cdot \prn*{\log \prn*{ \frac{\pseud(\cF)}{\eps\, \gamma }}}^{2} \cdot 
	\log \prn*{ \frac{ \pseud(\cF) }{\eps \,\gamma \,  \delta}}} 
    \end{align*}
    with probability at least $1-2\delta$ (due to an additional application of \cref{lm:martingale_two_sides}); where we use the fact that $T = {\frac{\pseud(\cF) }{\eps \, \gamma}}$ and the assumptions that $\kappa \leq \eps < \gamma$ as before.
\end{proof}

\begin{restatable}{theorem}{thmMisEfficient}
	\label{thm:mis_efficient}
	\cref{alg:mis} can be efficiently implemented via the regression oracle and enjoys the same theoretical guarantees stated in \cref{thm:mis}.
	The number of oracle calls needed is $\wt O(\frac{\pseud(\cF) }{\eps \, \gamma^{3}})$ for a general set of regression functions $\cF$, and $\wt O(\frac{\pseud(\cF)}{\eps \, \gamma})$ when $\cF$ is convex and closed under pointwise convergence.
	The per-example inference time of the learned $\wh h_{M}$ is $\wt O ( \frac{1}{\gamma^2} \log^2 \prn{\frac{\pseud(\cF)}{\eps }})$ for general $\cF$, and $\wt O ( \log \frac{1}{\gamma}) $ when $\cF$ is convex and closed under pointwise convergence.
\end{restatable}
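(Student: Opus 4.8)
The plan is to mirror, essentially verbatim, the proof of \cref{thm:epoch_efficient}, since \cref{alg:mis} and \cref{alg:epoch} share the same epoch schedule, the same construction of $\wh h_m$ and $g_m$ from confidence bounds, and differ only in the width of the elimination threshold $\beta_m$ (which plays no role in the oracle's ability to approximate $\lcb$ and $\ucb$). First I would set $\wb\alpha \ldef \frac{\gamma}{4M}$ and $\alpha_m \ldef \frac{(M-m)\gamma}{4M}$, and construct the approximated bounds
\begin{align*}
\wh\lcb(x;\cF_m) \ldef \AlgLcb(x;\cB_m,\beta_m,\wb\alpha) - \alpha_m,
\qquad
\wh\ucb(x;\cF_m) \ldef \AlgUcb(x;\cB_m,\beta_m,\wb\alpha) + \alpha_m,
\end{align*}
using these in place of the exact bounds when forming $\wh h_m$ and $g_m$. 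Invoking \cref{prop:CI_oracle} (which applies for any threshold $\beta_m$) then sandwiches $\wh\lcb$ and $\wh\ucb$ around the exact quantities with total slack $\alpha_m+\wb\alpha \leq \gamma/4$.

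Next I would check that the three structural facts needed for the statistical guarantee survive approximation. The bound $\alpha_m+\wb\alpha\leq\gamma/4$ upgrades the query-implies-width statement (the analog of \cref{lm:query_implies_width} for \cref{alg:mis}) to $g_m(x)=1 \implies w(x;\cF_m)\geq \gamma/2$, which only rescales $\gamma$ by a constant in \cref{lm:conf_width_dis_coeff_mis} and \cref{lm:per_round_regret_dis_coeff_mis}. The monotonicity of the sampling region across epochs follows exactly as before: since $\cF_m\subseteq\cF_{m-1}$ by \cref{lm:set_f_mis} and $\alpha_{m-1}=\alpha_m+\wb\alpha$, one gets $\wh\lcb(x;\cF_m)\geq \wh\lcb(x;\cF_{m-1})$ and $\wh\ucb(x;\cF_m)\leq \wh\ucb(x;\cF_{m-1})$, hence $\ind(g_m(x)=1)\leq\ind(g_{m-1}(x)=1)$, so the sub-probability-measure change-of-measure machinery underlying \cref{lm:conf_width_dis_coeff_mis} and \cref{lm:per_round_regret_dis_coeff_mis} is unaffected. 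Finally, \cref{lm:regret_no_query_mis} still applies because $\wh\lcb(x;\cF_m)\leq\lcb(x;\cF_m)$ and $\wh\ucb(x;\cF_m)\geq\ucb(x;\cF_m)$ by construction, so the no-query cases remain valid and the abstention/no-excess-error reasoning is unchanged. Consequently the excess-error and label-complexity bounds of \cref{thm:mis} hold with only constant-factor changes.

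For the computational complexity I would bound the per-round cost: one oracle call for $\wh f_m$ per epoch, and the dominant cost of computing $\wh\lcb,\wh\ucb$ at each time step via \cref{prop:CI_oracle} with $\alpha=\wb\alpha=\frac{\gamma}{4M}$, giving $O\bigl(\frac{(\log T)^2}{\gamma^2}\log\frac{\log T}{\gamma}\bigr)$ oracle calls for general $\cF$ and $O\bigl(\log\frac{\log T}{\gamma}\bigr)$ for convex $\cF$ closed under pointwise convergence. Multiplying by $T=\frac{\pseud(\cF)}{\eps\,\gamma}$ and keeping only $\poly$ factors in the total (and $\polylog$ in the per-example cost) yields the claimed $\wt O\bigl(\frac{\pseud(\cF)}{\eps\,\gamma^3}\bigr)$ and $\wt O\bigl(\frac{\pseud(\cF)}{\eps\,\gamma}\bigr)$ totals, with per-example inference time $\wt O\bigl(\frac{1}{\gamma^2}\log^2\frac{\pseud(\cF)}{\eps}\bigr)$ and $\wt O\bigl(\log\frac{1}{\gamma}\bigr)$ respectively. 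The main obstacle is not computational but the verification step: one must confirm that the \emph{larger} misspecification threshold $\beta_m=(M-m+1)(2\eps^2\tau_{M-1}+2C_\delta)$ does not disturb either the inter-epoch monotonicity (it does not, since the monotonicity argument uses only $\cF_m\subseteq\cF_{m-1}$ and the telescoping of the $\alpha_m$, both independent of $\beta_m$) or the constants in \cref{lm:conf_width_dis_coeff_mis}/\cref{lm:per_round_regret_dis_coeff_mis}; once this is checked, the efficient implementation inherits the guarantees of \cref{thm:mis} directly.
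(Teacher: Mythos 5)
Your proposal is correct and takes essentially the same route as the paper: the paper's proof simply observes that $\wh h_m$ and $g_m$ in \cref{alg:mis} are constructed exactly as in \cref{alg:epoch}, invokes the implementation and per-round cost analysis from \cref{thm:epoch_efficient}, and multiplies by $T = \frac{\pseud(\cF)}{\eps\,\gamma}$. Your more detailed verification that the enlarged threshold $\beta_m$ does not affect the approximation, monotonicity, or no-query arguments is a faithful expansion of what the paper leaves implicit.
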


\begin{proof}
Note that classifier $\wh h_m$ and query function $q_m$ in \cref{alg:mis} are constructed in the way as the ones in \cref{alg:epoch},
Thus, \cref{alg:mis} can be efficiently implemented in the same way as discussed in \cref{thm:epoch_efficient}, and enjoys the same per-round computational complexities.
The total computational complexity is then achieved by multiplying the per-round computational complexity by $T = {\frac{\pseud(\cF) }{\eps \, \gamma}}$.
\end{proof}

\subsection{Discussion on $\kappa \leq \eps$}
\label{app:mis_partial}

We provide guarantees (in \cref{thm:mis}) when $\kappa \leq \eps$ since the learned classifier suffers from an additive  $\kappa$ term in the excess error, as shown in the proof of \cref{thm:mis}.
We next give preliminary discussions on this issue by relating active learning with to a (specific) regret minimization problem and connecting to existing lower bound in the literature.
More specifically, we consider the perspective and notations discussed in \cref{app:constant_regret}.
Fix any epoch $m \geq 2$ and time step $t$ within epoch $m$. 
We have 
\begin{align*}
    \reg_t = \E \sq{\ell_t(a_t) - \ell_t(a^\star_t)  \mid  \mfF_{t-1}} = \err_{\gamma}(\wh h_m) - \err(h^\star) = \exc_{\gamma}(\wh h_m) = \wt O \prn*{ \kappa + \frac{\wb \theta}{2^m \, \gamma} },
\end{align*}
where the bound comes from similar analysis as in the proof of \cref{thm:mis}. Summing the instantaneous regret over $T$ rounds, we have 
\begin{align*}
    \reg(T) & = \sum_{t=1}^T \reg_t\\
    & \leq 2 + \sum_{m=2}^M (\tau_m - \tau_{m-1}) \cdot \exc_{\gamma}(\wh h_m)\\
    & \leq \wt O \prn*{\kappa \cdot T + \frac{\wb \theta}{\gamma}}.
\end{align*}
The above bound indicates an additive regret term scales as $\kappa \cdot T$. On the other hand, 
it is known that an additive $\kappa \cdot T$ regret is in general unavoidable in linear bandits under model misspecification \citep{lattimore2020learning}.
This connection partially explains/justifies why we only provide guarantee for \cref{thm:mis} under $\kappa \leq \eps$.

There are, however, many differences between the two learning problems. We list some distinctions below.
\begin{enumerate}
    \item The regret minimization problem considered in \cref{app:constant_regret} only takes three actions $\cA = \crl{0, 1,\bot}$, yet the lower bound in linear bandits is established with a large action set \citep{lattimore2020learning};
    \item A standard contextual bandit problem will observe loss (with respect to the pulled action) at each step $t \in [T]$, however, the active learning problem will only observe (full) feedback at time steps when a query is issued, i.e., $\crl{t \in [T]: Q_t =1}$.
\end{enumerate}
We leave a comprehensive study of the problem for feature work.

\subsection{Supporting lemmas}
\label{app:mis_lms}

We use the same notations defined in \cref{app:epoch}, except $\wh h_m$, $g_m$ and $\beta_m$ are defined differently.
We adapt the proofs \cref{thm:epoch} (in \cref{app:epoch}) to deal with model misspecification.

Note that although we do not have $f^{\star}\in \cF	$ anymore, one can still define random variables of the form $M_t(f)$, and guarantees in \cref{lm:expected_sq_loss_pseudo} still hold.
We use $\cE$ to denote the good event considered in \cref{lm:expected_sq_loss_pseudo}, we analyze under this event through out the rest of this section.
We also only analyze under the assumption of \cref{thm:mis}, i.e., $\kappa^2 \leq \eps$.
\begin{lemma}
	\label{lm:f_mis}
	Fix any epoch $m \in [M]$. We have 
	$$
	\wh R_m(\wb f) \leq \wh R_m(f^{\star}) + \frac{3 }{2} \cdot \kappa^2 \tau_{m-1} + C_\delta
	,$$ 
	where $C_\delta \ldef 8 \log \prn*{ \frac{\abs{\cF}\cdot T^2}{\delta}}$.
\end{lemma}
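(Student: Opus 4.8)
The plan is to rewrite the target difference as a sum of the martingale-type increments $M_t(\wb f)$ and then invoke the one-sided concentration inequality already established. By the definitions of $\wh R_m$ and $M_t$, we have $\wh R_m(\wb f) - \wh R_m(f^\star) = \sum_{t=1}^{\tau_{m-1}} M_t(\wb f)$. Working under the good event $\cE$ of \cref{lm:expected_sq_loss} (equivalently \cref{lm:expected_sq_loss_pseudo}), which holds for every $f \in \cF$ regardless of whether $f^\star \in \cF$, I would apply the first inequality with $f = \wb f$ over the range $t = 1, \dots, \tau_{m-1}$ to obtain
\[ \sum_{t=1}^{\tau_{m-1}} M_t(\wb f) \leq \frac{3}{2} \sum_{t=1}^{\tau_{m-1}} \E_t\brk*{M_t(\wb f)} + C_\delta. \]

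The crux is to bound the predictable quadratic-variation term $\E_t\brk*{M_t(\wb f)}$. Since $Q_t$ is $\sigma(\mfF_{t-1}, x_t)$-measurable and $\E[y_t \mid x_t] = \eta(x_t) = f^\star(x_t)$, writing $y_t = f^\star(x_t) + \xi_t$ with $\E[\xi_t \mid x_t] = 0$ makes the noise term $\xi_t^2$ appear identically in $(\wb f(x_t) - y_t)^2$ and $(f^\star(x_t) - y_t)^2$, so it cancels, leaving the squared-bias identity
\[ \E_t\brk*{M_t(\wb f)} = \E_t\brk*{Q_t \prn*{\wb f(x_t) - f^\star(x_t)}^2}. \]
Here it is essential that, even under misspecification, we retain $f^\star = \eta$ as the comparator, so \cref{asmp:misspecified} applies verbatim: $\abs*{\wb f(x) - f^\star(x)} = \abs*{\wb f(x) - \eta(x)} \leq \kappa$ pointwise. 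Together with $Q_t \leq 1$ this gives $\E_t\brk*{M_t(\wb f)} \leq \kappa^2$, and hence $\sum_{t=1}^{\tau_{m-1}} \E_t\brk*{M_t(\wb f)} \leq \kappa^2 \tau_{m-1}$.

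Substituting this into the concentration bound yields $\wh R_m(\wb f) - \wh R_m(f^\star) \leq \frac{3}{2}\kappa^2 \tau_{m-1} + C_\delta$, which is exactly the claim. I do not expect a genuine obstacle: the argument is a short bias-variance computation combined with a lemma already in hand. The one point requiring care — and the reason this lemma is isolated in the misspecified section — is that $f^\star$ may lie outside $\cF$, so the concentration lemma cannot be instantiated with the comparator itself; however $M_t$ is defined by subtracting the fixed (possibly external) $f^\star$ loss, and the union bound underlying \cref{lm:expected_sq_loss} ranges only over $\wb f \in \cF$, so the estimate goes through unchanged. A minor bookkeeping check is to confirm that the constant $C_\delta = 8\log(\abs{\cF} T^2/\delta)$ in the statement matches the finite-class version of the concentration lemma being invoked, and to note that the $m=1$ case is trivial since $\tau_0 = 0$ makes both empty sums vanish.
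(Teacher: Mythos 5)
Your proposal is correct and follows essentially the same route as the paper's proof: apply the concentration inequality of \cref{lm:expected_sq_loss_pseudo} to $\sum_t M_t(\wb f)$, use $\E_t[y_t\mid x_t]=f^\star(x_t)$ to reduce $\E_t[M_t(\wb f)]$ to the squared bias $\E_t[Q_t(\wb f(x_t)-f^\star(x_t))^2]$, and bound it by $\kappa^2$ via \cref{asmp:misspecified}. Your side remarks (that the union bound only ranges over $f\in\cF$ so $f^\star\notin\cF$ is harmless, and that the stated $C_\delta$ matches the finite-class variant) are accurate and, if anything, more careful than the paper's one-line justification.
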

\begin{proof}
	From \cref{lm:expected_sq_loss_pseudo} we know that 
	\begin{align*}
		\wh R_m(\wb f) - \wh R_m(f^{\star}) & \leq \sum_{t=1}^{\tau_{m-1}} \frac{3}{2} \cdot
		\E_{t} \brk*{ Q_t\prn*{\wb f(x_t) - f^{\star}(x_t)}^2} + C_\delta \\
		& \leq \frac{3 }{2} \cdot \kappa^2 \tau_{m-1} + C_\delta,
	\end{align*}
	where we use the fact that $\E_t \brk{y_t \mid x_t} = f^{\star}(x_t)$ 
	(and thus $\E_t \brk{ M_t(\wb f)} = \E_{t}\brk{Q_t \prn{\wb f(x_t) - f^{\star}(x_t)}^2}$) on the first line; 
	and use the fact $\sup_{x} \abs{\wb f(x) - f^{\star}(x)} \leq \kappa$ on the second line.
\end{proof}
\begin{lemma}
\label{lm:set_f_mis}
The followings hold true:
\begin{enumerate}
	\item $\wb f\in \cF_m$ for any $m \in [M]$.
	\item $\sum_{t=1}^{\tau_{m-1}} \E_t \brk{M_t(f)} \leq 4 \beta_m $ for any $f \in \cF_m$. 
	\item $ \sum_{t=1}^{\tau_{m-1}} \E \brk{ Q_t(x_t) \prn{ f(x_t) - \wb f(x_t) }^2} \leq 9 \beta_m$ for any $f \in \cF_m$.
	\item $\cF_{m+1} \subseteq \cF_m$ for any $m \in [M-1]$.
\end{enumerate}
\end{lemma}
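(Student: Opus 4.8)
The plan is to mirror the realizable argument of \cref{lm:set_f}, replacing the (possibly out-of-class) truth $f^\star$ by the misspecified best-in-class $\wb f$ wherever \emph{survival in the active set} is argued, while still measuring all square-loss gaps against the true conditional mean $f^\star=\eta$ through the martingale increments $M_t(\cdot)$. The two tools I would use are the two-sided concentration bound \cref{lm:expected_sq_loss_pseudo} and the misspecification-adjusted comparison \cref{lm:f_mis}. The numerical facts I keep invoking are that the \emph{enlarged} radius obeys $\beta_m \geq 2\eps^2\tau_{M-1}+2C_\delta$ and $\beta_m-\beta_{m+1}=2\eps^2\tau_{M-1}+2C_\delta\geq C_\delta$ (both from the linear-in-$(M-m+1)$ form of $\beta_m$), together with $\kappa\leq\eps$ and $\tau_{m-1}\leq\tau_{M-1}$. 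Throughout I condition on the good event $\cE$ of \cref{lm:expected_sq_loss_pseudo} and use $\E_t[M_t(f)]=\E[Q_t(f(x_t)-f^\star(x_t))^2\mid\mfF_{t-1}]\geq 0$, whose nonnegativity holds because $f^\star=\eta$ is the true conditional mean and is independent of realizability.

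For item 1 I would first record, exactly as in the realizable case, the one-sided bound $\wh R_m(\wh f_m)\geq \wh R_m(f^\star)-C_\delta/2$, obtained by applying the second inequality of \cref{lm:expected_sq_loss_pseudo} to $\wh f_m$ and discarding the nonnegative term $\sum_t\E_t[M_t(\wh f_m)]$. Combining this with \cref{lm:f_mis} gives $\wh R_m(\wb f)-\wh R_m(\wh f_m)\leq [\wh R_m(\wb f)-\wh R_m(f^\star)]+C_\delta/2\leq \tfrac{3}{2}\kappa^2\tau_{m-1}+\tfrac{3}{2}C_\delta$, and I close with the arithmetic $\tfrac{3}{2}\kappa^2\tau_{m-1}+\tfrac{3}{2}C_\delta\leq 2\eps^2\tau_{M-1}+2C_\delta\leq\beta_m$, so $\wb f\in\cF_m$. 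This is where the extra $2\eps^2\tau_{M-1}$ slack in $\beta_m$ is spent, and it is the one place where the misspecification level genuinely enters. \textbf{I expect this to be the main obstacle}: one must calibrate $\beta_m$ large enough that $\wb f$ provably survives all $M$ epochs despite the $O(\kappa^2\tau_{m-1})$ bias in \cref{lm:f_mis}, yet the same radius propagates into the width/disagreement-coefficient arguments (\cref{lm:conf_width_dis_coeff_mis}, \cref{lm:per_round_regret_dis_coeff_mis}), so an over-generous $\beta_m$ would inflate the final label complexity in \cref{thm:mis}.

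Items 2 and 3 are then routine consequences. For item 2, membership $f\in\cF_m$ together with $\wh R_m(\wh f_m)\leq \wh R_m(f^\star)$ gives $\sum_t M_t(f)=\wh R_m(f)-\wh R_m(f^\star)\leq\beta_m$, and the second inequality of \cref{lm:expected_sq_loss_pseudo} upgrades this to $\sum_t\E_t[M_t(f)]\leq 2\beta_m+C_\delta\leq 4\beta_m$, using $C_\delta\leq\beta_m$. Item 3 follows from item 2 by the elementary splitting $(f-\wb f)^2\leq 2(f-f^\star)^2+2(\wb f-f^\star)^2$: the first summand contributes $2\cdot 4\beta_m$ via item 2, and the second contributes at most $2\kappa^2\tau_{m-1}\leq 2\eps^2\tau_{M-1}\leq\beta_m$ since $\sup_x\abs{\wb f(x)-f^\star(x)}\leq\kappa$ and $\sum_t\E_t[Q_t]\leq\tau_{m-1}$, yielding the stated $9\beta_m$.

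Item 4 copies the telescoping argument of \cref{lm:set_f}(3) essentially verbatim. Fixing $f\in\cF_{m+1}$, I would write $\wh R_m(f)-\wh R_m(\wh f_m)\leq[\wh R_m(f)-\wh R_m(f^\star)]+C_\delta/2$, split $\sum_{t=1}^{\tau_{m-1}}M_t(f)=[\wh R_{m+1}(f)-\wh R_{m+1}(f^\star)]-\sum_{t=\tau_{m-1}+1}^{\tau_m}M_t(f)$, bound the first bracket by $\beta_{m+1}$ (membership in $\cF_{m+1}$ plus $\wh R_{m+1}(\wh f_{m+1})\leq\wh R_{m+1}(f^\star)$) and the subtracted segment sum from below by $-C_\delta/2$ (the segment version of \cref{lm:expected_sq_loss_pseudo} with $\E_t[M_t(f)]\geq 0$). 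This gives $\wh R_m(f)-\wh R_m(\wh f_m)\leq\beta_{m+1}+C_\delta\leq\beta_m$, the last inequality being exactly the identity $\beta_m-\beta_{m+1}=2\eps^2\tau_{M-1}+2C_\delta\geq C_\delta$ that the affine form of $\beta_m$ was designed to supply, so $f\in\cF_m$ and hence $\cF_{m+1}\subseteq\cF_m$.
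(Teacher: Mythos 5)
Your item 1 and item 3 match the paper's argument, but items 2 and 4 contain a step that fails precisely because of misspecification. In item 2 you write $\sum_t M_t(f) = \wh R_m(f) - \wh R_m(f^\star) \leq \beta_m$ by invoking "$\wh R_m(\wh f_m) \leq \wh R_m(f^\star)$," and in item 4 you likewise bound $\wh R_{m+1}(f) - \wh R_{m+1}(f^\star) \leq \beta_{m+1}$ via "$\wh R_{m+1}(\wh f_{m+1}) \leq \wh R_{m+1}(f^\star)$." These inequalities hold in the realizable case only because $f^\star \in \cF$ and $\wh f_m$ minimizes $\wh R_m$ over $\cF$. Under \cref{asmp:misspecified}, $f^\star = \eta$ need not lie in $\cF$, so the empirical risk of the in-class minimizer is \emph{not} controlled by $\wh R_m(f^\star)$; indeed, since $f^\star$ minimizes the population square loss, one typically has $\wh R_m(\wh f_m) > \wh R_m(f^\star)$ once the bias term $\kappa^2 \tau_{m-1}$ exceeds the concentration fluctuation. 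The only available comparison is the one your own plan announced but did not execute here: route through $\wb f$, i.e.\ $\wh R_m(\wh f_m) \leq \wh R_m(\wb f) \leq \wh R_m(f^\star) + \tfrac{3}{2}\kappa^2\tau_{m-1} + C_\delta$ from \cref{lm:f_mis}. This is exactly what the paper does, and it is why the extra $\kappa^2\tau$ and $C_\delta$ terms appear in its chains for items 2 and 4 and why the constants land at $4\beta_m$ (rather than your intermediate $2\beta_m + C_\delta$) and require $\beta_m - \beta_{m+1} \geq \tfrac{3}{2}\kappa^2\tau_m + 2C_\delta$ (rather than merely $\geq C_\delta$).

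The good news is that the damage is contained: inserting the \cref{lm:f_mis} detour yields $\sum_t M_t(f) \leq \beta_m + \tfrac{3}{2}\kappa^2\tau_{m-1} + C_\delta$ in item 2, hence $\sum_t \E_t[M_t(f)] \leq 2\beta_m + 3\kappa^2\tau_{m-1} + 3C_\delta \leq 4\beta_m$ using $2\beta_m \geq 4\eps^2\tau_{M-1} + 4C_\delta$ and $\kappa \leq \eps$; and in item 4 it yields $\wh R_m(f) - \wh R_m(\wh f_m) \leq \beta_{m+1} + \tfrac{3}{2}\kappa^2\tau_m + 2C_\delta \leq \beta_m$, which is exactly what the affine slack $\beta_m - \beta_{m+1} = 2\eps^2\tau_{M-1} + 2C_\delta$ was sized to absorb. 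So the stated bounds all survive, but as written your proofs of items 2 and 4 rest on an inequality that is false in general in the misspecified setting, and your bookkeeping of where the $\kappa^2\tau$ slack is spent (you attribute it only to item 1) is incorrect: it is needed in items 2 and 4 as well.
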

\begin{proof}
\begin{enumerate}
	\item Fix any epoch $m \in [M]$. By \cref{lm:expected_sq_loss_pseudo}, we have 
$\wh R_m (f^\star) \leq \wh R_m(f) + C_\delta /2 $ for any $f \in \cF$.
Combining this with \cref{lm:f_mis} leads to 
\begin{align*}
	\wh R_m(\wb f) 
	& \leq \wh R_m(f) + \frac{3}{2}\cdot \prn*{ \kappa^2 \tau_{m-1} + C_\delta}\\
	& \leq \wh R_m(f) + \beta_m,
\end{align*}
for any $f \in \cF$, where the second line comes from the definition of $\beta_m$ (recall that we have $\kappa\leq \eps$ by assumption).
We thus have $\wb f \in \cF_m$ for any $m \in [M]$.
\item  Fix any $f \in \cF_m$. With \cref{lm:expected_sq_loss_pseudo}, we have 
	\begin{align*}
		\sum_{t=1}^{\tau_{m-1}} \E_t [M_t(f)] & \leq 2 \sum_{t=1}^{\tau_{m-1}} M_t(f) + C_\delta \\
			& = 2 \wh R_{m }(f) - 2\wh R_{m}(f^{\star}) + C_\delta \\
			& \leq 2 \wh R_{m }(f) - 2\wh R_{m}(\wb f)+ 3 \kappa^2 \tau_{m-1} + 3 C_\delta \\
			& \leq 2 \wh R_{m }(f) - 2\wh R_{m}(\wh f_m)+ 3 \kappa^2 \tau_{m-1} + 3 C_\delta \\
			& \leq 2 \beta_m + 3 \kappa^2 \tau_{m-1} + 3 C_\delta \\
			& \leq 4 \beta_m , 
	\end{align*}
	where the third line comes from \cref{lm:f_mis}; the fourth line comes from the fact that $\wh f_m$ is the minimizer of $\wh R_{m} (\cdot) $; and the fifth line comes from the fact that $f \in \cF_m$.
\item  Fix any $f \in \cF_m$. With \cref{lm:expected_sq_loss_pseudo}, we have 
	\begin{align*}
		\sum_{t=1}^{\tau_{m-1}} \E_t \brk{ Q_t(x_t) \prn{ f(x_t) - \wb f(x_t) }^2}
		& = \sum_{t=1}^{\tau_{m-1}} \E_t \brk{ Q_t(x_t) \prn{ (f(x_t) - f^{\star}(x_t)) + 
		( f^{\star}(x_t) - \wb f(x_t)) }^2} \\
		& \leq 2 \sum_{t=1}^{\tau_{m-1}} \E_t \brk{ Q_t(x_t) \prn{ f(x_t) -  f^{\star}(x_t) }^2} + 2 \tau_{m-1} \kappa^2\\
		&  = 2 \sum_{t=1}^{\tau_{m-1}} \E_t [M_t(f)] + 2 \tau_{m-1} \kappa^2  \\
		& \leq 8 \beta_m + 2 \tau_{m-1} \kappa^2\\
		& \leq 9 \beta_m,
	\end{align*}
	where we use $\prn{a+b}^2 \leq a^2 + b^2$ on the second line; and use statement $2$ on the fourth line.
	\item Fix any $f \in \cF_{m+1}$. We have 
	\begin{align*}
		\wh R_{m} (f) - \wh R_{m} (\wh f_m) & \leq   
		\wh R_{m} (f) - \wh R_{m} (f^{\star}) + \frac{C_\delta}{2}\\
		& = \wh R_{m+1}(f) - \wh R_{m+1}(f^{\star}) 
		- \sum_{t=\tau_{m-1}+1}^{\tau_{m}} M_t(f) + \frac{C_\delta}{2}\\
		& \leq \wh R_{m+1}(f) - \wh R_{m+1} (\wb f) + \frac{3}{2} \kappa^2 \tau_m + C_\delta
		- \sum_{t=\tau_{m-1}+1}^{\tau_{m}} \E_t [M_t(f)] /2 + {C_\delta}\\
		& \leq \wh R_{m+1}(f) - \wh R_{m+1} (\wh f_{m+1}) + \frac{3}{2} \kappa^2 \tau_m + 2C_\delta\\
		& \leq \beta_{m+1} + \frac{3}{2} \kappa^2 \tau_m + 2C_\delta\\
		& \leq \beta_m,
	\end{align*}	
	where the first line comes from \cref{lm:expected_sq_loss_pseudo}; the third line comes from \cref{lm:f_mis} and \cref{lm:expected_sq_loss_pseudo}; the fourth line comes from the fact that $\wh f_{m+1}$ is the minimizer with respect to $\wh R_{m+1}$ and \cref{lm:expected_sq_loss_pseudo}; the last line comes from the definition of $\beta_m$.
\end{enumerate}
\end{proof}
Since the classifier $\wh h_m$ and query function $g_m$ are defined in the same way as in \cref{alg:epoch}, \cref{lm:query_implies_width} holds true for \cref{alg:mis} as well.
As a result of that, \cref{lm:conf_width_dis_coeff} and \cref{lm:per_round_regret_dis_coeff} hold true with minor modifications. We present the modified versions below, whose proofs follow similar steps as in \cref{lm:conf_width_dis_coeff} and \cref{lm:per_round_regret_dis_coeff} but replace $f^{\star}$ with $\wh f$ (and thus using concentration results derived in \cref{lm:set_f_mis}).
\begin{lemma}
    \label{lm:conf_width_dis_coeff_mis}
 Fix any epoch $m \geq 2$. We have 
	\begin{align*}
    \E_{x \sim \cD_\cX} \sq{\ind (g_m(x)= 1)} \leq \frac{36 \beta_m}{{\tau_{m-1}} \, \gamma^2} \cdot \theta^{\val}_{\wb f}\prn*{\cF, \gamma/2, \sqrt{\beta_m/\tau_{m-1}}}.
	\end{align*}
\end{lemma}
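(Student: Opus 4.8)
The plan is to mirror the proof of \cref{lm:conf_width_dis_coeff} almost verbatim, but to re-center every comparison from the (possibly unavailable) Bayes regressor $f^{\star}$ onto the in-class approximator $\wb f$, which \emph{is} guaranteed to lie in every active set by \cref{lm:set_f_mis}. Since the classifier $\wh h_m$ and the query function $g_m$ in \cref{alg:mis} are constructed exactly as in \cref{alg:epoch}, \cref{lm:query_implies_width} carries over unchanged, giving $\ind(g_m(x)=1)=\ind(g_m(x)=1)\cdot\ind(w(x;\cF_m)>\gamma)$. Because $\wb f\in\cF_m$, the event $\crl{w(x;\cF_m)>\gamma}$ implies $\crl{\exists f\in\cF_m:\abs{f(x)-\wb f(x)}>\gamma/2}$; this is the direct analogue of the step that, in the realizable case, used $f^{\star}\in\cF_m$.

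Next I would push this through the change-of-measure inequality \cref{eq:change_of_measure}: for any $\wb m\leq m$, the quantity $\E_{x\sim\cD_\cX}\sq{\ind(g_m(x)=1)}$ is at most $\E_{x\sim\wb\mu_{\wb m}}\sq{\ind(\exists f\in\cF_m:\abs{f(x)-\wb f(x)}>\gamma/2)}$, and then average over $\wb m$ against the mixture sub-probability measure $\wb\nu_m\ldef\tfrac{1}{\tau_{m-1}}\sum_{\wb m=1}^{m-1}n_{\wb m}\,\wb\mu_{\wb m}$. To control the second moment under $\wb\nu_m$ I would invoke statement $3$ of \cref{lm:set_f_mis}, namely $\sum_{t=1}^{\tau_{m-1}}\E_t\sq{Q_t(f(x_t)-\wb f(x_t))^2}\leq 9\beta_m$ for every $f\in\cF_m$, which rewrites as $\nrm{f-\wb f}^2_{\wb\nu_m}\leq 9\beta_m/\tau_{m-1}$. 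This is precisely where the misspecification cost surfaces: the triangle-inequality split $f-\wb f=(f-f^{\star})+(f^{\star}-\wb f)$ underlying \cref{lm:set_f_mis} replaces the clean factor-$2$ bound of the realizable proof by a factor-$9$ bound.

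Finally I would feed the event $\crl{\exists f\in\cF:\abs{f(x)-\wb f(x)}>\gamma/2,\ \nrm{f-\wb f}^2_{\wb\nu_m}\leq 9\beta_m/\tau_{m-1}}$ into \cref{def:dis_coeff} anchored at $\wb f$, appealing to \cref{prop:sub_measure} so that evaluating the disagreement coefficient on the sub-probability measure $\wb\nu_m$ is legitimate. With margin parameter $\gamma/2$ and scale $\eps'\ldef 3\sqrt{\beta_m/\tau_{m-1}}$, this produces the prefactor $\tfrac{(\eps')^2}{(\gamma/2)^2}=\tfrac{36\beta_m}{\tau_{m-1}\gamma^2}$ multiplying $\theta^{\val}_{\wb f}(\cF,\gamma/2,3\sqrt{\beta_m/\tau_{m-1}})$; monotonicity of $\theta^{\val}$ in its scale argument then lets me upper bound this by replacing $3\sqrt{\beta_m/\tau_{m-1}}$ with the smaller $\sqrt{\beta_m/\tau_{m-1}}$, yielding exactly the claimed inequality. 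The only genuinely non-routine points are the anchor swap $f^{\star}\rightsquigarrow\wb f$ and the accompanying constant bookkeeping: one must check that centering the disagreement coefficient at a non-Bayes element of $\cF$ is meaningful (it is, since \cref{def:dis_coeff} is stated for an arbitrary anchor in $\cF$) and that the factor-$9$ from \cref{lm:set_f_mis} tracks correctly through to the $36$ in the prefactor. Everything else is routine re-use of the realizable argument.
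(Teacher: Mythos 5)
Your proposal is correct and matches the paper's intended argument: the paper explicitly proves this lemma by repeating the steps of \cref{lm:conf_width_dis_coeff} with the anchor $f^{\star}$ replaced by $\wb f$ (which lies in every $\cF_m$ by \cref{lm:set_f_mis}) and with the second-moment control supplied by statement 3 of \cref{lm:set_f_mis}, exactly as you do. Your constant bookkeeping ($9 \times 4 = 36$, with the scale argument shrunk from $3\sqrt{\beta_m/\tau_{m-1}}$ to $\sqrt{\beta_m/\tau_{m-1}}$ by monotonicity of $\theta^{\val}$ in $\eps_0$) reproduces the stated bound.
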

\begin{lemma}
    \label{lm:per_round_regret_dis_coeff_mis}
 Fix any epoch $m \geq 2$. We have 
    \begin{align*}
    	\E_{x \sim \cD_\cX} \sq{\ind(g_m(x) = 1)\cdot w(x;\cF_m)} \leq { \frac{36 \beta_m}{\tau_{m-1} \gamma} \cdot \theta^{\val}_{\wb f}\prn*{\cF, \gamma/2, \sqrt{\beta_m/\tau_{m-1}}}}.
    \end{align*}
\end{lemma}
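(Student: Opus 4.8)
The plan is to replay the proof of \cref{lm:per_round_regret_dis_coeff} essentially verbatim, replacing the realizable anchor $f^\star$ everywhere by the misspecified anchor $\wb f \in \cF$ and replacing the variance bound $\rho_m$ by the corresponding quantity from \cref{lm:set_f_mis}. Since the classifier $\wh h_m$ and query function $g_m$ in \cref{alg:mis} are defined exactly as in \cref{alg:epoch}, \cref{lm:query_implies_width} still applies and gives $g_m(x)=1 \implies w(x;\cF_m) > \gamma$; moreover $\wb f \in \cF_m$ for every $m$ by statement~1 of \cref{lm:set_f_mis}. I reuse the sub-probability measures $\wb\mu_{\wb m} = (\cD_\cX)_{\mid \wb\cX_{\wb m}}$ and $\wb\nu_m = \tau_{m-1}^{-1}\sum_{\wb m=1}^{m-1} n_{\wb m}\cdot\wb\mu_{\wb m}$ introduced in \cref{app:epoch}, together with the change-of-measure inequality \cref{eq:change_of_measure}, which only relies on $\wb\cX_{m+1}\subseteq\wb\cX_m$ and hence remains valid here.

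First I would start from $\E_{x\sim\cD_\cX}\sq{\ind(g_m(x)=1)\cdot w(x;\cF_m)}$, insert the redundant factor $\ind(w(x;\cF_m)>\gamma)$ via \cref{lm:query_implies_width}, and move the expectation onto $\wb\nu_m$ through \cref{eq:change_of_measure}. Because $\wb f\in\cF_m$, the event $w(x;\cF_m)>\gamma$ forces some $f\in\cF_m$ with $\abs{f(x)-\wb f(x)}>\gamma/2$, and the triangle inequality gives $\sup_{f,f'\in\cF_m}\abs{f(x)-f'(x)}\leq 2\sup_{f\in\cF_m}\abs{f(x)-\wb f(x)}$. Writing $Y(x)\ldef\sup_{f\in\cF_m}\abs{f(x)-\wb f(x)}$ and integrating over thresholds $\omega\in[\gamma/2,1]$, the same layer-cake step as in \cref{lm:per_round_regret_dis_coeff} bounds the target quantity by $2\int_{\gamma/2}^1 \P_{\wb\nu_m}(Y(x)\geq\omega)\,d\omega$.

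The only genuinely new ingredient is the variance control. Instead of $\sum_t\E_t\sq{M_t(f)}\leq\rho_m$, I would invoke statement~3 of \cref{lm:set_f_mis}, namely $\sum_{t=1}^{\tau_{m-1}}\E\sq{Q_t(f(x_t)-\wb f(x_t))^2}\leq 9\beta_m$ for every $f\in\cF_m$; dividing by $\tau_{m-1}$ and rewriting the sum as an integral against $\wb\nu_m$ exactly as in \cref{eq:conf_width_dis_coeff_2} yields $\E_{x\sim\wb\nu_m}\sq{(f(x)-\wb f(x))^2}\leq 9\beta_m/\tau_{m-1}$. This is precisely the radius constraint needed by \cref{def:dis_coeff}: for each $\omega\geq\gamma/2$ we get $\P_{\wb\nu_m}(Y(x)\geq\omega)\leq \frac{9\beta_m/\tau_{m-1}}{\omega^2}\,\theta^{\val}_{\wb f}(\cF,\gamma/2,\sqrt{9\beta_m/\tau_{m-1}})$, where \cref{prop:sub_measure} legitimizes evaluating the disagreement coefficient at the sub-probability measure $\wb\nu_m$, and monotonicity of $\theta^{\val}$ in its first argument lets me replace the threshold $\omega$ by $\gamma/2$. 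Since $\sqrt{\beta_m/\tau_{m-1}}\leq\sqrt{9\beta_m/\tau_{m-1}}$ and $\theta^{\val}$ is non-increasing in its last argument, the coefficient at scale $\sqrt{\beta_m/\tau_{m-1}}$ is a valid upper bound, matching the statement. Combining the factor $2$ from the triangle inequality, the bound $\int_{\gamma/2}^1\omega^{-2}\,d\omega\leq 2/\gamma$, and the coefficient $9\beta_m$ gives the constant $2\cdot 2\cdot 9 = 36$, hence $\frac{36\beta_m}{\tau_{m-1}\gamma}\,\theta^{\val}_{\wb f}(\cF,\gamma/2,\sqrt{\beta_m/\tau_{m-1}})$.

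The main obstacle is not any single manipulation but ensuring the misspecification is absorbed entirely into the elimination radius: the conservative choice of $\beta_m$ in \cref{alg:mis} is exactly what keeps $\wb f\in\cF_m$ (used both to anchor the width bound and to apply $\theta^{\val}_{\wb f}$) and what produces the $9\beta_m$ rather than $\rho_m$ in the variance bound of \cref{lm:set_f_mis}. Everything downstream---the change of measure, the layer-cake integral, and the disagreement-coefficient inequality---is then formally identical to the realizable proof, up to tracking the inflated constant.
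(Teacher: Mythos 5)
Your proposal is correct and follows essentially the same route the paper intends: it explicitly states that \cref{lm:per_round_regret_dis_coeff_mis} is proved by repeating the argument of \cref{lm:per_round_regret_dis_coeff} with $f^{\star}$ replaced by $\wb f$ and the variance bound replaced by statement~3 of \cref{lm:set_f_mis}, which is exactly what you do. Your constant accounting ($2\cdot\tfrac{2}{\gamma}\cdot 9\beta_m = \tfrac{36\beta_m}{\gamma}$) and your use of $\wb f\in\cF_m$, the change of measure to $\wb\nu_m$, and the monotonicity of $\theta^{\val}$ in its last argument all match the paper's derivation.
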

\begin{lemma}
\label{lm:regret_no_query_mis}
Fix any $m \in [M]$. We have $\exc_{\gamma}(\wh h_m ;x) \leq 2 \kappa$ if $g_m(x) = 0$.
\end{lemma}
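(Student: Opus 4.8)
The plan is to mirror the realizable argument in \cref{lm:regret_no_query}, tracking the additional $\kappa$ slack introduced by misspecification. The crucial change is that we no longer have $f^\star = \eta \in \cF_m$; instead \cref{lm:set_f_mis} guarantees $\wb f \in \cF_m$, while \cref{asmp:misspecified} gives $\abs{\wb f(x) - \eta(x)} \leq \kappa$. Since $\wb f(x) \in [\lcb(x;\cF_m), \ucb(x;\cF_m)]$, this yields the weakened containment $\eta(x) \in [\lcb(x;\cF_m) - \kappa, \ucb(x;\cF_m) + \kappa]$, which is the only place the realizable proof needs modification. As there, the event $\crl{g_m(x) = 0}$ splits into two cases according to the definition of $g_m$.

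First I would handle the abstention case, where $\wh h_m(x) = \bot$, so $[\lcb(x;\cF_m), \ucb(x;\cF_m)] \subseteq [\tfrac12 - \gamma, \tfrac12 + \gamma]$. Then $\wb f(x)$ lies in this band, and hence so does $\eta(x)$ up to $\kappa$, giving $\abs{\eta(x) - \tfrac12} \leq \gamma + \kappa$. Writing the abstention excess error via \cref{eq:excess_x} as $\exc_\gamma(\wh h_m; x) = (\tfrac12 - \gamma) - \min(\eta(x), 1 - \eta(x)) = \abs{\eta(x) - \tfrac12} - \gamma$, this is at most $\kappa \leq 2\kappa$.

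Next I would handle the labeling case, where $\wh h_m(x) \neq \bot$ but $\tfrac12 \notin (\lcb(x;\cF_m), \ucb(x;\cF_m))$, so either $\ucb(x;\cF_m) \leq \tfrac12$ or $\lcb(x;\cF_m) \geq \tfrac12$. By symmetry consider the former. Since $\wh f_m, \wb f \in \cF_m$, both $\wh f_m(x) \leq \tfrac12$ and $\wb f(x) \leq \tfrac12$, forcing $\wh h_m(x) = -1$ and $\eta(x) \leq \wb f(x) + \kappa \leq \tfrac12 + \kappa$. If $\eta(x) < \tfrac12$ then $h^\star(x) = -1 = \wh h_m(x)$ and the excess error is $0$; otherwise $\eta(x) \in [\tfrac12, \tfrac12 + \kappa]$, and misclassifying costs $\exc_\gamma(\wh h_m; x) = 2\abs{\eta(x) - \tfrac12} \leq 2\kappa$. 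The subcase $\lcb(x;\cF_m) \geq \tfrac12$ is identical after swapping signs. Combining both cases gives $\exc_\gamma(\wh h_m; x) \leq 2\kappa$.

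The computation is routine once the $\kappa$-offset is inserted; the only point requiring care is the sign bookkeeping in the labeling case, where the true label boundary $\eta = \tfrac12$ and the surrogate boundary $\wb f = \tfrac12$ differ by up to $\kappa$, so the classifier may err exactly on the thin slab $\crl{x: \abs{\eta(x) - \tfrac12} \leq \kappa}$. It is precisely this slab that produces the $2\kappa$ term here in place of the $0$ obtained in the realizable \cref{lm:regret_no_query}, and propagating this per-point bound is what forces the $\kappa \leq \eps$ hypothesis downstream in \cref{thm:mis}.
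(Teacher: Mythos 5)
Your proof is correct and follows essentially the same route as the paper's: the same two-case split on $\crl{g_m(x)=0}$, using $\wb f \in \cF_m$ from \cref{lm:set_f_mis} together with $\abs{\wb f(x)-\eta(x)}\leq\kappa$ to get $\abs{\eta(x)-\tfrac12}\leq\gamma+\kappa$ (hence excess $\leq\kappa$) in the abstention case, and $\abs{\eta(x)-\tfrac12}\leq\kappa$ (hence excess $\leq 2\kappa$) whenever the sign of $\wh h_m$ disagrees with $h^\star$ in the labeling case. The extra detail you supply on the sign bookkeeping only makes explicit what the paper leaves implicit.
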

\begin{proof}
	Recall that
\begin{align*}
	\exc_{\gamma}( \wh h;x) & =  \nonumber
      \ind \prn[\big]{ \widehat h(x) \neq \bot} \cdot \prn[\big]{\P_{y \mid x} \prn[\big]{y \neq \widehat h(x)} -  \P_{y \mid x} \prn[\big]{ y \neq h^\star(x) }} \nonumber \\
    & \quad + \ind \prn[\big]{ \widehat h(x) = \bot} \cdot \prn[\big]{ \prn[\big]{{1}/{2} - \gamma}  -  \P_{y\mid x} \prn[\big]{ y \neq h^\star(x) }} .
\end{align*}
We now analyze the event $\curly*{g_m(x)= 0}$ in two cases. 

\textbf{Case 1: ${\widehat h_m(x) = \bot} $.} 

Since $\wb f(x) \in [\lcb(x;\cF_m), \ucb(x;\cF_m)]$ by \cref{lm:set_f_mis}, we know that $\eta(x) = f^{\star}(x) \in \sq{ \frac{1}{2} - \gamma - \kappa, \frac{1}{2} + \gamma + \kappa}$ and thus $\P_{y} \prn[\big]{ y\neq h^\star(x) } \geq \frac{1}{2} - \gamma-\kappa$. 
As a result, we have $\exc_{\gamma}(\wh h_m;x) \leq \kappa $.

\textbf{Case 2: ${\widehat h_m(x) \neq \bot}$ but ${\frac{1}{2} \notin (\lcb(x;\cF_m), \ucb(x;\cF_m))} $.} 

We clearly have $\exc_{\gamma}(\wh h_m;x) \leq 0$ if $\widehat h_m (x) = h^\star(x)$. 
Now consider the case when $\widehat h_m (x) \neq h^\star(x)$. 
Since $\wb f(x) \in [\lcb(x;\cF_m), \ucb(x;\cF_m)]$ and $\abs{ \wb f(x) - f^{\star}(x)} \leq \kappa$, we must have $\abs*{ f^{\star}(x) - 1 /2  } \leq \kappa$ in that case, which leads to $\exc_{\gamma}(\wh h_m;x) \leq 2 \abs{ f^{\star}(x)-1 / 2} \leq 2 \kappa$.
\end{proof}

\end{document}